\documentclass{article}


\usepackage[final]{neurips_2025}




\usepackage{microtype}
\usepackage{graphicx}
\usepackage{booktabs} 
\usepackage{algpseudocode}
\usepackage{booktabs}  
\usepackage{algorithm}
\usepackage{subcaption}  
\usepackage{xcolor}      
\usepackage{makecell}
\usepackage{hyperref}


\usepackage[utf8]{inputenc} 
\usepackage[T1]{fontenc}    
\usepackage{hyperref}       
\usepackage{url}            
\usepackage{amsfonts}       
\usepackage{nicefrac}       
\usepackage{microtype}      
\usepackage{xcolor}         
\usepackage{enumitem}
\usepackage[toc,page,header]{appendix}
\usepackage{minitoc}

\algnewcommand{\LeftComment}[1]{\(\triangleright\) {\color{LightCyan}{#1}}}

\usepackage{amsmath}
\usepackage{amssymb}
\usepackage{mathtools}
\usepackage{amsthm}
\usepackage{pifont}
\usepackage{etoc}
\etocdepthtag.toc{mtchapter}
\etocsettagdepth{mtchapter}{subsection}
\etocsettagdepth{mtappendix}{none}

\usepackage[capitalize,noabbrev]{cleveref}

\makeatletter
\newtheorem*{rep@theorem}{\rep@title}
\newcommand{\newreptheorem}[2]{%
\newenvironment{rep#1}[1]{%
 \def\rep@title{#2 \ref{##1}}%
 \begin{rep@theorem}}%
 {\end{rep@theorem}}}
\makeatother


\definecolor{royalblue}{rgb}{0.25, 0.41, 0.88}
\hypersetup{
  colorlinks,
  citecolor=royalblue,
  linkcolor=royalblue,
  urlcolor=royalblue}  

\theoremstyle{plain}
\newtheorem{theorem}{Theorem}[section]
\newtheorem{proposition}[theorem]{Proposition}
\newtheorem{lemma}[theorem]{Lemma}
\newtheorem{corollary}[theorem]{Corollary}
\newtheorem{definition}[theorem]{Definition}
\newtheorem{assumption}[theorem]{Assumption}
\newtheorem{remark}[theorem]{Remark}

\newreptheorem{theorem}{Theorem}
\newreptheorem{lemma}{Lemma}
\newreptheorem{proposition}{Proposition}
\newreptheorem{corollary}{Corollary}


\newcommand{\mrd}{\mathrm{d}}

\newcommand{\xmark}{\ding{55}}

\newcommand{\KLr}{\mathrm{KL}}

\newcommand{\mbE}{\mathbb{E}}
\newcommand{\pirefbetay}{\bar{\pi}_{\pmb{\beta},\mathrm{ref}}(y|x)}
\newcommand{\pirefbetacdot}{\bar{\pi}_{\pmb{\beta},\mathrm{ref}}(\cdot|x)}
\newcommand{\thetas}{\theta^{\star}}
\newcommand{\thetah}{\widehat{\theta}}
\newcommand{\pirefalpha}{\widehat{\pi}_{\pmb{\alpha},\mathrm{ref}}(\cdot|x)}
\newcommand{\pirefalphayx}{\widehat{\pi}_{\pmb{\alpha},\mathrm{ref}}}
\newcommand{\pif}{\tilde{\pi}}
\usepackage{tcolorbox}

\usepackage{xspace}
\usepackage[colorinlistoftodos, color=blue!30!white 
]{todonotes}                                        
\setlength{\marginparwidth}{12ex}

\newcommand{\squishlist}{
	\begin{list}{$\bullet$}
		{ \setlength{\itemsep}{0pt}
			\setlength{\parsep}{1pt}
			\setlength{\topsep}{1pt}
			\setlength{\partopsep}{0pt}
			\setlength{\leftmargin}{1em}
			\setlength{\labelwidth}{1em}
			\setlength{\labelsep}{0.5em} } }
	
\newcommand{\squishend}{\end{list}}

\title{KL-Regularized RLHF with Multiple Reference Models: Exact Solutions and Sample Complexity}

%

\author{%
  Gholamali Aminian\\
  The Alan Turing Institute\\
  London, UK \\
  \texttt{gaminian@turing.ac.uk} \\
  \And
  Amir R. Asadi  \\
  Statistical Laboratory\\ University of Cambridge, UK  \\
  \texttt{asadi@statslab.cam.ac.uk} \\
  \AND
     Idan Shenfeld  \\
  Massachusetts Institute of Technology \\
  USA \\
  \texttt{idanshen@mit.edu} \\
  \And
   Youssef Mroueh  \\
  IBM Research \\
  USA \\
  \texttt{mroueh@us.ibm.com} \\
}

\begin{document}

\maketitle

\begin{abstract}
 Recent methods for aligning large language models (LLMs) with human feedback predominantly rely on a single reference model, which limits diversity, model overfitting, and underutilizes the wide range of available pre-trained models. Incorporating multiple reference models has the potential to address these limitations by broadening perspectives, reducing bias, and leveraging the strengths of diverse open-source LLMs. However, integrating multiple reference models into reinforcement learning with human feedback (RLHF) frameworks poses significant theoretical challenges, where achieving exact solutions has remained an open problem. This paper presents the first \emph{exact solution} to the multiple reference model problem in reverse KL-regularized RLHF. We introduce a comprehensive theoretical framework that includes rigorous statistical analysis and provides sample complexity guarantees. Additionally, we extend our analysis to forward KL-regularized RLHF, offering new insights into sample complexity requirements in multiple reference scenarios. Our contributions lay the foundation for more advanced and adaptable LLM alignment techniques, enabling the effective use of multiple reference models. This work paves the way for developing alignment frameworks that are both theoretically sound and better suited to the challenges of modern AI ecosystems.
\end{abstract}

\section{Introduction}\label{sec:intro}

Large language models (LLMs) have revolutionized natural language processing (NLP) by demonstrating remarkable capabilities in understanding and generating human language. Powered by vast datasets and advanced neural architectures, these models have set new benchmarks across various NLP tasks, including machine translation and conversational agents. Despite these advancements, aligning LLMs with human values and preferences remains a critical challenge. Such misalignment can lead to undesirable behaviors, including the generation of biased or inappropriate content, which undermines the reliability and safety of these models \citep{gehman2020realtoxicityprompts}.

Reinforcement Learning from Human Feedback (RLHF) has emerged as a pivotal framework for addressing alignment challenges in LLMs. By fine-tuning LLMs based on human feedback, RLHF steers models towards more human-aligned behaviors, enhancing truthfulness, helpfulness, and harmlessness while maintaining their ability to generate accurate and high-probability outputs \citep{wirth2017survey,christiano2017deep}. In RLHF, reward-based methods use a trained reward model to evaluate (prompt, response) pairs. These methods treat the language model as a policy that takes a prompt $x$ and generates a response $y$ conditioned on $x$, optimizing this policy to generate responses with maximum reward. Typically, a reference policy (usually the pretrained model before fine-tuning) is used as a baseline to regularize training, preventing excessive deviation from the original behavior.

An inherent limitation of most works on LLM alignment is their reliance on a \textit{single reference model} \citep{wang2024comprehensive}. 
First, this restricts the diversity of linguistic patterns and inductive biases available during training. In that, it is over restrictive - potentially leading to a model that inherits the limitations or cultural biases of a single pretrained source. 
Second, such an approach is inefficient in utilizing the wealth of pre-trained models available in modern AI ecosystems, which excel in different domains and capture unique nuances, leaving valuable collective intelligence untapped. Therefore, incorporating multiple LLMs as reference models produces a model that reflects the characteristics of all reference models while satisfying human preferences. This approach is particularly relevant as the open-source community continues to release diverse pre-trained and fine-tuned LLMs of varying scales, trained on a wide range of datasets \citep{jiang2023mistral,penedo2023refinedweb}. 

A solution is to extend the RLHF training to utilize \textit{multiple reference models}. While RLHF with multiple reference models has demonstrated practical utility \citep{le2024multi}, its theoretical underpinnings remain largely unexplored. A critical gap in current understanding is the lack of an exact solution for reverse KL-regularized RLHF when incorporating multiple reference models. This theoretical limitation has prevented the study of sample complexity of bounds on both optimality and sub-optimality gaps in the reverse KL-regularized framework. Addressing this problem is crucial for advancing the alignment of LLMs with human preferences in increasingly complex and diverse settings.

In this work, we provide the solutions for RLHF with multiple reference models when regularized via Reverse KL divergence (RKL) or forward KL divergence (FKL). In addition, we provide a statistical analysis of these scenarios. Our main contributions are as follows:
\begin{itemize}
    \item We propose a comprehensive mathematical framework for reverse KL-regularized RLHF with multiple reference models and provide the exact solution for this problem and calculate the maximum objective value. 
    \item We provide theoretical guarantees for the proposed multiple reference models scenario under reverse KL-regularization. In particular, we study the sample complexity\footnote{The sample complexity provides insight into how quickly bounds converge as the dataset size increases.} of reverse KL-regularized RLHF under multiple reference models. 
    \item We also study the multiple reference models scenario under forward KL-regularized RLHF and analyze its sample complexity.
\end{itemize}
\section{Related Works}
\textbf{Multiple References:} Inspired by model soups \citep{wortsman2022model}, \citet{chegini2024salsa} propose a reference soup policy, achieved by averaging two independently trained supervised fine-tuned models, including the reference model. However, their approach lacks theoretical guarantees, particularly regarding its applicability to alignment tasks. More recently, \citet{le2024multi} introduced the concept of multiple reference models for alignment. Due to the challenges in deriving a closed-form solution for the RLHF objective under multiple referencing constraints, the authors proposed a lower-bound approximation. In this work, we address this gap by deriving the closed-form solution for the multiple reference model scenario under reverse KL-regularization.

\textbf{Theoretical Foundation of RLHF:} Several works have studied the theoretical underpinnings of reverse KL-regularized RLHF, particularly in terms of sample complexity \citep{zhao2024sharp,xiong2024iterative,song2024importance,zhan2023provable,ye2024theoretical}. Among these, \citet{zhao2024sharp} analyze reverse KL-regularized RLHF, demonstrating the effect of reverse KL-regularization and establishing an upper bound on sub-optimality gap with $O(1/n)$ sample complexity (convergence rate)  where $n$ represents the size of preference dataset. More detailed comparison with these works is provided in Section~\ref{sec:disc}. However, to the best of our knowledge, the RLHF framework incorporating multiple reference models has not yet been studied. 

\textbf{Forward KL-regularization and Alignment:} The forward KL-regularization for Direct Preference Optimization (DPO) proposed by \citet{wangbeyond}. The application of forward KL-regularization for alignment from demonstrations is shown in \citep{sun2024inverse}. The forward KL-regularization in stochastic decision problems is also studied by \citet{cohen2017data}. To the best of our knowledge, the forward KL-regularized RLHF is not studied from a theoretical perspective. 

\section{Preliminaries}\label{sec:Preliminaries}

\paragraph{Notations:} 
Upper-case letters denote random variables (e.g., $Z$), lower-case letters denote the realizations of random variables (e.g., $z$), and calligraphic letters denote sets (e.g., $\mathcal{Z}$). 
All logarithms are in the natural base. The set of 
probability distributions (measures) over a space $\mathcal{X}$ with finite variance is denoted by $\mathcal{P}(\mathcal{X})$. The KL-divergence between two
 probability distributions on $\mathbb{R}^d$ 
 with densities $p(x)$ and $q(x),$ 
 so that $q(x) > 0$ when $p(x) > 0$,  
is $\KLr(p\|q) := \int_{\mathbb{R}^d} p(x)\log(p(x)/q(x))\mrd x$ (with $0/0:=0$). The entropy of a distribution $p(x)$ is denoted by $H(p)=-\int_{\mathbb{R}^d} p(x)\log(p(x))$.

 We define the Escort and Generalized Escort distributions \citep{bercher2012simple} (a.k.a. normalized geometric transformation).
\begin{definition}[Escort and Generalized Escort Distributions] Given a discrete probability measure $P$ defined on a set $\mathcal{A}$, and any $\lambda\geq 0$, we define the escort distribution $(P)^{\lambda}$ for all $a\in \mathcal{A}$ as 
\begin{equation*}
	(P)^{\lambda}(a):=\frac{(P(a))^{\lambda}}{\sum_{x\in \mathcal{A}}(P(x))^{\lambda}}.
\end{equation*}
Given two discrete probability measures $P$ and $Q$ defined on a set $\mathcal{A}$, and any $\lambda\in [0,1]$, we define the generalized escort distribution $(P,Q)^{\lambda}$ as the following tilted distribution:
\begin{equation*}
	(P,Q)^{\lambda}(a):= \frac{P^{\lambda}(a)Q^{1-\lambda}(a)}{\sum_{x\in \mathcal{A}}P^{\lambda}(x)Q^{1-\lambda}(x)}.
\end{equation*}
\end{definition}

Next, we introduce the functional derivative, see~\cite{cardaliaguet2019master}. 
\begin{definition}{\citep{cardaliaguet2019master}}
\label{def:flatDerivative}
A functional $U:\mathcal P(\mathbb R^n) \to \mathbb R$ 
admits a functional derivative
if there is a map $\frac{\delta U}{\delta m} : \mathcal P(\mathbb R^n) \times \mathbb R^n \to \mathbb R$ which is continuous on $\mathcal P(\mathbb R^n)$ and, for all
$m, m' \in\mathcal P(\mathbb R^n)$, it holds that
\begin{align*}
&U(m') - U(m) =\!\int_0^1 \int_{\mathbb{R}^n} \frac{\delta U}{\delta m}(m_\lambda,a) \, (m'
-m)(da)\,\mrd \lambda,
\end{align*}
where $m_\lambda=m + \lambda(m' - m)$.
\end{definition}
We also define the sensitivity of a policy $\pi_r(y|x)$, which is a function of reward function $r(x,y)$, with respect to the reward function as
\begin{equation}
    \frac{\partial \pi}{\partial r}(r):=\lim_{\Delta r\rightarrow 0}\frac{\pi_r(y|x)-\pi_{r+\Delta r}(y|x)}{\Delta r}.
\end{equation}
\section{Problem Formulation}
Following prior works \citep{ye2024theoretical,zhao2024sharp}, we consider the problem of aligning a policy $\pi$ with human preferences. Given an input (prompt) $x \in \mathcal{X}$ which is samples from $\rho(x)$, is the finite space of input texts, the policy $\pi\in\Pi$, where $\Pi$ is the set of policies, models a conditional probability distribution $\pi(y|x)$ over the finite space of output texts $y \in \mathcal{Y}$. From a given $\pi$ and $x$, we can sample an output (response) $y \sim \pi(\cdot|x)$.

\textbf{Preference Dataset:} Preference data is generated by sampling two outputs $(y, y')|x$ from $\pi_{\mathrm{ref}}$ as the reference policy (model), and presenting them to an agent, typically a human, for rating to indicate which one is preferred. For example, $y \succ y'$ denotes that $y$ is preferred to $y'$. A preference dataset is then denoted as $D = \{y_i^w, y_i^l,x^i\}_{i=1}^n$, where $n$ is the number of data points, $y_w$ and $y_l$ denote the preferred (chosen) and dispreferred (rejected) outputs, respectively.

We assume that there exists a true model of the agent's preference $p^*(y \succ y'|x)$, which assigns the probability of $y$ being preferred to $y'$ given $x$ based on the latent reward model which is unknown. 

\subsection{RLHF from One Reference Model} Using the dataset $\mathcal{D}$, our goal is to find a policy $\pi$ that maximizes the expected preference while being close to a reference policy $\pi_{\mathrm{ref}}$. In this approach, Bradley--Terry model \cite{bradley1952rank} is employed
as the preference model, $p\big(y\succ y'|x\big)=\sigma\big(r_{\theta}\big(x,y\big)-r_{\theta}\big(x,y'\big)\big),$

where $\sigma$ denotes the sigmoid function and $r_\theta:\mathcal{X}\times\mathcal{Y}\rightarrow\mathbb{R}$
is a reward model parameterized by $\theta,$ which assigns a scalar
score to indicate the suitability of output $y$ for input $x$. In \cite{christiano2017deep}, the reward model is trained
on $D$ to maximize the log-likelihood (MLE) estimator:
\begin{equation}
\mathcal{L}_{R}(\theta,D)=\sum_{i=1}^n\frac{1}{n}\log\sigma\big(r_{\theta}\big(x_i,y_{w}^i\big)-r_{\theta}\big(x_i,y_{l}^i\big)\big).
\end{equation}
Given a trained reward model $r_{\thetah}(x,y)$ where $\thetah =\mathop{\arg \max}_{\theta\in\Theta}\mathcal{L}_{R}(\theta,D)$, we can consider the regularized optimization objective which is regularized via reverse KL-regularized or forward KL-regularized.

\textbf{Reverse KL-regularized RLHF:} A crucial component of RLHF is the use of a reference model to compute a Reverse Kullback-Leibler (KL) divergence penalty. This penalty ensures that the process does not deviate excessively from the original model, mitigating the risk of generating nonsensical responses \citep{ziegler2019fine}. The reverse KL-regularized optimization objective for $(\gamma>0)$ can represented as:
\begin{align}\label{eq:pref1-rlhf}
 & \underset{\pi}{\max}\mathbb{E}_{Y\sim \pi(\cdot|x)}\big[r_{\thetah}\big(x,Y\big)\big]-\frac{1}{\gamma}\KLr\big(\pi(\cdot|x)\big\Vert \pi_{\mathrm{ref}}(\cdot|x)\big.\big),
\end{align}
Note that the solution of \eqref{eq:pref1-rlhf} is,
\begin{equation}\label{eq: pithetah}
    \pi_{\thetah}^{\gamma}(y|x):=\frac{\pi_{\mathrm{ref}}(y|x)\exp(\gamma r_{\thetah}(x,y))}{Z(x)},
\end{equation}
where $Z(x)=\mathbb{E}_{Y\sim \pi_{\mathrm{ref}(\cdot|x) }}[\exp(\gamma r_{\thetah}(x,Y))]$ is the normalization factor. Similarly, we can define $\pi_{\thetas}^{\gamma}(y|x)$ using $r_{\thetas}(x,y)$ instead of $r_{\thetah}(x,y)$ in \eqref{eq: pithetah}. This RLHF objective is employed to train LLMs such as Instruct-GPT \cite{ouyang2022training} using PPO \cite{schulman2017proximal}. 

We define $J(\pi_{\thetas}(\cdot|x))=\mbE_{Y\sim\pi_{\thetas}(\cdot|x)}[r_{\thetas}(x,Y)]$ (a.k.a. value function\footnote{We can also consider $\mbE_{X\sim\rho(\cdot)}[J(\pi(\cdot|X))]$. All of our results also holds for expected version of value function.}) and provide an upper bound on optimal gap,
\begin{equation}\label{eq:gap}
    \begin{split}
         &\mathcal{J}(\pi_{\thetas}^\gamma(\cdot|x),\pi_{\thetah}^{\gamma}(\cdot|x)):= J(\pi_{\thetas}^\gamma(\cdot|x))-J(\pi_{\thetah}^{\gamma}(\cdot|x)).
    \end{split}
\end{equation}
Furthermore, inspired by \citep{song2024importance,zhao2024sharp}, we consider the following RLHF objective function based on the true reward function,
\begin{equation}\label{eq:rlhf-obj}
J_{\gamma}(\pi_{\mathrm{ref}}(\cdot|x),\pi_{\theta}(\cdot|x)):=\mathbb{E}_{Y\sim \pi_\theta(\cdot|x)}[r_{\thetas}(Y,x)]-\frac{1}{\gamma}\KLr(\pi_{\theta}(\cdot|x)\|\pi_{\mathrm{ref}}(\cdot|x)).
\end{equation}
As studied by \citet{zhao2024sharp,song2024importance,zhan2023provable}, we also aim to study the following sub-optimality gap,
\begin{equation}\label{eq:sub-gap}
\begin{split}
     & \mathcal{J}^{\gamma}(\pi_{\thetas}^\gamma(\cdot|x),\pi_{\thetah}^{\gamma}(\cdot|x)):=J_{\gamma}(\pi_{\mathrm{ref}}(\cdot|x),\pi_{\thetas}^\gamma(\cdot|x))-J_{\gamma}(\pi_{\mathrm{ref}}(\cdot|x),\pi_{\thetah}^{\gamma}(\cdot|x)).
\end{split}
\end{equation}

\textbf{Forward KL-regularized RLHF:} Inspired by \citep{wangbeyond}, we can consider the forward KL-regularized optimization objective as,
\begin{align}\label{eq:pref1-rlhf-FKL}
 & \underset{\pi}{\max}\mathbb{E}_{Y\sim \pi(\cdot|x)}\Big[r_{\thetah}\big(x,Y\big)\Big]-\frac{1}{\gamma}\KLr\Big(\pi_{\mathrm{ref}}(\cdot|x) \| \pi(\cdot|x) \Big),
\end{align}
As discussed in \citep{wangbeyond}, this optimization problem has an implicit solution given by:
\begin{equation}
\tilde{\pi}_{\thetah}^{\gamma}(y|x) := \frac{\pi_{\text{ref}}(y|x)}{\gamma(\tilde{Z}_{\thetah}(x)-r_{\thetah}(x,y))}
\end{equation}
where $\tilde{Z}_{\thetah}(x)$ is normalization constant ensuring that $\int_{\mathcal{Y}} \pi_{\theta}^{\gamma}(y|x)dy = 1$. Some properties of $\tilde{Z}_{\thetah}(x)$ are discussed in App.~\ref{app_sec_RLHF_FKL}. 

Similar to \eqref{eq:gap} and \eqref{eq:sub-gap}, for forward KL-regularized RLHF, we can define, 
\begin{align}
       &\tilde{J}_{\gamma}(\pi_{\mathrm{ref}}(\cdot|x),\pi_{\theta}(\cdot|x)):= \mathbb{E}_{Y\sim \pi_\theta(\cdot|x)}[r_{\thetas}(Y,x)]-\frac{1}{\gamma}\KLr(\pi_{\mathrm{ref}}(\cdot|x)\|\pi_{\theta}(\cdot|x)),\\
       &\widetilde{\mathcal{J}}^{\gamma}(\tilde{\pi}_{\thetas}^\gamma(\cdot|x),\tilde{\pi}_{\thetah}^{\gamma}(\cdot|x)):=\tilde{J}_{\gamma}(\pi_{\mathrm{ref}}(\cdot|x),\tilde{\pi}_{\thetas}^\gamma(\cdot|x))-\tilde{J}_{\gamma}(\pi_{\mathrm{ref}}(\cdot|x),\tilde{\pi}_{\thetah}^{\gamma}(\cdot|x)).
    \end{align}  

\subsection{ Assumptions}\label{sec:assumption}
For our analysis, the following assumptions are needed.
\begin{assumption}[Bounded Reward]\label{ass:bounded_reward}
We assume that the true and parametrized reward functions, $r_{\thetas}(x,y)$ and $r_{\thetah}(x,y)$, are non-negative functions and bounded by $R_{\max}$.
\end{assumption}
\begin{assumption}[Finite Class]\label{ass:finite_class}
 We assume that the reward function class, $\mathcal{R}$, is finite, $|\mathcal{R}|<\infty$.
\end{assumption}
The assumption of bounded rewards (Assumption~\ref{ass:bounded_reward}) and Finite class (Assumption~\ref{ass:finite_class}) are common in the literature \citep{song2024importance,zhan2023provable,zhao2024sharp,chang2024dataset,xiong2024iterative}. More discussion regarding these assumptions are provided in App.~\ref{app:ass_dis}.

Coverage conditions play a fundamental role in understanding the theoretical guarantees of RLHF algorithms. We first introduce the most stringent coverage requirement, known as global coverage \citep{munos2008finite}:

\begin{assumption}[Global Coverage]\label{ass:global-coverage}
    For all policies $\pi$, we require $\max_{x,y:\rho(x)>0} \frac{\pi(y|x)}{\widehat{\pi}_{\mathrm{ref}}(y|x)} \leq C_{\mathrm{GC}},$ where $\widehat{\pi}_{\mathrm{ref}}$ denotes the reference model and $C_{\mathrm{GC}}\in\mathbb{R}^+$ is a finite constant.
\end{assumption}

A key implication of Assumption~\ref{ass:global-coverage} is that it requires substantial coverage: specifically, for any prompt $x$ and token sequence $y$ in the support of $\rho$, we must have $\widehat{\pi}_{\mathrm{ref}}(y|x) \geq \frac{1}{C_{\mathrm{GC}}}$.

While global coverage has been extensively studied in the offline RL literature \citep{uehara2021pessimistic,zhan2022offline}, it imposes strong requirements that may be unnecessarily restrictive for RLHF. A key insight from recent work \citep{zhao2024sharp,song2024importance} is that RLHF algorithms inherently employ reverse KL-regularization, which ensures learned policies remain within a neighborhood of the reference model. This observation motivates a more refined coverage condition:
\begin{assumption}[Local Reverse KL-ball Coverage]\label{ass:kl-coverage}
    Consider $\varepsilon_{\mathrm{rkl}} < \infty$ and any policy $\pi$ satisfying
  $ \mathbb{E}_{x\sim\rho}[\KLr(\pi(\cdot|x)\|\widehat{\pi}_{\mathrm{ref}}(\cdot|x))] \leq \varepsilon_{\mathrm{rkl}},$ we require $\max_{x,y:\rho(x)>0} \frac{\pi(y|x)}{\widehat{\pi}_{\mathrm{ref}}(y|x)} \leq C_{\varepsilon_{\mathrm{rkl}}},$ where $C_{\varepsilon_{\mathrm{rkl}}}\in\mathbb{R}^+$ depends on the KL threshold $\varepsilon_{\mathrm{rkl}}$.
\end{assumption}
Similar to Assumption~\ref{ass:kl-coverage}, we consider the forward KL-ball coverage assumption.
\begin{assumption}[Local Forward KL-ball Coverage]\label{ass:fkl-coverage}
    Consider $\varepsilon_{\mathrm{fkl}} < \infty$ and any policy $\pi$ satisfying
   $ \mathbb{E}_{x\sim\rho}[\KLr(\widehat{\pi}_{\mathrm{ref}}(\cdot|x)\|\pi(\cdot|x))] \leq \varepsilon_{\mathrm{fkl}},$
    we require $\max_{x,y:\rho(x)>0} \frac{\pi(y|x)}{\widehat{\pi}_{\mathrm{ref}}(y|x)} \leq  C_{\varepsilon_{\mathrm{fkl}}},$
    where $C_{\varepsilon_{\mathrm{fkl}}}\in\mathbb{R}^{+}$ depends on the KL threshold $\varepsilon_{\mathrm{fkl}}$.
\end{assumption}
The local reverse or forward KL-ball coverage condition offers several advantages. Focusing only on policies within a reverse KL-ball of the reference model provides sharper theoretical guarantees while imposing weaker requirements. This localization aligns naturally with RLHF algorithms, which explicitly constrain the learned policy's divergence from the reference model. For any fixed reference model $\pi_{\mathrm{ref}}$, the reverse or forward KL local coverage constant is always bounded by the global coverage constant: $\max(C_{\varepsilon_{\mathrm{rkl}}},C_{\varepsilon_{\mathrm{fkl}}}) \leq C_{\mathrm{GC}}$. This follows from the fact that KL-constrained policies form a subset of all possible policies. 


\section{RLHF from Multiple Reference Models via Reverse KL divergence}\label{sec_RLHF_RKL}
In this section, inspired by \cite{le2024multi}, we are focused on situations involving $K$ reference
policies $\Big\{ \pi_{\mathrm{ref},i}\Big\} _{i=1}^{K}$ where the latent reward model among all reference policies is the same. All proof details are deferred to Appendix~\ref{app_sec_RLHF_RKL}.
\subsection{Exact Solution of RLHF under multiple reference models via RKL}\label{sec:Exact_solution}
Inspired by \citep{le2024multi}, our objective can be formulated as a multiple reference models RLHF objective,
\begin{equation}\label{eq:prefm-rlhf-1}
\underset{\pi}{\max}\underset{Y\sim\pi(\cdot|x)}{{\mathbb{E}}}\big[r\big(x,Y\big)\big]-\frac{1}{\gamma}\Big(\sum_{i=1}^{K}\alpha_{i}\KLr\big(\pi (\cdot | x)\| \pi_{\mathrm{ref},i}(\cdot | x)\big)\Big),
\end{equation}
where $\alpha_{i}$ are weighting coefficients for each reference
policy and $\sum_{i=1}^{K}\alpha_{i}=1$. This objective
was explored in previous studies, leading to enhancements in pure
RL problems \cite{le2022learning}.

However, addressing this optimization problem in LLMs through reward
learning and RL finetuning pose similar challenges to \eqref{eq:pref1-rlhf}.
Our goal is to derive a closed-form solution for the multi-reference RLHF objective in \eqref{eq:prefm-rlhf-1}. Note that in \citep[Proposition~1]{le2024multi}, a lower bound on RLHF objective in \eqref{eq:prefm-rlhf-1} is proposed, and the solution for this surrogate objective function is derived as follows,
\begin{equation}
    \pi_{\mathrm{L}}\big(y|x\big)=\frac{\widetilde{\pi}_{\mathrm{ref}}\Big(y|x\Big)}{\widehat{Z}_{\mathrm{l}}(x)}\exp\Big(\gamma r\Big(x,y\Big)\Big),
\end{equation}
where $\widetilde{\pi}_{\mathrm{ref}}(y|x)=\Big(\sum_{i=1}^K \frac{\alpha_i}{\pi_{\mathrm{ref},i}(y|x) }\Big)^{-1}$ and $\widehat{Z}_{\mathrm{l}}(x)=\sum_{y}\widetilde{\pi}_{\mathrm{ref}}(y|x)\exp\big(\gamma r(x,y)\big)$.

In contrast, in the following theorem, we provide the \emph{exact} solution of the objective function for the multiple reference model \eqref{eq:prefm-rlhf-1}.  

\begin{theorem}\label{thm: main}
Consider the following objective function for RLHF with multiple reference models,
\begin{equation*}
\underset{\pi}{\max}\Big\{\underset{Y\sim\pi(\cdot|x)}{{\mathbb{E}}}\big[r_{\thetas}\big(x,Y\big)\big]-\frac{1}{\gamma}\Big(\sum_{i=1}^{K}\alpha_{i}\KLr\big(\pi (\cdot | x)\| \pi_{\mathrm{ref},i}(\cdot | x)\big)\Big)\Big\},
\end{equation*}
where $\sum_{i=1}^K \alpha_i=1$ and $\alpha_i\in(0,1)$ for $i\in[K]$. Then, the exact solution of the multiple reference model objective function for RLHF is,
\begin{equation}
\pi_{\thetas}^\gamma\big(y|x\big)=\frac{\widehat{\pi}_{\pmb{\alpha},\mathrm{ref}}\Big(y|x\Big)}{\widehat{Z}(x)}\exp\Big(\gamma r_{\thetas}(x,y)\Big),
\end{equation}
where 
\begin{equation}\label{eq:ref-mrm}
    \widehat{\pi}_{\pmb{\alpha},\mathrm{ref}}(y|x)= \frac{\prod_{i=1}^K \pi_{\mathrm{ref},i}^{ \alpha_i}(y|x)}{F_{\pmb{\alpha}}(x)},\quad F_{\pmb{\alpha}}(x)=\sum_{y\in\mathcal{Y}}\prod_{i=1}^K \pi_{\mathrm{ref},i}^{ \alpha_i}(y|x),   
\end{equation}
and $\widehat{Z}(x)=\sum_{y}\widehat{\pi}_{\pmb{\alpha},\mathrm{ref}}(y|x)\exp\Big(\gamma r_{\thetas}(x,y)\Big).$
The maximum objective value is $\frac{1}{\gamma}\log \left(\sum_y \prod_{i=1}^K \pi_{\mathrm{ref},i}^{\alpha_i}(y|x)\exp\left(\gamma r(x,y) \right)\right). $
\end{theorem}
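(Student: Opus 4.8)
The plan is to solve the constrained optimization problem in \eqref{eq:prefm-rlhf-1} directly via the calculus of variations, treating $\pi(\cdot|x)$ as a point in the simplex over $\mathcal{Y}$ for each fixed $x$. First I would rewrite the objective: since $\sum_{i=1}^K \alpha_i = 1$, the sum of KL terms telescopes nicely,
\begin{equation*}
\sum_{i=1}^{K}\alpha_{i}\KLr\big(\pi(\cdot|x)\|\pi_{\mathrm{ref},i}(\cdot|x)\big)
= \sum_{y}\pi(y|x)\log\pi(y|x) - \sum_{y}\pi(y|x)\sum_{i=1}^{K}\alpha_i\log\pi_{\mathrm{ref},i}(y|x),
\end{equation*}
and the second term is exactly $\sum_y \pi(y|x)\log\prod_{i=1}^K \pi_{\mathrm{ref},i}^{\alpha_i}(y|x)$. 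Recognizing $\prod_{i=1}^K \pi_{\mathrm{ref},i}^{\alpha_i}(y|x) = F_{\pmb{\alpha}}(x)\,\widehat{\pi}_{\pmb{\alpha},\mathrm{ref}}(y|x)$ from \eqref{eq:ref-mrm}, the whole weighted-KL expression collapses to $\KLr\big(\pi(\cdot|x)\|\widehat{\pi}_{\pmb{\alpha},\mathrm{ref}}(\cdot|x)\big) - \log F_{\pmb{\alpha}}(x)$. This is the key structural observation: the multi-reference objective is, up to an additive constant independent of $\pi$, just an ordinary single-reference reverse-KL-regularized objective with the geometric-mean reference $\widehat{\pi}_{\pmb{\alpha},\mathrm{ref}}$.

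Given that reduction, the second step is routine: apply the known single-reference solution \eqref{eq: pithetah} (or re-derive it via Lagrange multipliers / Gibbs variational principle) with $\pi_{\mathrm{ref}}$ replaced by $\widehat{\pi}_{\pmb{\alpha},\mathrm{ref}}$, which immediately yields
\begin{equation*}
\pi_{\thetas}^\gamma(y|x) = \frac{\widehat{\pi}_{\pmb{\alpha},\mathrm{ref}}(y|x)\exp(\gamma r_{\thetas}(x,y))}{\widehat{Z}(x)},\qquad \widehat{Z}(x) = \sum_y \widehat{\pi}_{\pmb{\alpha},\mathrm{ref}}(y|x)\exp(\gamma r_{\thetas}(x,y)).
\end{equation*}
For the maximum objective value, I would substitute this optimizer back in. Using the telescoped form, the objective at the optimum equals $\frac{1}{\gamma}\log F_{\pmb{\alpha}}(x) + \big[\mathbb{E}_{\pi_{\thetas}^\gamma}[r_{\thetas}] - \frac{1}{\gamma}\KLr(\pi_{\thetas}^\gamma\|\widehat{\pi}_{\pmb{\alpha},\mathrm{ref}})\big]$; the bracketed term is the standard Gibbs free-energy value $\frac{1}{\gamma}\log\widehat{Z}(x) = \frac{1}{\gamma}\log\sum_y \widehat{\pi}_{\pmb{\alpha},\mathrm{ref}}(y|x)\exp(\gamma r_{\thetas}(x,y))$. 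Adding the $\frac{1}{\gamma}\log F_{\pmb{\alpha}}(x)$ back and noting $F_{\pmb{\alpha}}(x)\widehat{\pi}_{\pmb{\alpha},\mathrm{ref}}(y|x) = \prod_i \pi_{\mathrm{ref},i}^{\alpha_i}(y|x)$ gives precisely $\frac{1}{\gamma}\log\big(\sum_y \prod_{i=1}^K \pi_{\mathrm{ref},i}^{\alpha_i}(y|x)\exp(\gamma r(x,y))\big)$.

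The main obstacle is essentially bookkeeping rather than conceptual: one must verify that $\widehat{\pi}_{\pmb{\alpha},\mathrm{ref}}$ is a well-defined probability distribution (so $F_{\pmb{\alpha}}(x) \in (0,\infty)$, which holds on a finite $\mathcal{Y}$ with strictly positive reference densities — here the hypothesis $\alpha_i \in (0,1)$ and the implicit common-support assumption matter), that the reduced objective is strictly concave in $\pi$ on the simplex (immediate, since $-\KLr(\pi\|\cdot)$ is strictly concave in $\pi$ and the reward term is linear), so the stationary point is the unique global maximizer, and that the constraint $\sum_y \pi(y|x) = 1$ together with nonnegativity is handled correctly by the Lagrangian (the exponential form automatically gives nonnegativity, so only the normalization multiplier is active). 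If the paper prefers, the functional-derivative machinery of Definition~\ref{def:flatDerivative} can be invoked to make the first-order condition rigorous in the measure-theoretic sense, but for finite $\mathcal{Y}$ the elementary simplex argument suffices.
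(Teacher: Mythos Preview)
Your proposal is correct and follows essentially the same route as the paper's proof: the paper packages your ``telescoping'' step as Lemma~\ref{lem: Tilted distribution entropy} (reducing the weighted sum of KLs to a single KL against the generalized escort reference, plus the $-\log F_{\pmb{\alpha}}(x)$ constant) and your second step as Lemma~\ref{lem: Gibbs relative entropy} (the Gibbs variational identity), then chains them to obtain $\frac{1}{\gamma}\big(-\KLr(\pi\|\pi_{\thetas}^\gamma)+\log\widehat{Z}(x)+\log F_{\pmb{\alpha}}(x)\big)$, from which both the optimizer and the optimal value are read off directly. The only cosmetic difference is that the paper extracts the maximum value from the Gibbs identity rather than by substituting the optimizer back in, but the content is identical.
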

Note that this result does not rely on the assumptions stated in Subsection \ref{sec:assumption} and in fact holds in greater generality.
Using Theorem~\ref{thm: main}, we can consider the following optimization problem for the multiple reference models scenario.
\begin{equation}\label{eq:rlhf-mrm-main}
\mathbb{E}_{Y\sim\pi(\cdot|x)}\big[r_{\thetas}(x,Y)\big]-\frac{1}{\gamma}\KLr\big(\pi (\cdot | x)\| \widehat{\pi}_{\pmb{\alpha},\mathrm{ref}}(\cdot|x)\big),
\end{equation}
where $\widehat{\pi}_{\pmb{\alpha},\mathrm{ref}}(y|x)$ is defined in \eqref{eq:ref-mrm} as generalized escort reference policy. The algorithm of reverse KL-regularized RLHF with two reference models is presented in App.~\ref{app:alg}.
\subsection{Main Results for RLHF via RKL}
In this section, we provide our main theoretical results for the RLHF algorithm with multiple reference models based on reverse KL-regularization. Using the convexity of reverse KL divergence, we can provide an upper bound on the sub-optimality gap. Furthermore, we assume that Assumption~\ref{ass:kl-coverage} holds under $\pirefalpha$ as reference policy with $C_{\pmb{\alpha},\varepsilon_{\mathrm{rkl}}}$.
First, we can derive the following upper bound on the sub-optimality gap of the RLHF algorithm with multiple reference models.
\begin{theorem}\label{thm:sub-gap}
   Under Assumption~\ref{ass:bounded_reward}, \ref{ass:finite_class} and \ref{ass:kl-coverage}, the following upper bound holds on the sub-optimality gap with probability at least $(1-\delta)$ for $\delta\in(0,1/2)$,
   \begin{equation}
       \begin{split}
           &\mathcal{J}^{\gamma}(\pi_{\thetas}^\gamma(\cdot|x),\pi_{\thetah}^{\gamma}(\cdot|x))\leq \gamma C_{\pmb{\alpha},\varepsilon_{\mathrm{rkl}}} 128 e^{4 R_{\max}}R_{\max}^2\frac{\log(|\mathcal{R}|/\delta)}{n}.
       \end{split}
   \end{equation}
\end{theorem}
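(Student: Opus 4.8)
The plan is to follow the now-standard pessimism-and-reward-error decomposition for KL-regularized RLHF (as in \citep{zhao2024sharp,song2024importance}), but carried out with the generalized escort reference policy $\pirefalpha$ playing the role of the reference model. By Theorem~\ref{thm: main}, the multi-reference objective \eqref{eq:prefm-rlhf-1} collapses exactly to the single-reference objective \eqref{eq:rlhf-mrm-main} with reference $\pirefalpha$, so both $\pi_{\thetas}^\gamma(\cdot|x)$ and $\pi_{\thetah}^{\gamma}(\cdot|x)$ are Gibbs policies of the form $\pirefalpha(y|x)\exp(\gamma r(x,y))/\widehat Z(x)$ with $r=r_{\thetas}$ and $r=r_{\thetah}$ respectively. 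The first step is therefore to reduce the problem to the one-reference setting and record that $\pi_{\thetah}^\gamma$ is the exact maximizer of $J_\gamma(\pirefalpha(\cdot|x),\cdot)$ with the plug-in reward, which gives the basic optimization inequality $J_\gamma(\pirefalpha,\pi_{\thetas}^\gamma)-J_\gamma(\pirefalpha,\pi_{\thetah}^\gamma) \le \mbE_{Y\sim\pi_{\thetas}^\gamma}[r_{\thetas}(x,Y)-r_{\thetah}(x,Y)] + \mbE_{Y\sim\pi_{\thetah}^\gamma}[r_{\thetah}(x,Y)-r_{\thetas}(x,Y)]$, i.e. the sub-optimality gap is controlled by the difference in reward errors integrated against the two policies.

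Next I would convert this pointwise reward error into something the MLE estimator controls. The standard route: from the log-likelihood maximization $\thetah=\arg\max \mathcal{L}_R(\theta,D)$ together with Assumption~\ref{ass:finite_class} (finite $\mathcal R$) and Assumption~\ref{ass:bounded_reward} (rewards in $[0,R_{\max}]$), a concentration/MLE-generalization argument (Bernstein or a bounded-difference plus union bound over $|\mathcal R|$) yields, with probability at least $1-\delta$, a bound on the squared Hellinger or $\chi^2$ distance between the Bradley--Terry preference distributions induced by $r_{\thetah}$ and $r_{\thetas}$ of order $\log(|\mathcal R|/\delta)/n$. Passing through the sigmoid link and Assumption~\ref{ass:bounded_reward}, this translates into an $\ell_2$-type control on the reward-difference $r_{\thetas}-r_{\thetah}$ under the data (reference-pair) distribution, with the $e^{4R_{\max}}$ factor appearing from inverting the sigmoid's derivative on a range of width $2R_{\max}$. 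This is the step I expect to be the main obstacle: one must be careful that the preference data is drawn from pairs sampled from the \emph{reference} model(s) — here the relevant reference is effectively $\pirefalpha$ (or the original $\pi_{\mathrm{ref},i}$'s) — and then transfer the reward error from that sampling distribution to the target policies $\pi_{\thetas}^\gamma,\pi_{\thetah}^\gamma$. The transfer is exactly where the coverage constant enters.

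The third step is the change-of-measure via Assumption~\ref{ass:kl-coverage} applied with $\pirefalpha$ as the reference: both $\pi_{\thetas}^\gamma$ and $\pi_{\thetah}^\gamma$ lie in a reverse-KL ball around $\pirefalpha$ (since they are Gibbs tilts by a bounded reward, $\KLr(\pi_{\thetah}^\gamma\|\pirefalpha)\le \gamma R_{\max}$ by a direct estimate, hence $\le \varepsilon_{\mathrm{rkl}}$ for the appropriate threshold), so $\pi(y|x)/\pirefalpha(y|x)\le C_{\pmb\alpha,\varepsilon_{\mathrm{rkl}}}$. This lets me bound $\mbE_{Y\sim\pi_{\thetah}^\gamma}[(r_{\thetas}-r_{\thetah})^2] \le C_{\pmb\alpha,\varepsilon_{\mathrm{rkl}}}\,\mbE_{Y\sim\pirefalpha}[(r_{\thetas}-r_{\thetah})^2]$ and similarly for $\pi_{\thetas}^\gamma$, producing the factor $C_{\pmb\alpha,\varepsilon_{\mathrm{rkl}}}$ in the final bound. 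Finally I would assemble the pieces: the sub-optimality gap is at most a constant times $\gamma$ (from the regularized-objective bookkeeping and the Cauchy--Schwarz step relating the linear reward-error term to its square) times $C_{\pmb\alpha,\varepsilon_{\mathrm{rkl}}}$ times $e^{4R_{\max}}R_{\max}^2$ times $\log(|\mathcal R|/\delta)/n$, tracking constants to land exactly on the stated $128$. The $O(1/n)$ rate is inherited from the fast-rate MLE bound, and the whole argument is essentially the single-reference analysis of \citep{zhao2024sharp} instantiated at the escort reference, so the novelty is entirely in the reduction supplied by Theorem~\ref{thm: main}.
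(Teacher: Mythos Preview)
Your overall plan is right, and your first-step inequality
\[
\mathcal{J}^\gamma \le \mbE_{\pi_{\thetas}^\gamma}[r_{\thetas}-r_{\thetah}] + \mbE_{\pi_{\thetah}^\gamma}[r_{\thetah}-r_{\thetas}] = \int_{\mathcal Y} (r_{\thetas}-r_{\thetah})(\pi_{\thetas}^\gamma-\pi_{\thetah}^\gamma)(\mrd y)
\]
is exactly the content of the paper's Proposition~\ref{prop:func_derv_policy}; you derive it from optimality of $\pi_{\thetah}^\gamma$ for the plug-in objective, the paper from concavity and a functional derivative, but the resulting bound is identical. The MLE concentration (Lemma~\ref{lemma:B2}) and the coverage change-of-measure are also the same ingredients.

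The gap is your ``Cauchy--Schwarz step relating the linear reward-error term to its square.'' The $O(1/n)$ rate comes from the fact that the right-hand side above is a \emph{product} of two small quantities, and one must show that $\pi_{\thetas}^\gamma-\pi_{\thetah}^\gamma$ is itself of order $\gamma\cdot(r_{\thetas}-r_{\thetah})$; Cauchy--Schwarz on $\mbE_\pi[r_{\thetas}-r_{\thetah}]$ alone only yields $\sqrt{\mbE[(r_{\thetas}-r_{\thetah})^2]}\sim 1/\sqrt n$. The paper supplies the missing mechanism via a sensitivity computation for the softmax map (Lemma~\ref{lem:sensivitiy_policy}): by the mean-value theorem in the reward variable, $\pi_{\thetas}^\gamma-\pi_{\thetah}^\gamma=\gamma\,\pi_{r_\lambda}(1-\pi_{r_\lambda})(r_{\thetas}-r_{\thetah}-h(x))$ for some intermediate $r_\lambda$, which turns the product into $\gamma\int(r_{\thetas}-r_{\thetah}-h)^2\pi_{r_\lambda}\,\mrd y$, and \emph{then} coverage replaces $\pi_{r_\lambda}$ by $\pirefalpha$. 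A second point you gloss over is the centering $h(x)=\mbE_{Y'\sim\pirefalpha}[r_{\thetas}(x,Y')-r_{\thetah}(x,Y')]$: Lemma~\ref{lemma:B2} controls only the \emph{paired} difference $r_{\thetas}(x,Y^w)-r_{\thetas}(x,Y^l)-r_{\thetah}(x,Y^w)+r_{\thetah}(x,Y^l)$ under two i.i.d.\ reference draws, not the single-response error; centering by $h(x)$ and Jensen is what converts the squared single-response error into this pairwise quantity. If you intend to defer these two steps to \citet{zhao2024sharp}, note that the paper explicitly takes a different route here (functional derivative plus mean-value), chosen so that the same template extends to the forward-KL case.
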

Using Theorem~\ref{thm:sub-gap}, we can provide the upper bound on the optimal gap under the RLHF algorithm.
\begin{theorem}\label{thm:gap}
  Under Assumption~\ref{ass:bounded_reward}, \ref{ass:finite_class} and \ref{ass:kl-coverage}, there exists constant $C>0$ such that the following upper bound holds on optimality gap of reverse KL-regularized RLHF with probability at least $(1-\delta)$ for $\delta\in(0,1/2)$,
  \begin{equation*}
       \begin{split}
           &\mathcal{J}(\pi_{\thetas}^\gamma(\cdot|x),\pi_{\thetah}^{\gamma}(\cdot|x))\leq \gamma C_{\pmb{\alpha},\varepsilon_{\mathrm{rkl}}} 128 e^{4 R_{\max}}R_{\max}^2\frac{\log(|\mathcal{R}|/\delta)}{n}+C 8R_{\max}e^{2 R_{\max}}\sqrt{\frac{2 C_{\pmb{\alpha},\varepsilon_{\mathrm{rkl}}}\log(|\mathcal{R}|/\delta)}{n}}.
       \end{split}
   \end{equation*}
\end{theorem}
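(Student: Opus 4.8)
\textbf{Proof proposal for Theorem~\ref{thm:gap}.}

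The plan is to relate the optimality gap $\mathcal{J}(\pi_{\thetas}^\gamma,\pi_{\thetah}^{\gamma})$ to the sub-optimality gap $\mathcal{J}^{\gamma}(\pi_{\thetas}^\gamma,\pi_{\thetah}^{\gamma})$ controlled in Theorem~\ref{thm:sub-gap}, paying the price of the difference between the two objectives, namely the KL-regularization terms. First I would recall that by Theorem~\ref{thm: main} the generalized escort policy $\pirefalpha$ plays the role of an effective single reference model, so \eqref{eq:rlhf-mrm-main} is a standard single-reference reverse-KL objective and both $\pi_{\thetas}^\gamma$ and $\pi_{\thetah}^\gamma$ are Gibbs policies tilted from $\pirefalpha$ by $\gamma r_{\thetas}$ and $\gamma r_{\thetah}$ respectively. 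Writing out the definitions \eqref{eq:gap}, \eqref{eq:rlhf-obj} and \eqref{eq:sub-gap}, one gets the identity
\begin{equation*}
\mathcal{J}(\pi_{\thetas}^\gamma,\pi_{\thetah}^{\gamma})
= \mathcal{J}^{\gamma}(\pi_{\thetas}^\gamma,\pi_{\thetah}^{\gamma})
+ \tfrac{1}{\gamma}\KLr\big(\pi_{\thetah}^{\gamma}(\cdot|x)\|\pirefalpha\big)
- \tfrac{1}{\gamma}\KLr\big(\pi_{\thetas}^{\gamma}(\cdot|x)\|\pirefalpha\big),
\end{equation*}
so it suffices to bound the difference of the two KL terms. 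Since $\pi_{\thetas}^\gamma$ is the exact maximizer of $J_\gamma(\pirefalpha,\cdot)$, the first-order optimality already tells us the difference is nonnegative up to the sub-optimality term; the real work is an \emph{upper} bound.

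The key step is therefore to show that $\KLr(\pi_{\thetah}^{\gamma}\|\pirefalpha) - \KLr(\pi_{\thetas}^{\gamma}\|\pirefalpha)$ is $O(\sqrt{\log(|\mathcal{R}|/\delta)/n})$. I would do this by expressing each KL in closed form using the Gibbs structure: $\KLr(\pi_{\theta}^{\gamma}\|\pirefalpha) = \gamma\,\mbE_{Y\sim\pi_\theta^\gamma}[r_\theta(x,Y)] - \log \widehat{Z}_\theta(x)$, and then controlling the differences $\mbE_{\pi_{\thetah}^\gamma}[r_{\thetah}] - \mbE_{\pi_{\thetas}^\gamma}[r_{\thetas}]$ and $\log\widehat{Z}_{\thetah}(x) - \log\widehat{Z}_{\thetas}(x)$ in terms of $\|r_{\thetah}-r_{\thetas}\|_\infty$ or, sharper, in terms of the reward estimation error weighted by the coverage constant. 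Here I would invoke the standard MLE analysis for the Bradley--Terry model (as in \citealp{zhao2024sharp,song2024importance}): with probability $1-\delta$, the estimated reward $r_{\thetah}$ satisfies an in-distribution error bound of order $\log(|\mathcal{R}|/\delta)/n$ against $r_{\thetas}$ under the $\pirefalpha$-induced data distribution, and Assumption~\ref{ass:kl-coverage} with constant $C_{\pmb{\alpha},\varepsilon_{\mathrm{rkl}}}$ transfers this to the policies $\pi_{\thetah}^\gamma,\pi_{\thetas}^\gamma$, which lie in the reverse-KL ball around $\pirefalpha$ because the KL is bounded by $\gamma R_{\max}$. Lipschitz estimates on $y\mapsto \log(Z)$ and on $\mbE_{\pi^\gamma}[r]$ as functions of the reward (using the sensitivity $\partial\pi/\partial r$ and the boundedness of $r$ by $R_{\max}$) then convert the $O(1/n)$ squared reward error into the $O(1/\sqrt{n})$ term, producing the $8R_{\max} e^{2R_{\max}}\sqrt{2C_{\pmb{\alpha},\varepsilon_{\mathrm{rkl}}}\log(|\mathcal{R}|/\delta)/n}$ contribution, while the sub-optimality gap from Theorem~\ref{thm:sub-gap} supplies the leading $O(1/n)$ term.

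The main obstacle I anticipate is making the coverage argument rigorous and getting the exponential constants right: one must verify that $\pi_{\thetah}^\gamma$ genuinely lies in the reverse-KL ball of radius $\varepsilon_{\mathrm{rkl}}$ around $\pirefalpha$ (which requires $\varepsilon_{\mathrm{rkl}} \gtrsim \gamma R_{\max}$, or equivalently that the theorem's constant $C_{\pmb{\alpha},\varepsilon_{\mathrm{rkl}}}$ is taken for the appropriate radius), and then carefully track how the normalization factors $\widehat{Z}_{\thetah}(x),\widehat{Z}_{\thetas}(x)$ — each of which is an expectation of $\exp(\gamma r)$ lying in $[1,e^{\gamma R_{\max}}]$ — enter the ratio $\pi^\gamma(y|x)/\pirefalpha(y|x) = \exp(\gamma r)/\widehat{Z}$, since this is where factors like $e^{2R_{\max}}$ and $e^{4R_{\max}}$ are generated. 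A secondary subtlety is that the reward MLE bound is naturally stated in terms of the preference (log-loss) excess risk, so translating it into a pointwise or expected squared reward-difference bound requires the standard strong-convexity argument for the Bradley--Terry loss on the interval $[-R_{\max},R_{\max}]$, which contributes the $e^{4R_{\max}}$ curvature factor. Once these constants are pinned down, combining the two bounds via the triangle inequality gives the stated result, with the constant $C$ absorbing the universal numerical factors from the MLE concentration inequality.
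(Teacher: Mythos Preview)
Your proposal is essentially the paper's own approach: the same decomposition $\mathcal{J}=\mathcal{J}^{\gamma}+\tfrac{1}{\gamma}\big(\KLr(\pi_{\thetas}^\gamma\|\pirefalpha)-\KLr(\pi_{\thetah}^\gamma\|\pirefalpha)\big)$ (note you have the sign of the KL difference flipped), followed by a sensitivity/mean-value argument on the Gibbs policies together with the reward-MLE concentration (Lemma~\ref{lemma:B2}) and the coverage constant. The only cosmetic difference is that the paper applies the mean-value theorem directly to the KL difference as a function of the reward and then uses Cauchy--Schwarz, rather than first splitting $\KLr(\pi_\theta^\gamma\|\pirefalpha)=\gamma\,\mbE_{\pi_\theta^\gamma}[r_\theta]-\log\widehat{Z}_\theta$ and bounding the two pieces separately; both routes use the same ingredients and yield the same constants.
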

\begin{remark}[Sample Complexity]
    We can observe sample complexity of $O(1/n)$ for the sub-optimality gap and $O(1/\sqrt{n})$ for the optimality gap from Theorem~\ref{thm:sub-gap} and Theorem~\ref{thm:gap}, respectively.
\end{remark}

\section{RLHF from Multiple Reference Models via Forward KL Divergence}\label{sec_RLHF_FKL}
In this section, inspired by \citep{wangbeyond}, we extend the RLHF from multiple reference models based on reverse KL-regularization \cite{le2024multi} to forward KL-regularization. Similar, to Section~\ref{sec_RLHF_RKL}, we are focused on situations involving $K$ reference
policies $\Big\{ \pi_{\mathrm{ref},i}\Big\} _{i=1}^{K}$ where the latent reward model among all reference policies is the same. All proof details are deferred to Appendix~\ref{app_sec_RLHF_FKL}.
\subsection{Solution of RLHF under multiple reference models via FKL}\label{fsec:Exact_solution}
Inspired by \citep{le2024multi,wangbeyond}, our objective can be formulated as a multiple reference models RLHF objective,
\begin{equation}\label{eq:prefm-rlhf-1f}
\underset{\pi}{\max}\underset{Y\sim\pi(\cdot|x)}{{\mathbb{E}}}\big[r\big(x,Y\big)\big]-\frac{1}{\gamma}\Big(\sum_{i=1}^{K}\beta_{i}\KLr\big( \pi_{\mathrm{ref},i}(\cdot | x)\|\pi (\cdot | x)\big)\Big),
\end{equation}
where $\beta_{i}$ are weighting coefficients for each reference
policy and $\sum_{i=1}^{K}\beta_{i}=1$. This objective
was explored in previous studies, leading to enhancements in pure
RL problems \cite{le2022learning}. However, addressing this optimization problem in LLMs through reward
learning and RL finetuning poses similar challenges to \eqref{eq:pref1-rlhf}.
Our goal is to derive a closed-form solution for the multi-reference RLHF objective in \eqref{eq:prefm-rlhf-1f}. 

We now provide the implicit solution of the RLHF with multiple references.
\begin{theorem}\label{fthm: main}
Consider the following objective function for RLHF with multiple reference models,
\begin{equation*}
\underset{\pi}{\max}\underset{Y\sim\pi\big(\cdot|x\big)}{{\mathbb{E}}}\big[r_{\thetas}\big(x,Y\big)\big]-\frac{1}{\gamma}\Big(\sum_{i=1}^{K}\beta_i\KLr\big( \pi_{\mathrm{ref},i}(\cdot | x)\|\pi (\cdot | x)\big)\Big),
\end{equation*}
where $\sum_{i=1}^K \beta_i=1$ and $\beta_i\in(0,1)$ for $i\in[K]$. Then, the implicit solution of the multiple reference models objective function for RLHF is,
\begin{equation}
\tilde{\pi}_{\thetas}^\gamma\big(y|x\big)=\frac{\bar{\pi}_{\pmb{\beta},\mathrm{ref}}\Big(y|x\Big)}{\gamma\big(\tilde{Z}(x)-r_{\thetas}(x,y)\big)},
\end{equation}
where  $\bar{\pi}_{\pmb{\beta},\mathrm{ref}}(y|x)= \sum_{i=1}^K \beta_i\pi_{\mathrm{ref},i}(y|x),$ and $\tilde{Z}(x)$ is the solution to $\int_{y\in\mathcal{Y}} \tilde{\pi}_{\thetas}^\gamma\big(y|x\big)=1$ for a given $x\in\mathcal{X}$.
\end{theorem}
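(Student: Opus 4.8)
\textbf{Proof proposal for Theorem~\ref{fthm: main}.}

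The plan is to solve the constrained optimization problem by the method of Lagrange multipliers, treating the objective as a functional of the density $\pi(\cdot|x)$ for each fixed $x$. First I would expand the forward KL terms: since $\sum_i \beta_i \KLr(\pi_{\mathrm{ref},i}(\cdot|x)\|\pi(\cdot|x)) = \sum_i \beta_i \int \pi_{\mathrm{ref},i}(y|x)\log\frac{\pi_{\mathrm{ref},i}(y|x)}{\pi(y|x)}\,\mrd y$, the only term depending on $\pi$ is $-\sum_i \beta_i \int \pi_{\mathrm{ref},i}(y|x)\log\pi(y|x)\,\mrd y$, which by the definition $\bar{\pi}_{\pmb{\beta},\mathrm{ref}}(y|x) = \sum_{i=1}^K \beta_i \pi_{\mathrm{ref},i}(y|x)$ collapses to $-\int \bar{\pi}_{\pmb{\beta},\mathrm{ref}}(y|x)\log\pi(y|x)\,\mrd y$. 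So the objective, up to $\pi$-independent constants, becomes $\int \pi(y|x) r_{\thetas}(x,y)\,\mrd y + \frac{1}{\gamma}\int \bar{\pi}_{\pmb{\beta},\mathrm{ref}}(y|x)\log\pi(y|x)\,\mrd y$, to be maximized over $\pi(\cdot|x)\in\mathcal P(\mathcal Y)$.

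Next I would introduce a Lagrange multiplier $\lambda(x)$ for the normalization constraint $\int \pi(y|x)\,\mrd y = 1$ and form the Lagrangian $L(\pi) = \int \pi(y|x) r_{\thetas}(x,y)\,\mrd y + \frac{1}{\gamma}\int \bar{\pi}_{\pmb{\beta},\mathrm{ref}}(y|x)\log\pi(y|x)\,\mrd y - \lambda(x)\big(\int\pi(y|x)\,\mrd y - 1\big)$. Taking the functional derivative with respect to $\pi(y|x)$ (using Definition~\ref{def:flatDerivative}) and setting it to zero yields the stationarity condition $r_{\thetas}(x,y) + \frac{1}{\gamma}\frac{\bar{\pi}_{\pmb{\beta},\mathrm{ref}}(y|x)}{\pi(y|x)} - \lambda(x) = 0$, i.e. $\pi(y|x) = \frac{\bar{\pi}_{\pmb{\beta},\mathrm{ref}}(y|x)}{\gamma(\lambda(x) - r_{\thetas}(x,y))}$. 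Identifying $\lambda(x)$ with $\tilde Z(x)$ gives exactly the claimed form. I would then observe that $\tilde Z(x)$ is pinned down implicitly by substituting back into $\int \pi(y|x)\,\mrd y = 1$, which is why the solution is only implicit; one should also note $\tilde Z(x) > \max_y r_{\thetas}(x,y)$ is forced for positivity of the density, and that such a $\tilde Z(x)$ exists because the map $Z \mapsto \int \frac{\bar{\pi}_{\pmb{\beta},\mathrm{ref}}(y|x)}{\gamma(Z - r_{\thetas}(x,y))}\,\mrd y$ is continuous and strictly decreasing from $+\infty$ to $0$ on $(\max_y r_{\thetas}(x,y), \infty)$ (this parallels the one-reference case and the properties of $\tilde Z_{\thetah}(x)$ alluded to in App.~\ref{app_sec_RLHF_FKL}).

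The main obstacle is establishing that this stationary point is actually the \emph{maximizer} rather than merely a critical point. For this I would verify concavity of the reduced objective $\pi \mapsto \int \pi r_{\thetas}\,\mrd y + \frac{1}{\gamma}\int \bar{\pi}_{\pmb{\beta},\mathrm{ref}}\log\pi\,\mrd y$ over the convex set of probability densities: the first term is linear in $\pi$ and the second is a nonnegative combination ($\frac{1}{\gamma}\bar{\pi}_{\pmb{\beta},\mathrm{ref}}(y|x) \ge 0$ pointwise) of the concave functionals $\pi \mapsto \log\pi(y|x)$, hence concave; strict concavity on the support of $\bar{\pi}_{\pmb{\beta},\mathrm{ref}}$ gives uniqueness there. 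A secondary technical point is handling the boundary/support issues — the $\log\pi$ term enforces $\pi(y|x)>0$ wherever $\bar{\pi}_{\pmb{\beta},\mathrm{ref}}(y|x)>0$, so the optimum lies in the relative interior and the Lagrange/KKT stationarity condition is indeed necessary and (by concavity) sufficient. Assembling these pieces — reduction, Lagrangian stationarity, existence of $\tilde Z(x)$, and the concavity argument for global optimality — completes the proof.
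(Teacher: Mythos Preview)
Your proposal is correct and follows essentially the same route as the paper: reduce the $\pi$-dependent part of the multi-reference forward-KL penalty to a single term involving the mixture $\bar{\pi}_{\pmb{\beta},\mathrm{ref}}$, observe concavity, and derive the stationarity condition via a Lagrange multiplier for the normalization constraint. The paper packages the reduction step through Lemma~\ref{flem: average distribution} (expressing $\sum_i\beta_i\KLr(Q_i\|P)$ as $\KLr(\sum_i\beta_i Q_i\|P)$ plus $\pi$-independent entropy terms), whereas you do it directly by expanding the KL and collecting the $\log\pi$ terms; these are the same computation, and your version is arguably more direct. Your discussion of concavity, existence/uniqueness of $\tilde Z(x)$ via monotonicity, and the support argument is more detailed than the paper's proof, which simply asserts concavity and writes down the first-order condition.
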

Using Theorem~\ref{fthm: main}, we can consider the following optimization problem for forward KL-regularized RLHF under multiple reference model scenario,
\begin{equation}\label{feq:rlhf-mrm-main}
\mathbb{E}_{Y\sim\pi(\cdot|x)}\big[r_{\thetas}(x,Y)\big]-\frac{1}{\gamma}\KLr\big( \bar{\pi}_{\pmb{\beta},\mathrm{ref}}(\cdot|x)\|\pi (\cdot | x)\big),
\end{equation}
where $\bar{\pi}_{\pmb{\beta},\mathrm{ref}}(y|x)$ is defined in Theorem~\ref{fthm: main} as weighted reference policy. The algorithm of forward KL-regularized RLHF with two reference models is presented in App.~\ref{app:alg}.

\subsection{Main Results for RLHF with FKL}
This section presents our core theoretical analysis of forward KL-regularized RLHF under the multiple reference model setting. We begin by leveraging KL divergence's convex properties to establish an upper bound on the sub-optimality gap. Throughout this section, we consider $\tilde{\pi}_{\thetah}^{\gamma}(y|x)=\frac{\pirefbetay}{\gamma(\tilde{Z}_{\thetah}(x)-r_{\thetah}(x,y))}$ and $\tilde{\pi}_{\thetas}^{\gamma}(y|x)=\frac{\pirefbetay}{\gamma(\tilde{Z}_{\thetas}(x)-r_{\thetas}(x,y))}$. Furthermore, we assume that Assumption~\ref{ass:fkl-coverage} holds under $\pirefbetay$ as reference policy with $C_{\pmb{\beta},\varepsilon_{\mathrm{rkl}}}$.
First, we derive an upper bound for the sub-optimality gap in the multiple reference forward KL-regularized RLHF setting.
\begin{theorem}\label{thm:sub-gap-fkl}
   Under Assumption~\ref{ass:bounded_reward}, \ref{ass:finite_class} and \ref{ass:kl-coverage}, the following upper bound holds on the sub-optimality gap with probability at least $(1-\delta)$ for $\delta\in(0,1)$,
   \begin{equation}
       \begin{split}
           &\tilde{\mathcal{J}}^{\gamma}(\tilde{\pi}_{\thetas}^\gamma(\cdot|x),\tilde{\pi}_{\thetah}^{\gamma}(\cdot|x))\leq 16 C_{\pmb{\beta},\varepsilon_{\mathrm{fkl}}}  e^{2 R_{\max}}R_{\max}\sqrt{\frac{\log(|\mathcal{R}|/\delta)}{n}}.
       \end{split}
   \end{equation}
\end{theorem}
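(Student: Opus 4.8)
The plan is to follow the same route as Theorem~\ref{thm:sub-gap} for the reverse-KL case, but to replace its second-order (smoothness/Bregman) step --- which is what produces the $O(1/n)$ rate there --- by a purely first-order convexity argument, which is all that is available in the forward-KL case and is responsible for the slower $O(1/\sqrt n)$ rate here. First I would use Theorem~\ref{fthm: main} to rewrite the $K$-reference forward-KL objective as a single-reference objective with the mixture reference $\pirefbetacdot=\sum_{i=1}^K\beta_i\pi_{\mathrm{ref},i}(\cdot|x)$; then $\tilde\pi_{\thetas}^\gamma$ and $\tilde\pi_{\thetah}^\gamma$ are, respectively, the maximizers of $g(\pi;r_{\thetas})$ and $g(\pi;r_{\thetah})$, where $g(\pi;r):=\mathbb{E}_{Y\sim\pi(\cdot|x)}[r(x,Y)]-\frac{1}{\gamma}\KLr(\pirefbetacdot\|\pi(\cdot|x))$. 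Writing $\tilde f(r):=\max_\pi g(\pi;r)=g(\tilde\pi_r^\gamma;r)$ for the optimal value, note that $\tilde f$ is convex in $r$ (a pointwise maximum of functions affine in $r$) and, by the first-order optimality condition, its functional derivative at $r$ is the optimal policy $\tilde\pi_r^\gamma$ itself.

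Next, since $\tilde\pi_{\thetas}^\gamma$ is the maximizer of $g(\cdot;r_{\thetas})$, the sub-optimality gap $\tilde{\mathcal{J}}^\gamma=g(\tilde\pi_{\thetas}^\gamma;r_{\thetas})-g(\tilde\pi_{\thetah}^\gamma;r_{\thetas})$ is nonnegative, and inserting $\pm g(\tilde\pi_{\thetas}^\gamma;r_{\thetah})$ and $\pm g(\tilde\pi_{\thetah}^\gamma;r_{\thetah})$ and discarding the middle difference $g(\tilde\pi_{\thetas}^\gamma;r_{\thetah})-g(\tilde\pi_{\thetah}^\gamma;r_{\thetah})\le 0$ (by optimality of $\tilde\pi_{\thetah}^\gamma$ for $r_{\thetah}$, and using that the KL regularizer is reward-independent so the regularization terms cancel) bounds the gap by $\mathbb{E}_{Y\sim\tilde\pi_{\thetas}^\gamma(\cdot|x)}[r_{\thetas}(x,Y)-r_{\thetah}(x,Y)]-\mathbb{E}_{Y\sim\tilde\pi_{\thetah}^\gamma(\cdot|x)}[r_{\thetas}(x,Y)-r_{\thetah}(x,Y)]$. (Equivalently, this quantity is exactly the Bregman divergence of $\tilde f$ between $r_{\thetas}$ and $r_{\thetah}$, bounded by convexity by the pairing of the difference of functional derivatives with $r_{\thetas}-r_{\thetah}$.) I would then argue that $\tilde\pi_{\thetas}^\gamma$ and $\tilde\pi_{\thetah}^\gamma$ lie in a forward-KL ball around $\pirefbetacdot$ (comparing each optimal value to the value at $\pirefbetacdot$ gives $\KLr(\pirefbetacdot\|\tilde\pi^\gamma)\le\gamma R_{\max}$), so that Assumption~\ref{ass:fkl-coverage} applies with constant $C_{\pmb{\beta},\varepsilon_{\mathrm{fkl}}}$; a change of measure to $\pirefbetacdot$ followed by Cauchy--Schwarz then bounds the gap by $2C_{\pmb{\beta},\varepsilon_{\mathrm{fkl}}}\sqrt{\mathbb{E}_{Y\sim\pirefbetacdot}[(r_{\thetas}(x,Y)-r_{\thetah}(x,Y))^2]}$.

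Finally, I would bound $\mathbb{E}_{Y\sim\pirefbetacdot}[(r_{\thetas}-r_{\thetah})^2]$ by the Bradley--Terry MLE generalization guarantee used in Theorem~\ref{thm:sub-gap} (following \citet{zhao2024sharp}), which under Assumptions~\ref{ass:bounded_reward} and~\ref{ass:finite_class} yields an in-expectation squared reward error of order $e^{2R_{\max}}R_{\max}^2\log(|\mathcal{R}|/\delta)/n$ with probability at least $1-\delta$, and then collect constants. The main obstacle is precisely the step that loses a power: because the forward-KL optimal policy $\tilde\pi_r^\gamma(y|x)=\pirefbetay/(\gamma(\tilde Z(x)-r(x,y)))$ has sensitivity $\partial\tilde\pi/\partial r$ that is not uniformly bounded --- it degenerates as $r(x,y)\uparrow\tilde Z(x)$ --- the value functional $\tilde f$ is not smooth, so, unlike the reverse-KL case where the Hessian of the log-partition functional is controlled, one cannot upgrade the first-order bound to a second-order one and must absorb the full $\sqrt{\cdot}$ of the MLE error. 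A secondary technical point is the normalization of $r_{\thetas}-r_{\thetah}$, since the Bradley--Terry MLE natively controls reward differences $r(x,y)-r(x,y')$ rather than $\mathbb{E}_{Y\sim\pirefbetacdot}[(r_{\thetas}-r_{\thetah})^2]$ directly; one handles this via the usual per-prompt shift-invariance of the reward parametrization (or by working with a centered version of the reward error).
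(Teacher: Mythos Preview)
Your proposal is correct and follows essentially the same route as the paper. The paper obtains the key inequality $\tilde{\mathcal{J}}^\gamma \le \int_{\mathcal{Y}}(r_{\thetas}-r_{\thetah})(\tilde\pi_{\thetas}^\gamma-\tilde\pi_{\thetah}^\gamma)(\mrd y)$ via concavity of $\tilde J_\gamma$ in $\pi$ together with the explicit form of $\tilde\pi_{\thetah}^\gamma$ (its Proposition~\ref{prop:func_derv_policy_fkl}), whereas you reach the identical bound through the add--subtract/performance-difference argument (equivalently, convexity of the optimal value in $r$); thereafter both proofs are the same---centering by an $h(x)$, change of measure to $\pirefbetacdot$ with Cauchy--Schwarz, the coverage assumption, and the MLE concentration of Lemma~\ref{lemma:B2}.
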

Using Theorem~\ref{thm:sub-gap-fkl}, we can provide the upper bound on the optimal gap under the multiple reference forward KL-regularized RLHF setting.
\begin{theorem}\label{thm:gap-fkl}
  Under Assumption~\ref{ass:bounded_reward}, \ref{ass:finite_class} and \ref{ass:kl-coverage}, the following upper bound holds on optimality gap of the multiple reference forward KL-regularized RLHF algorithm with probability at least $(1-\delta)$ for $\delta\in(0,1)$,
  \begin{equation*}
       \begin{split}
           &\tilde{\mathcal{J}}(\pif_{\thetas}^\gamma(\cdot|x),\pif_{\thetah}^{\gamma}(\cdot|x))\leq 16 C_{\pmb{\beta},\varepsilon_{\mathrm{fkl}}}  e^{2 R_{\max}}R_{\max}\sqrt{\frac{\log(|\mathcal{R}|/\delta)}{n}}+\frac{\max\big(|\log(C_{\pmb{\beta},\varepsilon_\mathrm{fkl}})|,\log(\gamma R_{\max}+1)\big)}{\gamma}.
       \end{split}
   \end{equation*}
\end{theorem}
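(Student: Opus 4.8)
The idea is to express the optimality (value‑function) gap $\tilde{\mathcal{J}}(\pif_{\thetas}^\gamma(\cdot|x),\pif_{\thetah}^{\gamma}(\cdot|x))=J(\pif_{\thetas}^\gamma(\cdot|x))-J(\pif_{\thetah}^{\gamma}(\cdot|x))$, with $J(\pi):=\mbE_{Y\sim\pi(\cdot|x)}[r_{\thetas}(x,Y)]$, as the sub‑optimality gap of Theorem~\ref{thm:sub-gap-fkl} plus an explicit forward‑KL regularization residual. Since $\tilde{J}_\gamma(\pirefbetacdot,\pi(\cdot|x))=J(\pi)-\tfrac1\gamma\KLr(\pirefbetacdot\|\pi(\cdot|x))$, subtracting this identity at $\pi=\pif_{\thetas}^\gamma$ and $\pi=\pif_{\thetah}^\gamma$ gives
\[
\tilde{\mathcal{J}}(\pif_{\thetas}^\gamma(\cdot|x),\pif_{\thetah}^\gamma(\cdot|x))=\tilde{\mathcal{J}}^\gamma(\pif_{\thetas}^\gamma(\cdot|x),\pif_{\thetah}^\gamma(\cdot|x))+\tfrac1\gamma\KLr(\pirefbetacdot\|\pif_{\thetas}^\gamma(\cdot|x))-\tfrac1\gamma\KLr(\pirefbetacdot\|\pif_{\thetah}^\gamma(\cdot|x)).
\]
The last term is nonnegative, so dropping it yields $\tilde{\mathcal{J}}\le\tilde{\mathcal{J}}^\gamma+\tfrac1\gamma\KLr(\pirefbetacdot\|\pif_{\thetas}^\gamma(\cdot|x))$.

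\textbf{Bounding the residual.} By Theorem~\ref{fthm: main} the solution satisfies $\pirefbetay/\pif_{\thetas}^\gamma(y|x)=\gamma(\tilde Z_{\thetas}(x)-r_{\thetas}(x,y))$, hence
\[
\KLr(\pirefbetacdot\|\pif_{\thetas}^\gamma(\cdot|x))=\mbE_{Y\sim\pirefbetacdot}\big[\log\big(\gamma(\tilde Z_{\thetas}(x)-r_{\thetas}(x,Y))\big)\big].
\]
It remains to sandwich $\gamma(\tilde Z_{\thetas}(x)-r_{\thetas}(x,y))$ uniformly in $y$. Feasibility forces $\tilde Z_{\thetas}(x)>r_{\thetas}(x,y)$, and combining the normalization $\sum_y \pirefbetay/(\tilde Z_{\thetas}(x)-r_{\thetas}(x,y))=\gamma$ with $0\le r_{\thetas}\le R_{\max}$ (Assumption~\ref{ass:bounded_reward}) and the monotonicity/boundedness of $\tilde Z$ from App.~\ref{app_sec_RLHF_FKL} gives $\tilde Z_{\thetas}(x)\le R_{\max}+1/\gamma$, so $\gamma(\tilde Z_{\thetas}(x)-r_{\thetas}(x,y))\le \gamma R_{\max}+1$. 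For the lower bound, the forward‑KL coverage condition (Assumption~\ref{ass:fkl-coverage}) applied to $\pif_{\thetas}^\gamma$ yields $\gamma(\tilde Z_{\thetas}(x)-r_{\thetas}(x,y))=\pirefbetay/\pif_{\thetas}^\gamma(y|x)\ge 1/C_{\pmb{\beta},\varepsilon_{\mathrm{fkl}}}$. Consequently $\big|\log\big(\gamma(\tilde Z_{\thetas}(x)-r_{\thetas}(x,y))\big)\big|\le\max\big(|\log C_{\pmb{\beta},\varepsilon_{\mathrm{fkl}}}|,\log(\gamma R_{\max}+1)\big)$ for every $y$, so $\tfrac1\gamma\KLr(\pirefbetacdot\|\pif_{\thetas}^\gamma(\cdot|x))\le\tfrac1\gamma\max\big(|\log C_{\pmb{\beta},\varepsilon_{\mathrm{fkl}}}|,\log(\gamma R_{\max}+1)\big)$.

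\textbf{Conclusion and main obstacle.} Substituting the high‑probability bound on $\tilde{\mathcal{J}}^\gamma$ from Theorem~\ref{thm:sub-gap-fkl} into the first display, on the same event of probability at least $1-\delta$, produces exactly the stated inequality. The decomposition and the final substitution are routine; the crux is the residual estimate, since $\tilde Z_{\thetas}(x)$ is only defined implicitly as the root of a nonlinear equation. Obtaining the clean two‑sided envelope $1/C_{\pmb{\beta},\varepsilon_{\mathrm{fkl}}}\le \gamma(\tilde Z_{\thetas}(x)-r_{\thetas}(x,y))\le \gamma R_{\max}+1$ — and verifying that $\pif_{\thetas}^\gamma$ (and $\pif_{\thetah}^\gamma$) indeed lie in the forward‑KL ball so that Assumption~\ref{ass:fkl-coverage} is applicable — rests on the analysis of $\tilde Z$ collected in App.~\ref{app_sec_RLHF_FKL}, and I expect this to be the main technical difficulty.
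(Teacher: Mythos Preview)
Your approach is correct and very close to the paper's. The paper uses the same decomposition
\[
\tilde{\mathcal{J}}=\tilde{\mathcal{J}}^\gamma+\tfrac1\gamma\big[\KLr(\pirefbetacdot\|\pif_{\thetas}^\gamma)-\KLr(\pirefbetacdot\|\pif_{\thetah}^\gamma)\big]
\]
(in fact its displayed formula has the two KL terms swapped, an apparent sign typo), then rewrites each KL as $\mbE_{\pirefbetacdot}[\log(\gamma(\tilde Z-r))]$ and bounds the \emph{difference} by the sum of the two absolute values, invoking Lemma~\ref{lem:prop-Z} for the upper envelope $\gamma R_{\max}+1$ and the coverage constant for the lower envelope $1/C_{\pmb{\beta},\varepsilon_{\mathrm{fkl}}}$, before plugging in Theorem~\ref{thm:sub-gap-fkl}. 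Your route differs only in that you drop the subtracted (nonnegative) KL term outright instead of applying a triangle inequality to the difference; this is slightly cleaner and avoids the implicit factor of $2$ that the paper's ``sum of absolute values'' step would incur. The key ingredients --- the implicit formula $\pirefbetay/\pif^\gamma=\gamma(\tilde Z-r)$, the two-sided envelope on $\gamma(\tilde Z-r)$, and the sub-optimality bound --- are identical in both arguments. Your flagged obstacle (that $\pif_{\thetas}^\gamma$ must lie in the forward-KL ball for Assumption~\ref{ass:fkl-coverage} to apply) is indeed glossed over in the paper as well; since $\pif_{\thetas}^\gamma$ maximizes $\tilde J_\gamma$ one has $\KLr(\pirefbetacdot\|\pif_{\thetas}^\gamma)\le\KLr(\pirefbetacdot\|\pirefbetacdot)+\gamma R_{\max}=\gamma R_{\max}$, which furnishes the needed ball radius.
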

\begin{remark}[Sample Complexity]
    Choosing $\gamma=n$, we have sample complexity $O(1/\sqrt{n})$ on optimality gap from Theorem~\ref{thm:gap-fkl}. We can also observe the sample complexity of $O(1/\sqrt{n})$ for the sub-optimality gap.
\end{remark}
\section{Discussion}\label{sec:disc}
In this section we provide theoretical comparison with single reference model in terms of sample complexity, $R_{\max}$ and coverage constant. We also extend our framework to DPO. Further discussion, e.g., coverage assumption and comparison of RKL with FKL, are provided in App.~\ref{app:further_dis}.

\begin{table*}[t]
\centering
\caption{Comparison of Various Works in Theoretical Foundation of RLHF: Key features include support for RKL sub-optimality gap, RKL optimality gap, FKL sub-optimality gap, and FKL optimality gap and their Sample Complexities for each scenario. }
\resizebox{\textwidth}{!}{
\begin{tabular}{ccccc}
\toprule
\textbf{Work} & \textbf{\makecell{RKL Sub-optimality Gap\\(Sample Complexity)}} & \textbf{\makecell{RKL Optimality Gap\\(Sample Complexity)}} & \textbf{\makecell{FKL Sub-optimality Gap\\(Sample Complexity)}} & \textbf{\makecell{FKL Optimality Gap\\(Sample Complexity)}} \\
\midrule
\citet{song2024importance} & \makecell{\checkmark\\$O(1/\sqrt{n})$} & \xmark & \xmark & \xmark\\
\cmidrule(lr){1-5}
\citet{zhao2024sharp} & \makecell{\checkmark\\$O(1/n)$} & \xmark & \xmark & \xmark\\
\cmidrule(lr){1-5}
\citet{chang2024dataset} & \xmark & \makecell{\checkmark\\$O(1/\sqrt{n})$} & \xmark & \xmark\\
\cmidrule(lr){1-5}
\citet{xiong2024iterative} & \makecell{\checkmark\\$O(1/\sqrt{n})$}  & \xmark & \xmark & \xmark\\
\cmidrule(lr){1-5}
\textbf{Our Work} & \makecell{\checkmark\\$O(1/n)$} & \makecell{\checkmark\\$O(1/\sqrt{n})$} & \makecell{\checkmark\\$O(1/\sqrt{n})$} & \makecell{\checkmark\\$O(1/\sqrt{n})$} \\
\bottomrule
\end{tabular}}
\label{tab:comparison}
\end{table*}

\textbf{Theoretical Comparison with Single-Reference Models:} Our theoretical results extend to the single-reference model setting, enabling comparison with existing work in this domain. The RKL-regularized RLHF framework and its sub-optimality gap have been investigated by \citet{song2024importance} and \citet{zhao2024sharp}, who established sample complexity bounds. \citet{song2024importance} derived a sub-optimality gap sample complexity of $O(1/\sqrt{n})$, which \citet{zhao2024sharp} later improved to $O(1/n)$, demonstrating the effectiveness of RKL regularization. Note that, in \citep{zhao2024sharp}, it is shown that when the error tolerance $\epsilon$ is sufficiently small, the sample complexity follows an $O(1/\epsilon)$ relationship. This corresponds to $O(1/n)$, where $n$ represents the dataset size. In comparison with \citep{zhao2024sharp}, we proposed an approach based on functional derivative and convexity of KL divergence. Our approach is more general and can be applied to the forward KL-regularized RLHF framework. There are also some works on similar algorithms to RLHF. Additionally, \citet{chang2024dataset} proposed an algorithm integrating offline and online preference datasets in RLHF, analyzing its optimality gap sample complexity under RKL regularization. The general reverse KL-regularized RLHF framework under general preference models is studied by  \citet{xiong2024iterative} and a sample complexity of $O(1/\sqrt{n})$ for sub-optimality gap is derived. To the best of our knowledge, the sample complexity of the optimality gap and sub-optimality gap for forward KL-regularization have not been studied in the literature. Furthermore, in \citep{huang2024correcting}, KL-divergence and $\chi^2$-divergence are considered as regularizers, and the sample complexity on optimality gap for $\chi^2$-DPO are studied. We summarized our comparison with different works related to the theoretical study of RLHF in Table~\ref{tab:comparison}.

\textbf{Comparison in terms of $R_{\max}$ and coverage constant:} In Table~\ref{tab:comparison}, we compared different methods in terms of their sample complexity bounds. Regarding the dependency on $R_{\max}$, we observe that all existing bounds for RLHF with RKL regularization scale as $O(\exp(R_{\max}))$~\citep{song2024importance,zhao2024sharp,chang2024dataset,xiong2024iterative}. This exponential dependency arises directly from Lemma~\ref{lemma:B2}, reflecting the inherent non-linearity introduced by the sigmoid function in the Bradley--Terry model. Additionally, concerning the coverage constant, the upper bounds under RKL regularization scale as $O(C_{\pmb{\alpha},\varepsilon_{\mathrm{rkl}}})$, highlighting the significant impact of coverage parameters on optimal and suboptimal regret bounds.

\textbf{Extension to DPO:} Our current results for reverse KL-regularized RLHF and forward KL-regularized RLHF can be extended to the DPO framework \citep{rafailov2023direct}. In particular, we can derive the following DPO function for reverse KL-regularized under multiple reference models scenario using Theorem~\ref{thm: main},
\begin{align}\label{eq:DPO_RKL}
\pi_{\mathrm{DPO},\thetah}^{\mathrm{RKL}}&=\mathop{\arg\max}_{\pi_{\theta}\in\Pi}\sum_{i=1}^n 
    \log\Big[ \sigma\Big(\frac{1}{\gamma}\log(\frac{\pi_{\theta}(y^w_i|x_i)}{\pi_{\pmb{\alpha},\mathrm{ref}}(y^w_i|x_i)})-\frac{1}{\gamma}\log(\frac{\pi_{\theta}(y^l_i|x_i)}{\pi_{\pmb{\alpha},\mathrm{ref}}(y^l_i|x_i)}) \Big)\Big].
\end{align}
For forward KL-regularized DPO, we can combine Theorem~\ref{fthm: main} with the approach outlined in \citep{wangbeyond}, to derive the DPO function, 
\begin{align}\label{eq:DPO_FKL}
\pi_{\mathrm{DPO},\thetah}^{\mathrm{FKL}}&=\mathop{\arg\max}_{\pi_{\theta}\in\Pi}\sum_{i=1}^n 
    \log\Big[ \sigma\Big(\frac{1}{\gamma}\frac{\pi_{\pmb{\beta},\mathrm{ref}}(y^l_i|x_i)}{\pi_{\theta}(y^l_i|x_i)}-\frac{1}{\gamma}\frac{\pi_{\pmb{\beta},\mathrm{ref}}(y^w_i|x_i)}{\pi_{\theta}(y^w_i|x_i)}\Big)\Big].
\end{align}

Furthermore, we derive optimality gap under bounded implicit reward assumptions in App.~\ref{app:DPO}. 

\section{Experiments}\label{sec:experiments}

\begin{figure}[t]
    \centering
    \begin{subfigure}[b]{0.45\linewidth}
        \centering
        \includegraphics[width=\linewidth]{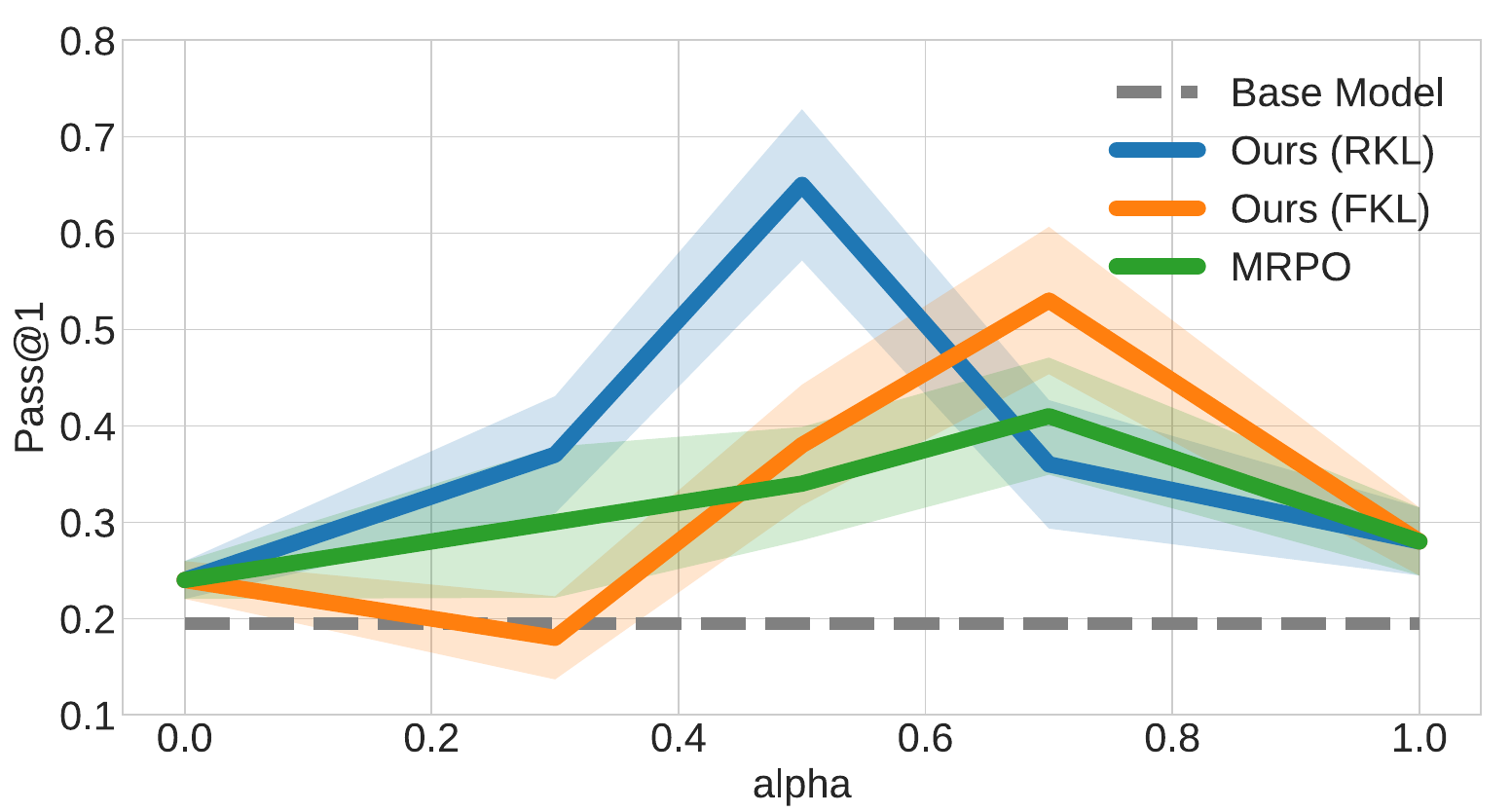}
        \caption{Mean and 95\% CI for pass@1 performance on GSM8K using policy gradient algorithms. }
        \label{fig:exp1}
    \end{subfigure}
    ~
    \begin{subfigure}[b]{0.45\linewidth}
        \centering
        \includegraphics[width=\linewidth]{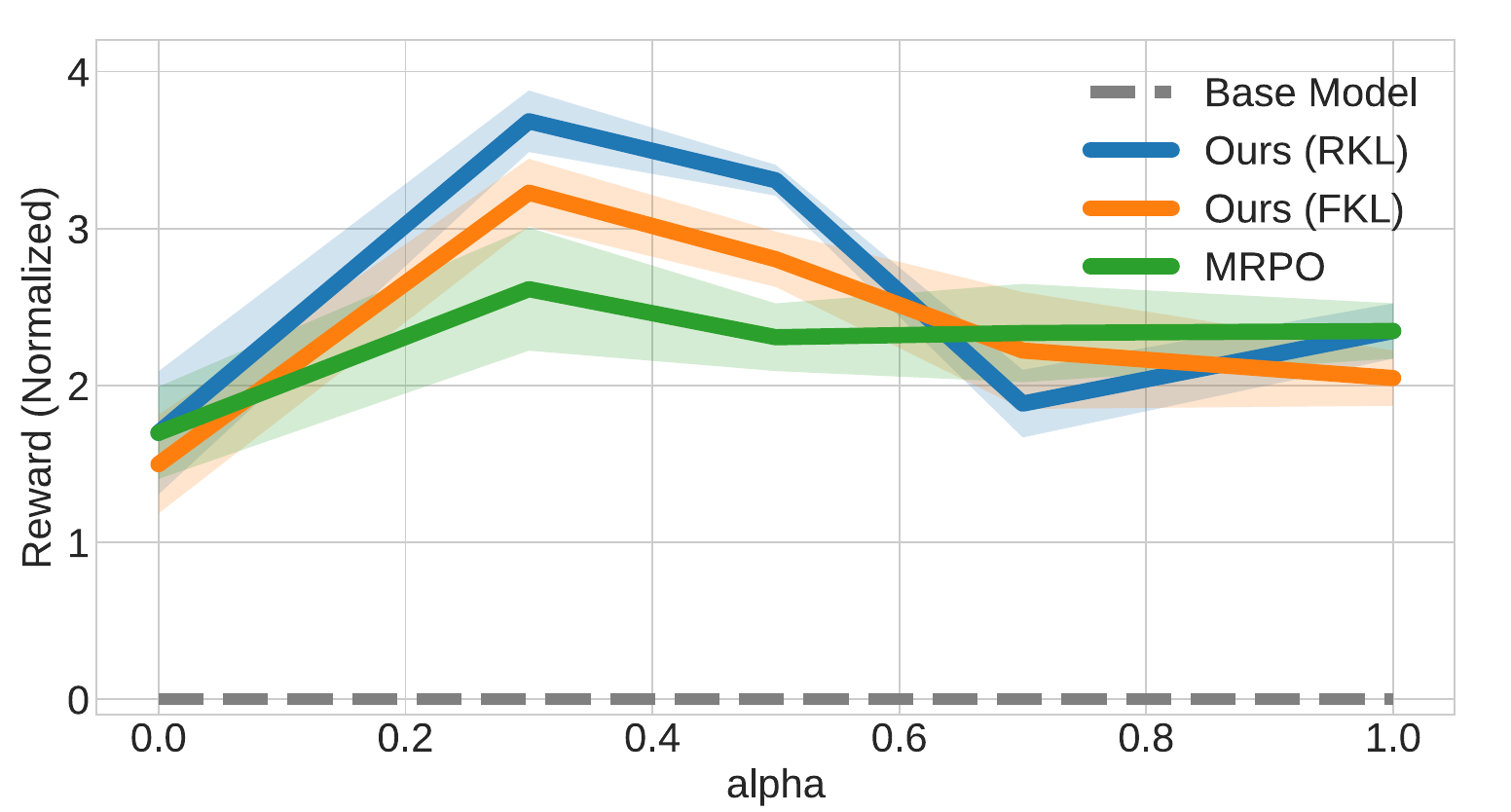}
        \caption{Mean normalized reward and 95\% CI on the UltraFeedback dataset, using offline RLHF. }
        \label{fig:exp2}
    \end{subfigure}
    \caption{In both online and offline RL, our analytical RKL objective outperforms both the MRPO approximation and single reference objective ($\alpha=0$).}
    \label{fig:combined}
\end{figure}
\vspace{-1em}
To support our theoretical findings, we conducted two sets of experiments: one using an online policy gradient algorithm, and another using an offline RLHF algorithm. Together, these experiments are designed to cover the primary use cases of KL-constrained RL optimization in the LLM post-training setting.
Our experiments address two goals:

\begin{enumerate}
    \item Evaluating the benefits of using multiple reference models versus a single reference.
    \item Comparing our exact analytical solution to the approximation proposed by \citet{le2024multi}.
\end{enumerate}

\textbf{Online RL.} Since our theory applies to general KL-constrained RL - not only to settings with learned reward models, as in standard RLHF - we ran an experiment on the GSM8K dataset \citep{cobbe2021training} using GRPO \citep{shao2024deepseekmath}, a policy gradient method. This setup uses a solution verifier as the reward model, avoiding complications from learned rewards and letting us focus on the effect of multiple reference models during training.
We trained the instruction-tuned 0.5B model from the Qwen 2.5 family \citep{yang2024qwen2}, and used the 1.5B math-specialized model from the same family as a second reference. For each value of $\alpha \in \{0.0, 0.3, 0.5, 0.7, 1.0\}$, for FKL we consider $\beta=\alpha$, we trained models using the following regularization: (1) \textit{Normalized geometric mean} as in our multi-reference RKL objective, (2) \textit{Arithmetic mean} as an approximation of our multi-reference FKL objective, and (3) \textit{MRPO approximation} \citep{le2024multi} of the multi-reference RKL objective.

 \textbf{Offline RL.} This experiment compares our exact analytical solution to MRPO \citep{le2024multi} in an offline RLHF setting. We trained the instruction-tuned 0.5B Qwen 2.5 model using the UltraFeedback dataset \citep{cui2023ultrafeedback}, with the 1.5B Qwen 2.5 model as the second reference. This can be seen as a combination of knowledge distillation \citep{gu2023minillm} and RLHF. Evaluation was performed using the Skywork-Reward-Llama-3.1-8B-v0.2 reward model \citep{liu2024skywork}.
Here again we compared three training algorithms: (1) \textit{DPO} \citep{rafailov2023direct} using the normalized geometric mean of reference policies, (2) \textit{DPO based on FKL divergence} as proposed by \citep{wangbeyond} using the arithmetic mean, and (3) \textit{MRPO} version of DPO \citep{le2024multi}.

To validate that our algorithm works at a larger scale, we also applied it to the Qwen 2.5 7B model. We trained this model on the UltraFeedback \cite{cui2023ultrafeedback} dataset, and evaluated the trained model's win rate against the preferred answer using GPT-4o as LLM-as-a-Judge. We follow the standard protocol of first performing SFT on the dataset before the DPO step. As a second reference, we used Qwen 2.5 14B Instruct. Due to the increased compute demands, we only experimented with $\alpha=0.5$. 

\begin{table}[h]
\centering
\label{table:dpo_exp}

\caption{Win rate against the preferred answer from the Ultrafeedback dataset. Combining both references leads to a substantial gain in performance.}
\begin{tabular}{ll}
\hline
\textbf{Model}                       & \textbf{Win rate}  \\ \hline
Base model (Qwen 2.5 7B)                       & 8.6\%           \\
SFT model                                      & 43.4\%          \\
DPO (single reference – SFT model)             & 56.4\%          \\
DPO (single reference – 14B model)            & 59.8\%          \\
\textbf{Ours (DPO with both references)} & \textbf{66.1\%} \\ \hline
\end{tabular}
\vspace{5pt}

\end{table}
\vspace{-10pt}

For more details on the experimental setting and discussion on the computational aspects of using multi-reference, see Appendix \ref{app:exp}.

\section{Conclusion and Future Works}\label{sec_conclusion}
This work develops theoretical foundations for two Reinforcement Learning from Human Feedback (RLHF) frameworks: reverse KL-regularized and forward KL-regularized RLHF. We derive solutions for both frameworks under multiple reference scenarios and establish their sample complexity bounds. Our analysis reveals that while both algorithms share identical sample complexity for the optimality gap, the reverse KL-regularized RLHF achieves superior sample complexity for the sub-optimality gap.

The main limitation of our work lies in the assumption of bounded reward functions where some solutions are proposed to solve this limitation in App~\ref{app:ass_dis}. Promising directions for future work include: (a) extending our analysis to multiple-reference, KL-regularized RLHF with unbounded rewards or sub-Gaussian reward functions; (b) following \citep{wangbeyond}, investigating multiple-reference RLHF regularized by general $f$-divergences; (c) following \citep{sharifnassab2024soft}, extending the analysis to preference models beyond the Bradley–Terry (BT) model; (d) following \citep{xu2025doubly}, combining our approach with doubly robust preference-optimization algorithms to mitigate model misspecification; and (e) extending inference-time algorithms (e.g., \citep{mroueh2024information,beirami2024theoretical,aminian2025best}) to the multiple-reference setting using our approach.
\newpage
\section*{Acknowledgment}
The authors would like to thank Ahmad Beirami for his valuable comments and insightful feedback on our work. Gholamali Aminian acknowledges the support of the UKRI Prosperity Partnership Scheme (FAIR) under EPSRC Grant EP/V056883/1 and the Alan Turing Institute. Amir R. Asadi is supported
by Leverhulme Trust grant ECF-2023-189 and Isaac Newton Trust grant 23.08(b). 


\bibliography{Refs}
\bibliographystyle{plainnat}

\newpage
\section*{NeurIPS Paper Checklist}

\begin{enumerate}

\item {\bf Claims}
    \item[] Question: Do the main claims made in the abstract and introduction accurately reflect the paper's contributions and scope?
    \item[] Answer: \answerYes{} 
    \item[] Justification: See Section~\ref{sec:intro} and Abstract.
    \item[] Guidelines:
    \begin{itemize}
        \item The answer NA means that the abstract and introduction do not include the claims made in the paper.
        \item The abstract and/or introduction should clearly state the claims made, including the contributions made in the paper and important assumptions and limitations. A No or NA answer to this question will not be perceived well by the reviewers. 
        \item The claims made should match theoretical and experimental results, and reflect how much the results can be expected to generalize to other settings. 
        \item It is fine to include aspirational goals as motivation as long as it is clear that these goals are not attained by the paper. 
    \end{itemize}

\item {\bf Limitations}
    \item[] Question: Does the paper discuss the limitations of the work performed by the authors?
    \item[] Answer: \answerYes{} 
    \item[] Justification: See Conclusion and future works section (Section~\ref{sec_conclusion}).
    \item[] Guidelines:
    \begin{itemize}
        \item The answer NA means that the paper has no limitation while the answer No means that the paper has limitations, but those are not discussed in the paper. 
        \item The authors are encouraged to create a separate "Limitations" section in their paper.
        \item The paper should point out any strong assumptions and how robust the results are to violations of these assumptions (e.g., independence assumptions, noiseless settings, model well-specification, asymptotic approximations only holding locally). The authors should reflect on how these assumptions might be violated in practice and what the implications would be.
        \item The authors should reflect on the scope of the claims made, e.g., if the approach was only tested on a few datasets or with a few runs. In general, empirical results often depend on implicit assumptions, which should be articulated.
        \item The authors should reflect on the factors that influence the performance of the approach. For example, a facial recognition algorithm may perform poorly when image resolution is low or images are taken in low lighting. Or a speech-to-text system might not be used reliably to provide closed captions for online lectures because it fails to handle technical jargon.
        \item The authors should discuss the computational efficiency of the proposed algorithms and how they scale with dataset size.
        \item If applicable, the authors should discuss possible limitations of their approach to address problems of privacy and fairness.
        \item While the authors might fear that complete honesty about limitations might be used by reviewers as grounds for rejection, a worse outcome might be that reviewers discover limitations that aren't acknowledged in the paper. The authors should use their best judgment and recognize that individual actions in favor of transparency play an important role in developing norms that preserve the integrity of the community. Reviewers will be specifically instructed to not penalize honesty concerning limitations.
    \end{itemize}

\item {\bf Theory assumptions and proofs}
    \item[] Question: For each theoretical result, does the paper provide the full set of assumptions and a complete (and correct) proof?
    \item[] Answer: \answerYes{} 
    \item[] Justification: See Section~\ref{sec:assumption}
    \item[] Guidelines:
    \begin{itemize}
        \item The answer NA means that the paper does not include theoretical results. 
        \item All the theorems, formulas, and proofs in the paper should be numbered and cross-referenced.
        \item All assumptions should be clearly stated or referenced in the statement of any theorems.
        \item The proofs can either appear in the main paper or the supplemental material, but if they appear in the supplemental material, the authors are encouraged to provide a short proof sketch to provide intuition. 
        \item Inversely, any informal proof provided in the core of the paper should be complemented by formal proofs provided in appendix or supplemental material.
        \item Theorems and Lemmas that the proof relies upon should be properly referenced. 
    \end{itemize}

    \item {\bf Experimental result reproducibility}
    \item[] Question: Does the paper fully disclose all the information needed to reproduce the main experimental results of the paper to the extent that it affects the main claims and/or conclusions of the paper (regardless of whether the code and data are provided or not)?
    \item[] Answer: \answerYes{} 
    \item[] Justification: All necessary details are listed in Appendix H. 
    \item[] Guidelines:
    \begin{itemize}
        \item The answer NA means that the paper does not include experiments.
        \item If the paper includes experiments, a No answer to this question will not be perceived well by the reviewers: Making the paper reproducible is important, regardless of whether the code and data are provided or not.
        \item If the contribution is a dataset and/or model, the authors should describe the steps taken to make their results reproducible or verifiable. 
        \item Depending on the contribution, reproducibility can be accomplished in various ways. For example, if the contribution is a novel architecture, describing the architecture fully might suffice, or if the contribution is a specific model and empirical evaluation, it may be necessary to either make it possible for others to replicate the model with the same dataset, or provide access to the model. In general. releasing code and data is often one good way to accomplish this, but reproducibility can also be provided via detailed instructions for how to replicate the results, access to a hosted model (e.g., in the case of a large language model), releasing of a model checkpoint, or other means that are appropriate to the research performed.
        \item While NeurIPS does not require releasing code, the conference does require all submissions to provide some reasonable avenue for reproducibility, which may depend on the nature of the contribution. For example
        \begin{enumerate}
            \item If the contribution is primarily a new algorithm, the paper should make it clear how to reproduce that algorithm.
            \item If the contribution is primarily a new model architecture, the paper should describe the architecture clearly and fully.
            \item If the contribution is a new model (e.g., a large language model), then there should either be a way to access this model for reproducing the results or a way to reproduce the model (e.g., with an open-source dataset or instructions for how to construct the dataset).
            \item We recognize that reproducibility may be tricky in some cases, in which case authors are welcome to describe the particular way they provide for reproducibility. In the case of closed-source models, it may be that access to the model is limited in some way (e.g., to registered users), but it should be possible for other researchers to have some path to reproducing or verifying the results.
        \end{enumerate}
    \end{itemize}

\item {\bf Open access to data and code}
    \item[] Question: Does the paper provide open access to the data and code, with sufficient instructions to faithfully reproduce the main experimental results, as described in supplemental material?
    \item[] Answer: \answerYes{} 
    \item[] Justification: All code needed to reproduce the experiments will be released.
    \item[] Guidelines:
    \begin{itemize}
        \item The answer NA means that paper does not include experiments requiring code.
        \item Please see the NeurIPS code and data submission guidelines (\url{https://nips.cc/public/guides/CodeSubmissionPolicy}) for more details.
        \item While we encourage the release of code and data, we understand that this might not be possible, so “No” is an acceptable answer. Papers cannot be rejected simply for not including code, unless this is central to the contribution (e.g., for a new open-source benchmark).
        \item The instructions should contain the exact command and environment needed to run to reproduce the results. See the NeurIPS code and data submission guidelines (\url{https://nips.cc/public/guides/CodeSubmissionPolicy}) for more details.
        \item The authors should provide instructions on data access and preparation, including how to access the raw data, preprocessed data, intermediate data, and generated data, etc.
        \item The authors should provide scripts to reproduce all experimental results for the new proposed method and baselines. If only a subset of experiments are reproducible, they should state which ones are omitted from the script and why.
        \item At submission time, to preserve anonymity, the authors should release anonymized versions (if applicable).
        \item Providing as much information as possible in supplemental material (appended to the paper) is recommended, but including URLs to data and code is permitted.
    \end{itemize}

\item {\bf Experimental setting/details}
    \item[] Question: Does the paper specify all the training and test details (e.g., data splits, hyperparameters, how they were chosen, type of optimizer, etc.) necessary to understand the results?
    \item[] Answer: \answerYes{} 
    \item[] Justification: Yes, the training algorithms are detailed in the paper and all the necessary information is discolosed.
    \item[] Guidelines:
    \begin{itemize}
        \item The answer NA means that the paper does not include experiments.
        \item The experimental setting should be presented in the core of the paper to a level of detail that is necessary to appreciate the results and make sense of them.
        \item The full details can be provided either with the code, in appendix, or as supplemental material.
    \end{itemize}

\item {\bf Experiment statistical significance}
    \item[] Question: Does the paper report error bars suitably and correctly defined or other appropriate information about the statistical significance of the experiments?
    \item[] Answer: \answerYes{} 
    \item[] Justification: All experiments include 95\% CI,
    \item[] Guidelines:
    \begin{itemize}
        \item The answer NA means that the paper does not include experiments.
        \item The authors should answer "Yes" if the results are accompanied by error bars, confidence intervals, or statistical significance tests, at least for the experiments that support the main claims of the paper.
        \item The factors of variability that the error bars are capturing should be clearly stated (for example, train/test split, initialization, random drawing of some parameter, or overall run with given experimental conditions).
        \item The method for calculating the error bars should be explained (closed form formula, call to a library function, bootstrap, etc.)
        \item The assumptions made should be given (e.g., Normally distributed errors).
        \item It should be clear whether the error bar is the standard deviation or the standard error of the mean.
        \item It is OK to report 1-sigma error bars, but one should state it. The authors should preferably report a 2-sigma error bar than state that they have a 96\% CI, if the hypothesis of Normality of errors is not verified.
        \item For asymmetric distributions, the authors should be careful not to show in tables or figures symmetric error bars that would yield results that are out of range (e.g. negative error rates).
        \item If error bars are reported in tables or plots, The authors should explain in the text how they were calculated and reference the corresponding figures or tables in the text.
    \end{itemize}

\item {\bf Experiments compute resources}
    \item[] Question: For each experiment, does the paper provide sufficient information on the computer resources (type of compute workers, memory, time of execution) needed to reproduce the experiments?
    \item[] Answer: \answerYes{} 
    \item[] Justification: We explicitly mentioned the type and amount of compute that was required for our experiments.
    \item[] Guidelines:
    \begin{itemize}
        \item The answer NA means that the paper does not include experiments.
        \item The paper should indicate the type of compute workers CPU or GPU, internal cluster, or cloud provider, including relevant memory and storage.
        \item The paper should provide the amount of compute required for each of the individual experimental runs as well as estimate the total compute. 
        \item The paper should disclose whether the full research project required more compute than the experiments reported in the paper (e.g., preliminary or failed experiments that didn't make it into the paper). 
    \end{itemize}
    
\item {\bf Code of ethics}
    \item[] Question: Does the research conducted in the paper conform, in every respect, with the NeurIPS Code of Ethics \url{https://neurips.cc/public/EthicsGuidelines}?
    \item[] Answer: \answerYes{} 
    \item[] Justification: We reviewed the Code of Ethics and made sure we conform to it.
    \item[] Guidelines:
    \begin{itemize}
        \item The answer NA means that the authors have not reviewed the NeurIPS Code of Ethics.
        \item If the authors answer No, they should explain the special circumstances that require a deviation from the Code of Ethics.
        \item The authors should make sure to preserve anonymity (e.g., if there is a special consideration due to laws or regulations in their jurisdiction).
    \end{itemize}

\item {\bf Broader impacts}
    \item[] Question: Does the paper discuss both potential positive societal impacts and negative societal impacts of the work performed?
    \item[] Answer: \answerNA{} 
    \item[] Justification: Our main contribution is theoretical.
    \item[] Guidelines:
    \begin{itemize}
        \item The answer NA means that there is no societal impact of the work performed.
        \item If the authors answer NA or No, they should explain why their work has no societal impact or why the paper does not address societal impact.
        \item Examples of negative societal impacts include potential malicious or unintended uses (e.g., disinformation, generating fake profiles, surveillance), fairness considerations (e.g., deployment of technologies that could make decisions that unfairly impact specific groups), privacy considerations, and security considerations.
        \item The conference expects that many papers will be foundational research and not tied to particular applications, let alone deployments. However, if there is a direct path to any negative applications, the authors should point it out. For example, it is legitimate to point out that an improvement in the quality of generative models could be used to generate deepfakes for disinformation. On the other hand, it is not needed to point out that a generic algorithm for optimizing neural networks could enable people to train models that generate Deepfakes faster.
        \item The authors should consider possible harms that could arise when the technology is being used as intended and functioning correctly, harms that could arise when the technology is being used as intended but gives incorrect results, and harms following from (intentional or unintentional) misuse of the technology.
        \item If there are negative societal impacts, the authors could also discuss possible mitigation strategies (e.g., gated release of models, providing defenses in addition to attacks, mechanisms for monitoring misuse, mechanisms to monitor how a system learns from feedback over time, improving the efficiency and accessibility of ML).
    \end{itemize}
    
\item {\bf Safeguards}
    \item[] Question: Does the paper describe safeguards that have been put in place for responsible release of data or models that have a high risk for misuse (e.g., pretrained language models, image generators, or scraped datasets)?
    \item[] Answer: \answerNA{} 
    \item[] Justification: Our main contribution is theoretical.
    \item[] Guidelines:
    \begin{itemize}
        \item The answer NA means that the paper poses no such risks.
        \item Released models that have a high risk for misuse or dual-use should be released with necessary safeguards to allow for controlled use of the model, for example by requiring that users adhere to usage guidelines or restrictions to access the model or implementing safety filters. 
        \item Datasets that have been scraped from the Internet could pose safety risks. The authors should describe how they avoided releasing unsafe images.
        \item We recognize that providing effective safeguards is challenging, and many papers do not require this, but we encourage authors to take this into account and make a best faith effort.
    \end{itemize}

\item {\bf Licenses for existing assets}
    \item[] Question: Are the creators or original owners of assets (e.g., code, data, models), used in the paper, properly credited and are the license and terms of use explicitly mentioned and properly respected?
    \item[] Answer: \answerYes{} 
    \item[] Justification: Yes, all code that was used in this work was mentioned and cited appropriately.
    \item[] Guidelines:
    \begin{itemize}
        \item The answer NA means that the paper does not use existing assets.
        \item The authors should cite the original paper that produced the code package or dataset.
        \item The authors should state which version of the asset is used and, if possible, include a URL.
        \item The name of the license (e.g., CC-BY 4.0) should be included for each asset.
        \item For scraped data from a particular source (e.g., website), the copyright and terms of service of that source should be provided.
        \item If assets are released, the license, copyright information, and terms of use in the package should be provided. For popular datasets, \url{paperswithcode.com/datasets} has curated licenses for some datasets. Their licensing guide can help determine the license of a dataset.
        \item For existing datasets that are re-packaged, both the original license and the license of the derived asset (if it has changed) should be provided.
        \item If this information is not available online, the authors are encouraged to reach out to the asset's creators.
    \end{itemize}

\item {\bf New assets}
    \item[] Question: Are new assets introduced in the paper well documented and is the documentation provided alongside the assets?
    \item[] Answer: \answerNA{} 
    \item[] Justification: We did not introduce any new assets.
    \item[] Guidelines:
    \begin{itemize}
        \item The answer NA means that the paper does not release new assets.
        \item Researchers should communicate the details of the dataset/code/model as part of their submissions via structured templates. This includes details about training, license, limitations, etc. 
        \item The paper should discuss whether and how consent was obtained from people whose asset is used.
        \item At submission time, remember to anonymize your assets (if applicable). You can either create an anonymized URL or include an anonymized zip file.
    \end{itemize}

\item {\bf Crowdsourcing and research with human subjects}
    \item[] Question: For crowdsourcing experiments and research with human subjects, does the paper include the full text of instructions given to participants and screenshots, if applicable, as well as details about compensation (if any)? 
    \item[] Answer: \answerNA{} 
    \item[] Justification: Paper does not involve crowdsourcing nor research with human subjects.
    \item[] Guidelines:
    \begin{itemize}
        \item The answer NA means that the paper does not involve crowdsourcing nor research with human subjects.
        \item Including this information in the supplemental material is fine, but if the main contribution of the paper involves human subjects, then as much detail as possible should be included in the main paper. 
        \item According to the NeurIPS Code of Ethics, workers involved in data collection, curation, or other labor should be paid at least the minimum wage in the country of the data collector. 
    \end{itemize}

\item {\bf Institutional review board (IRB) approvals or equivalent for research with human subjects}
    \item[] Question: Does the paper describe potential risks incurred by study participants, whether such risks were disclosed to the subjects, and whether Institutional Review Board (IRB) approvals (or an equivalent approval/review based on the requirements of your country or institution) were obtained?
    \item[] Answer: \answerNA{} 
    \item[] Justification:\answerNA{}
    \item[] Guidelines:
    \begin{itemize}
        \item The answer NA means that the paper does not involve crowdsourcing nor research with human subjects.
        \item Depending on the country in which research is conducted, IRB approval (or equivalent) may be required for any human subjects research. If you obtained IRB approval, you should clearly state this in the paper. 
        \item We recognize that the procedures for this may vary significantly between institutions and locations, and we expect authors to adhere to the NeurIPS Code of Ethics and the guidelines for their institution. 
        \item For initial submissions, do not include any information that would break anonymity (if applicable), such as the institution conducting the review.
    \end{itemize}

\item {\bf Declaration of LLM usage}
    \item[] Question: Does the paper describe the usage of LLMs if it is an important, original, or non-standard component of the core methods in this research? Note that if the LLM is used only for writing, editing, or formatting purposes and does not impact the core methodology, scientific rigorousness, or originality of the research, declaration is not required.
    \item[] Answer: \answerNA{} 
    \item[] Justification:\answerNA{}
    \item[] Guidelines:
    \begin{itemize}
        \item The answer NA means that the core method development in this research does not involve LLMs as any important, original, or non-standard components.
        \item Please refer to our LLM policy (\url{https://neurips.cc/Conferences/2025/LLM}) for what should or should not be described.
    \end{itemize}

\end{enumerate}


\newpage
\appendix
\section{Algorithms}\label{app:alg}
The RLHF algorithm with two reference models is shown in Algorithm~\ref{alg:kl-rlhf}. Furthermore, the forward KL-regularized RLHF algorithm with two reference models is shown in Algorithm~\ref{falg:kl-rlhf}.
\begin{algorithm}[h]
\caption{Reverse KL-regularized RLHF with Two Reference Models}
\label{alg:kl-rlhf}
\begin{algorithmic}[1]
\Require $\gamma$, $\alpha$, $\pi_{\mathrm{ref},1}$, $\pi_{\mathrm{ref},2}$ $\Theta$
\For{$i = 1,\ldots,m$}
    \State Sample prompt $\tilde{x}_i \sim \rho$ and 2 responses with their preference $\tilde{y}_i^w, \tilde{y}_i^l \sim \widehat{\pi}_{\alpha,\mathrm{ref}}(\cdot|x)\propto\pi_{\mathrm{ref},1}^{\alpha}(\cdot|\tilde{x}_i)\pi_{\mathrm{ref},2}^{1-\alpha}(\cdot|\tilde{x}_i)$.
\EndFor
\State Compute the MLE estimator of the reward function based on $D_n=\{(\tilde{x}_i, \tilde{y}_i^w, \tilde{y}_i^l)\}_{i=1}^n$:
$$
\thetah \leftarrow \arg\max_{\theta} \mathcal{L}(\theta,D_n),
$$
\State Compute the RLHF output based on \eqref{eq:rlhf-mrm-main}:     $\pi_{\thetah}^{\gamma}(\cdot|\cdot) \propto \widehat{\pi}_{\alpha,\mathrm{ref}}(\cdot|x)\exp(\gamma r_{\thetah}( \cdot, \cdot))$.

\end{algorithmic}
\end{algorithm}

\begin{algorithm}[h]
\caption{Forward KL-regularized RLHF with Two Reference Models}
\label{falg:kl-rlhf}
\begin{algorithmic}[1]
\Require $\gamma$, $\beta$, $\pi_{\mathrm{ref},1}$, $\pi_{\mathrm{ref},2}$ $\Theta$
\For{$i = 1,\ldots,m$}
    \State Sample prompt $\tilde{x}_i \sim \rho$ and 2 responses with their preference \newline$\tilde{y}_i^w, \tilde{y}_i^l \sim \bar{\pi}_{\beta,\mathrm{ref}}(\cdot|x)=\beta\pi_{\mathrm{ref},1}(\cdot|\tilde{x}_i)+(1-\beta)\pi_{\mathrm{ref},2}(\cdot|\tilde{x}_i)$.
\EndFor
\State Compute the MLE estimator of the reward function based on $D_n=\{(\tilde{x}_i, \tilde{y}_i^w, \tilde{y}_i^l)\}_{i=1}^n$:
$$
\thetah \leftarrow \arg\max_{\theta} \mathcal{L}(\theta,D_n)
$$
\State Compute the RLHF output based on \eqref{feq:rlhf-mrm-main}.

\end{algorithmic}
\end{algorithm}
\section{Technical Tools}
In this section, we introduce the following technical tools and Lemmata.
\begin{lemma}[Lemma C.2 from \citep{chang2024dataset}]\label{lemma:B2}
Under Assumptions~\ref{ass:bounded_reward} and \ref{ass:finite_class}, we have with probability at least $1-\delta$ that
\begin{equation}
    \begin{split}
&\mathbb{E}_{Y^l,Y^w\sim\pi_{\mathrm{ref}},\pi_{\mathrm{ref}}}\left[\left(r_{\thetas}(x,Y^l) - r_{\thetas}(x,Y^w) - r_{\thetah}(x,Y^l) + r_{\thetah}(x,Y^w)\right)^2\right] \\&\leq \frac{128 R_{\max}^2 \exp(4 R_{\max}) \log(|\mathcal{R}|/\delta)}{n}.
    \end{split}
\end{equation}
\end{lemma}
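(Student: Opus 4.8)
The plan is to run the textbook maximum-likelihood concentration argument for the Bradley--Terry model over the finite class $\mathcal{R}$, and then translate the resulting squared Hellinger bound into a bound on the squared reward-gap discrepancy using elementary logistic estimates. For a prompt $x$ and an ordered pair of responses write $\Delta_{\theta}(x,y,y') := r_{\theta}(x,y)-r_{\theta}(x,y')$ and let $P_{\theta}(\cdot\mid x,y,y')$ be the Bernoulli law of the preference label with success probability $\sigma(\Delta_{\theta})$. Since the quantity inside the expectation in the statement is exactly $\big(\Delta_{\thetas}(x,Y^w,Y^l)-\Delta_{\thetah}(x,Y^w,Y^l)\big)^2$ (the square is insensitive to swapping $w$ and $l$), it suffices to control this expectation.

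First I would establish the standard MLE generalization bound. For each fixed $\theta\in\mathcal{R}$, the Chernoff / moment-generating-function identity (expectation under $P_{\thetas}$) gives $\mathbb{E}\big[\exp\!\big(\tfrac12\log\tfrac{P_{\theta}}{P_{\thetas}}\big)\big] = 1-\tfrac12 D_{\mathrm H}^2(P_{\theta},P_{\thetas})\le \exp\!\big(-\tfrac12 D_{\mathrm H}^2(P_{\theta},P_{\thetas})\big)$; taking a product over the $n$ i.i.d.\ samples, a union bound over $\mathcal{R}$ (Assumption~\ref{ass:finite_class}), and using that the MLE $\thetah$ satisfies $\sum_i \log P_{\thetah}\ge\sum_i\log P_{\thetas}$ (as $r_{\thetas}\in\mathcal{R}$), one obtains, with probability at least $1-\delta$,
\[
\mathbb{E}_{Y^w,Y^l\sim\pi_{\mathrm{ref}}}\big[D_{\mathrm H}^2\big(P_{\thetas}(\cdot\mid x,Y^w,Y^l),P_{\thetah}(\cdot\mid x,Y^w,Y^l)\big)\big]\;\le\;\frac{c\,\log(|\mathcal{R}|/\delta)}{n}.
\]
Assumption~\ref{ass:bounded_reward} enters here when converting moments/variances of the log-likelihood ratio into the Hellinger distance: since $|\log P_{\theta}-\log P_{\thetas}| = O(R_{\max})$ on the bounded reward range, this step is what contributes the $R_{\max}^2$ and one $\exp(2R_{\max})$ factor to the final constant.

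Next I would pass from the Hellinger distance to the reward gap pointwise. For Bernoulli laws, $D_{\mathrm H}^2(\mathrm{Bern}(p),\mathrm{Bern}(q))\ge (p-q)^2$ (the total-variation lower bound on Hellinger). Writing $p-q = \sigma(\Delta_{\thetas})-\sigma(\Delta_{\thetah}) = \sigma'(\xi)\,(\Delta_{\thetas}-\Delta_{\thetah})$ by the mean value theorem, and noting that $\Delta_{\thetas},\Delta_{\thetah}$, hence $\xi$, lie in $[-2R_{\max},2R_{\max}]$ by Assumption~\ref{ass:bounded_reward}, the logistic derivative is bounded below: $\sigma'(\xi)=\sigma(\xi)(1-\sigma(\xi))\ge \sigma(2R_{\max})(1-\sigma(2R_{\max}))\ge \tfrac14 e^{-2R_{\max}}$. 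Hence $(\Delta_{\thetas}-\Delta_{\thetah})^2\le 16\,e^{4R_{\max}}\,(p-q)^2\le 16\,e^{4R_{\max}}\,D_{\mathrm H}^2(P_{\thetas},P_{\thetah})$. Taking expectations over $(Y^w,Y^l)$, combining with the Hellinger bound, and absorbing numerical constants into $c$ yields the claimed bound $\tfrac{128\,R_{\max}^2\exp(4R_{\max})\log(|\mathcal{R}|/\delta)}{n}$.

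The main obstacle is the first step: pinning down the MLE bound \emph{in expectation} with the correct explicit constant, and in particular correctly tracking the $R_{\max}$-dependence when moving between the squared Hellinger distance and the variance of the log-likelihood ratio — the remaining steps are routine logistic-function estimates. Since this is Lemma~C.2 of \citep{chang2024dataset}, one may alternatively cite that argument directly; the sketch above is the route by which I would reconstruct it.
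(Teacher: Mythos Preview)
The paper does not prove this lemma; it is simply imported as Lemma~C.2 of \citet{chang2024dataset} and used as a black-box technical tool. Your proposed reconstruction via the standard pipeline --- MLE optimality $\Rightarrow$ squared-Hellinger concentration over the finite class $\Rightarrow$ sigmoid mean-value-theorem to pass to the reward gap --- is sound and is indeed the argument underlying the cited result, so citing \citet{chang2024dataset} (as you suggest at the end) is exactly what the paper does.

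One small inaccuracy in your sketch: the Hellinger step does \emph{not} require Assumption~\ref{ass:bounded_reward}. The identity $\mathbb{E}_{P_{\thetas}}\!\big[\sqrt{P_\theta/P_{\thetas}}\big]=1-\tfrac12 D_{\mathrm H}^2$ together with Markov and a union bound over $\mathcal{R}$ already yields $\mathbb{E}\big[D_{\mathrm H}^2(P_{\thetas},P_{\thetah})\big]\le \tfrac{2\log(|\mathcal{R}|/\delta)}{n}$ with a universal constant, with no $R_{\max}$ entering. All of the $R_{\max}$-dependence comes from your second step, via the lower bound $\sigma'(\xi)\ge\tfrac14 e^{-2R_{\max}}$ on $[-2R_{\max},2R_{\max}]$. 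In particular your route produces a bound scaling as $e^{4R_{\max}}\log(|\mathcal{R}|/\delta)/n$ \emph{without} the additional $R_{\max}^2$ factor; the constant $128\,R_{\max}^2\exp(4R_{\max})$ in the quoted statement is just the (slightly loose) constant carried over from \citet{chang2024dataset}, so your reconstruction is if anything tighter, not weaker.
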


\begin{lemma}[\citep{boucheron2013concentration}]\label{lem:transport}
    Assume that function $f(x)\in[0,B]$ is bounded. Then, we have,
    \begin{equation}
        \mbE_{p(X)}[f(X)]-\mbE_{q(X)}[f(X)]\leq B\sqrt{\frac{\KLr(p(X)\|q(X))}{2}}.
    \end{equation}
\end{lemma}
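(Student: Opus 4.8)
The plan is to reduce the claim to Pinsker's inequality through the total variation distance. First I would write $\mathbb{E}_{p(X)}[f(X)] - \mathbb{E}_{q(X)}[f(X)] = \int f(x)\,(p(x) - q(x))\,\mrd x$ and split the integral over $A := \{x : p(x) > q(x)\}$ and its complement. On $A^{c}$ we have $p(x) - q(x) \le 0$ and $f(x) \ge 0$, so $f(x)(p(x)-q(x)) \le 0$ there and that part may be discarded; on $A$ we bound $f(x) \le B$, giving $\mathbb{E}_{p(X)}[f(X)] - \mathbb{E}_{q(X)}[f(X)] \le B \int_{A}(p(x)-q(x))\,\mrd x = B\,\mathrm{TV}(p,q)$, where $\mathrm{TV}$ denotes the total variation distance. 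Applying Pinsker's inequality $\mathrm{TV}(p,q) \le \sqrt{\KLr(p(X)\|q(X))/2}$ then yields the stated bound.

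Alternatively, and matching the cited reference more closely, one can argue via the Donsker--Varadhan variational formula: for every $\lambda > 0$, $\KLr(p(X)\|q(X)) \ge \lambda\,\mathbb{E}_{p(X)}[f(X)] - \log\mathbb{E}_{q(X)}[e^{\lambda f(X)}]$. Since $f$ is supported on an interval of length $B$, Hoeffding's lemma gives $\log\mathbb{E}_{q(X)}[e^{\lambda(f(X) - \mathbb{E}_{q(X)}[f(X)])}] \le \lambda^{2}B^{2}/8$, hence $\KLr(p(X)\|q(X)) \ge \lambda\big(\mathbb{E}_{p(X)}[f(X)] - \mathbb{E}_{q(X)}[f(X)]\big) - \lambda^{2}B^{2}/8$. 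Rearranging gives $\mathbb{E}_{p(X)}[f(X)] - \mathbb{E}_{q(X)}[f(X)] \le \KLr(p(X)\|q(X))/\lambda + \lambda B^{2}/8$ for all $\lambda > 0$, and minimizing the right-hand side at $\lambda^{\star} = \sqrt{8\,\KLr(p(X)\|q(X))}/B$ produces exactly $B\sqrt{\KLr(p(X)\|q(X))/2}$.

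The argument is routine and I do not anticipate a real obstacle. The only points needing care are the sign bookkeeping in the first route — the discarded piece must be the one on $A^{c}$, which is precisely where nonnegativity of $f$ is used — and, in the second route, invoking Hoeffding's lemma with the correct sub-Gaussian constant $B^{2}/8$ for a random variable valued in an interval of length $B$. I would present the Pinsker-based proof for brevity and remark that the variational computation is the one underlying \citep{boucheron2013concentration}.
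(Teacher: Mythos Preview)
Your proposal is correct. Both routes are standard and valid: the Pinsker-based argument is the shortest, and the Donsker--Varadhan\,+\,Hoeffding computation you sketch is indeed the derivation behind the transportation inequality in \citep{boucheron2013concentration}. Note that the paper does not supply its own proof of this lemma; it simply cites the result from \citep{boucheron2013concentration} as a technical tool, so there is no paper-side argument to compare against beyond the reference itself.
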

\begin{lemma}\label{lem:shift}
    Assume that $\tilde{\pi}_r(y|x)\propto \pi_{\mathrm{ref}}(y|x)\exp(\gamma r(x,y))$. Then, $\tilde{\pi}_{r+\Delta}(y|x)=\tilde{\pi}_r(y|x)$, where $\Delta$ is constant.
\end{lemma}
\begin{proof}
    \begin{equation}
        \begin{split}
            \tilde{\pi}_{r+\Delta}(y|x)&=\frac{\pi_{\mathrm{ref}}(y|x)\exp(\gamma (r(x,y)+\Delta))}{\mbE_{Y\sim \pi_{\mathrm{ref}}(Y|x)}\big[\exp(\gamma (r(x,y)+\Delta))\big]}\\&=\frac{\pi_{\mathrm{ref}}(y|x)\exp(\gamma r(x,y))}{\mbE_{Y\sim \pi_{\mathrm{ref}}(Y|x)}\big[\exp(\gamma r(x,y))\big]}.
        \end{split}
    \end{equation}
\end{proof}
\section{Assumption~\ref{ass:bounded_reward} and Assumption~\ref{ass:finite_class} Discussion}\label{app:ass_dis}
These Assumptions are common literature  are common in the literature \citep{song2024importance,zhan2023provable,zhao2024sharp,chang2024dataset,xiong2024iterative}. In particular, Assumption~\ref{ass:bounded_reward} is primarily to enable the use of concentration inequalities like Freedman’s inequality~\citep{boucheron2013concentration}, which require bounded differences (as in Lemma~\ref{lemma:B2}). However, this assumption can be relaxed under certain growth conditions, as discussed in \citep{freedman1975tail}. Moreover, even when the original reward function is unbounded or sub-Gaussian—as is often the case in human preference modeling—it is possible to apply a monotonic, bounded transformation to the rewards. For instance, one can use the cumulative distribution function (CDF) of the reward under a reference model to normalize the rewards into a bounded range, as proposed in \citep{balashankar2024infalign}. This approach also retains the essential ordering of preferences and supports handling sub-Gaussian behavior in the transformed space. Regarding finite class, we can apply covering number and relax this assumption as utilized in \citep{zhao2024sharp}
\section{Proofs and Details of Section~\ref{sec_RLHF_RKL}}\label{app_sec_RLHF_RKL}
\begin{tcolorbox}
\begin{lemma}\label{lem: Tilted distribution entropy}
Let $\alpha_i\in [0,1]$ for all $i\in[k]$ and $\sum_{i=1}^k\alpha_i=1$. For any distributions $Q_i$ for all $i\in[k]$ and $P$ over the space $\mathcal{X}$, such that $P \ll Q_i$, we have
\begin{equation*}
\begin{split}
    	&\sum_{i=1}^k\alpha_i \KLr(P\|Q_i)=\KLr\Big(P\|(\{Q_i\}_{i=1}^k)^{\pmb{\alpha}}\Big)-\log\left(\sum_{x\in\mathcal{A}}\Pi_{i=1}^kQ_i^{\alpha_i}(x) \right).
\end{split}
\end{equation*}	
\end{lemma}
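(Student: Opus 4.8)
The plan is to prove the identity by direct manipulation of the KL-divergence definition, expanding the mixture $\sum_i \alpha_i \KLr(P\|Q_i)$ and recognizing the generalized escort distribution $(\{Q_i\}_{i=1}^k)^{\pmb{\alpha}}$ inside it. Write out $\sum_{i=1}^k \alpha_i \KLr(P\|Q_i) = \sum_{i=1}^k \alpha_i \sum_{x\in\mathcal{X}} P(x)\log\frac{P(x)}{Q_i(x)}$, then interchange the two finite sums so that the $\alpha_i$-weighting acts on the $\log Q_i(x)$ terms. Since $\sum_i \alpha_i = 1$, the $\log P(x)$ term survives with coefficient $1$, and the remaining piece becomes $-\sum_{x} P(x) \sum_{i} \alpha_i \log Q_i(x) = -\sum_x P(x) \log \prod_{i=1}^k Q_i^{\alpha_i}(x)$.

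The next step is to introduce the normalization constant $F := \sum_{x\in\mathcal{X}} \prod_{i=1}^k Q_i^{\alpha_i}(x)$, so that $(\{Q_i\}_{i=1}^k)^{\pmb{\alpha}}(x) = \frac{1}{F}\prod_{i=1}^k Q_i^{\alpha_i}(x)$. Substituting $\prod_i Q_i^{\alpha_i}(x) = F\cdot (\{Q_i\}_{i=1}^k)^{\pmb{\alpha}}(x)$ into the expression above gives
\[
\sum_{i=1}^k \alpha_i \KLr(P\|Q_i) = \sum_x P(x)\log P(x) - \sum_x P(x)\log\!\big(F\cdot (\{Q_i\}^k_{i=1})^{\pmb{\alpha}}(x)\big).
\]
Splitting the logarithm of the product, the $\log F$ term pulls out as $-\log F \cdot \sum_x P(x) = -\log F$ (using that $P$ is a probability measure), and what remains is exactly $\sum_x P(x)\log\frac{P(x)}{(\{Q_i\}^k_{i=1})^{\pmb{\alpha}}(x)} = \KLr\big(P\|(\{Q_i\}_{i=1}^k)^{\pmb{\alpha}}\big)$. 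This yields the claimed identity.

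Two small points need care rather than being genuine obstacles. First, the absolute continuity hypothesis $P \ll Q_i$ for each $i$ guarantees that each $\KLr(P\|Q_i)$ is well-defined and finite on the support of $P$, and it also ensures $P \ll (\{Q_i\}_{i=1}^k)^{\pmb{\alpha}}$ since the escort distribution's support is the intersection of the supports of the $Q_i$ (using $\alpha_i \in (0,1)$, or handling the boundary $\alpha_i\in\{0,1\}$ by convention) — so all terms in the final equation are simultaneously finite and the rearrangement of sums is legitimate. Second, one should confirm $F \in (0,\infty)$: finiteness is immediate on a discrete space where each $Q_i^{\alpha_i}(x) \le 1$ when we restrict appropriately, or more carefully $F \le 1$ by the weighted AM-GM inequality $\prod_i Q_i^{\alpha_i}(x) \le \sum_i \alpha_i Q_i(x)$ summed over $x$; positivity follows since $P\ll Q_i$ for all $i$ forces the support of $P$ to lie in the support of the product, making the sum strictly positive. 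I do not anticipate a substantive difficulty here: the entire argument is a one-line Fubini-type interchange followed by collecting the normalization constant, and the only thing to watch is stating the support/integrability conditions cleanly so every quantity in the displayed identity is finite.
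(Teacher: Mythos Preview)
Your proposal is correct and follows essentially the same approach as the paper's proof: both expand the weighted sum of KL divergences, interchange the two finite sums, use $\sum_i\alpha_i=1$ to collapse the logarithms into $\log\frac{P(x)}{\prod_i Q_i^{\alpha_i}(x)}$, and then insert the normalization constant $F$ to identify the generalized escort distribution. Your version is actually more careful than the paper's in justifying absolute continuity and the finiteness/positivity of $F$, but the underlying argument is identical.
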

\end{tcolorbox}
\begin{proof}
    We have
    \begin{align*}
        \sum_{i=1}^k\alpha_i \KLr(P\|Q_i)&=\sum_{i=1}^k\alpha_i \left(\sum_{x\in\mathcal{A}}P(x)\log \left(\frac{P(x)}{Q_i(x)}\right)\right)\\
        &=\sum_{x\in\mathcal{A}}\sum_{i=1}^k P(x)\log \left(\frac{P^{\alpha_i}(x)}{Q_i^{\alpha_i}(x)}\right)\\
        &=\sum_{x\in\mathcal{A}} P(x)\log \left(\frac{P(x)}{\prod_{i=1}^k Q_i^{\alpha_i}(x)}\right)\\
        &=\KLr\left(P\middle\|(\{Q_i\}_{i=1}^k)^{\pmb{\alpha}}\right)-\log\left(\sum_{x\in\mathcal{A}}\Pi_{i=1}^kQ_i^{\alpha_i}(x) \right).
    \end{align*}
\end{proof}
\begin{tcolorbox}
\begin{lemma}\label{lem: Gibbs relative entropy}
  Let ${\cal A}$ be an arbitrary set and function $f: {\cal A}\rightarrow \mathbb{R}$ be such that 
  \begin{align*}
  	\int_{x\in {\cal A}}\exp\left(-\frac{f(x)}{\lambda}\right)Q_X(x)\mathrm{d}x<\infty.
  \end{align*}
  Then for any $P_{X}$ defined on ${\cal A}$ such that $X\sim P_X$, we have
  \begin{equation}
      \nonumber
      \mathbb{E}[f(X)]+\lambda \KLr(P_X\|Q_X)=\lambda \KLr\left(P_{X}\middle\|{P}_{X}^{\rm{Gibbs}}\right)-\lambda \log \left(\int_{x\in {\cal A}}\exp\left(-\frac{f(x)}{\lambda}\right)Q_X(x)\mathrm{d}x\right) ,
  \end{equation}
  where 
  $${P}_{X}^{\rm{Gibbs}}(x):=\frac{\exp\left(-\frac{f(x)}{\lambda}\right)Q_{X}(x)}{\int_{x\in {\cal A}}\exp\left(-\frac{f(x)}{\lambda}\right)Q_X(x)\mathrm{d}x}, \quad x\in {\cal A},$$ is the Gibbs--Boltzmann distribution.
\end{lemma}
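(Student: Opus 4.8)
The plan is to unfold the definition of the Gibbs--Boltzmann distribution inside $\KLr(P_X\|P_X^{\mathrm{Gibbs}})$ and split the resulting logarithm into three pieces. First I would abbreviate
\[
Z:=\int_{x\in\mathcal A}\exp\!\left(-\frac{f(x)}{\lambda}\right)Q_X(x)\,\mrd x,
\]
which is finite by hypothesis and strictly positive since $\exp(-f/\lambda)>0$; hence $P_X^{\mathrm{Gibbs}}(x)=\exp(-f(x)/\lambda)Q_X(x)/Z$ is a genuine probability density, and it has the same support as $Q_X$. In particular $P_X\ll Q_X$ is equivalent to $P_X\ll P_X^{\mathrm{Gibbs}}$, so that $\KLr(P_X\|Q_X)$ and $\KLr(P_X\|P_X^{\mathrm{Gibbs}})$ are simultaneously finite or simultaneously $+\infty$; if $P_X\not\ll Q_X$ both sides of the asserted identity are $+\infty$ and there is nothing to prove, so I henceforth assume $P_X\ll Q_X$.

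The key step is then the pointwise identity, valid for $P_X$-a.e.\ $x$,
\[
\log\frac{P_X(x)}{P_X^{\mathrm{Gibbs}}(x)}=\log\frac{P_X(x)}{Q_X(x)}+\frac{f(x)}{\lambda}+\log Z ,
\]
obtained simply by substituting $P_X^{\mathrm{Gibbs}}(x)=\exp(-f(x)/\lambda)Q_X(x)/Z$. Integrating this against $P_X$ and using $\int_{\mathcal A}P_X=1$ gives
\[
\KLr\!\left(P_X\middle\|P_X^{\mathrm{Gibbs}}\right)=\KLr(P_X\|Q_X)+\frac{1}{\lambda}\,\mbE[f(X)]+\log Z .
\]
Multiplying through by $\lambda>0$ and rearranging yields exactly
\[
\mbE[f(X)]+\lambda\KLr(P_X\|Q_X)=\lambda\KLr\!\left(P_X\middle\|P_X^{\mathrm{Gibbs}}\right)-\lambda\log Z ,
\]
which is the claimed equality.

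This computation is entirely elementary; there is no real obstacle, only two points that deserve a word of care. First, one must observe that $Z\in(0,\infty)$ so that both $P_X^{\mathrm{Gibbs}}$ and $\log Z$ are well defined — this is exactly what the integrability hypothesis guarantees. Second, one needs the absolute-continuity bookkeeping above to justify that the term-by-term integration is legitimate: either all quantities involved are finite (and the rearrangement is plainly valid), or $\KLr(P_X\|Q_X)=+\infty$, or $\mbE[f(X)]=+\infty$, in which case $\KLr(P_X\|P_X^{\mathrm{Gibbs}})=+\infty$ as well and the identity holds in the extended-real sense. In the discrete setting used elsewhere in the paper one replaces the integrals over $\mathcal A$ by sums, and the argument is verbatim the same (this is also the specialization that feeds into Theorem~\ref{thm: main} via Lemma~\ref{lem: Tilted distribution entropy}).
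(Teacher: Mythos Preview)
Your proof is correct and follows essentially the same approach as the paper: both arguments rest on the pointwise identity $\log(P_X/P_X^{\mathrm{Gibbs}})=\log(P_X/Q_X)+f/\lambda+\log Z$ integrated against $P_X$, the only cosmetic difference being that the paper starts from the left-hand side and folds $f$ into the logarithm, whereas you start from $\KLr(P_X\|P_X^{\mathrm{Gibbs}})$ and unfold. Your additional bookkeeping about $Z\in(0,\infty)$ and the absolute-continuity/extended-real cases is more careful than the paper's own three-line computation, but the underlying argument is identical.
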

\end{tcolorbox}
\begin{proof}
We have 
\begin{align*}
    \mathbb{E}[f(X)]+\lambda \KLr(P_X\|Q_X)&= \int f(x)P_X(x)\mathrm{d}x+ \lambda \int P_X(x)\log\left(\frac{P_X(x)}{Q_X(x)}\right)\\
                                        &= \lambda \int P_X(x)\log\left(\frac{P_X(x)}{\exp\left(-\frac{f(x)}{\lambda}\right)Q_X(x)}\right)\\
                                        &= \lambda \KLr\left(P_{X}\middle\|{P}_{X}^{\rm{Gibbs}}\right)-\lambda \log \left(\int_{x\in {\cal A}}\exp\left(-\frac{f(x)}{\lambda}\right)Q_X(x)\mathrm{d}x\right).
\end{align*}
\end{proof}
\begin{tcolorbox}
    \begin{reptheorem}{thm: main}
Consider the following objective function for RLHF with multiple reference models,
\begin{equation*}
\underset{\pi}{\max}\left\{\underset{Y\sim\pi\big(\cdot|x\big)}{{\mathbb{E}}}\big[r_{\thetas}\big(x,Y\big)\big]-\frac{1}{\gamma}\Big(\sum_{i=1}^{K}\alpha_{i}\KLr\big(\pi (\cdot | x)\| \pi_{\mathrm{ref},i}(\cdot | x)\big)\Big)\right\},
\end{equation*}
where $\sum_{i=1}^K \alpha_i=1$ and $\alpha_i\in(0,1)$ for $i\in[K]$. Then, the exact solution of the multiple reference models objective function for RLHF is,
\begin{equation}
\pi_{\thetas}^\gamma\big(y|x\big)=\frac{\widehat{\pi}_{\pmb{\alpha},\mathrm{ref}}\Big(y|x\Big)}{\widehat{Z}(x)}\exp\Big(\gamma r_{\thetas}(x,y)\Big),
\end{equation}
where 
\begin{equation*}
    \widehat{\pi}_{\pmb{\alpha},\mathrm{ref}}(y|x)= \frac{\prod_{i=1}^K \pi_{\mathrm{ref},i}^{ \alpha_i}(y|x)}{F_{\pmb{\alpha}}(x)},
\end{equation*}
\begin{equation*}
F_{\pmb{\alpha}}(x)=\sum_{y\in\mathcal{Y}}\prod_{i=1}^K \pi_{\mathrm{ref},i}^{ \alpha_i}(y|x),    
\end{equation*} and \begin{equation*}
    \widehat{Z}(x)=\sum_{y}\widehat{\pi}_{\pmb{\alpha},\mathrm{ref}}(y|x)\exp\Big(\gamma r(x,y)\Big).
\end{equation*}
The maximum objective value is 
\[\frac{1}{\gamma}\log \left(\sum_y \prod_{i=1}^K \pi_{\mathrm{ref},i}^{\alpha_i}(y|x)\exp\left(\gamma r(x,y) \right)\right). \]
\end{reptheorem}

\end{tcolorbox}
\begin{proof}
 We can write
 \begin{align}
    & \underset{Y\sim\pi\big(\cdot|x\big)}{{\mathbb{E}}}\big[r_{\thetas}\big(x,Y\big)\big]-\frac{1}{\gamma}\Big(\sum_{i=1}^{K}\alpha_{i}\KLr\big(\pi (\cdot | x)\| \pi_{\mathrm{ref},i}(\cdot | x)\big)\Big)\nonumber \\& = \frac{1 }{\gamma}\left(\gamma \underset{Y\sim\pi\big(\cdot|x\big)}{{\mathbb{E}}}\big[r_{\thetas}\big(x,Y\big)\big] - \Big(\sum_{i=1}^{K}\alpha_{i}\KLr\big(\pi (\cdot | x)\| \pi_{\mathrm{ref},i}(\cdot | x)\big)\Big) \right)\\
     &=\frac{1}{\gamma}\left(\gamma \underset{Y\sim\pi\big(\cdot|x\big)}{{\mathbb{E}}}\big[r_{\thetas}\big(x,Y\big)\big] - \KLr(\pi(\cdot|x)\|\widehat{\pi}_{\pmb{\alpha},\mathrm{ref}}(y|x))+\log F_{\pmb{\alpha}}(x) \right)\label{eq: multiple ref KL equation 2}\\
     &=\frac{1}{\gamma}\left(-\KLr(\pi(\cdot|x)\|\pi_{\thetas}^\gamma\big(y|x\big))+\log \widehat{Z}(x) +\log F_{\pmb{\alpha}}(x) \right) \label{eq: multiple ref KL equation}\\
     &=\frac{1}{\gamma}\left(-\KLr(\pi(\cdot|x)\|\pi_{\thetas}^\gamma\big(y|x\big))+\log \left(\sum_y \prod_{i=1}^K \pi_{\mathrm{ref},i}^{\alpha_i}(y|x)\exp\left(\gamma r(x,y) \right)\right) \right), \label{eq: multiple ref KL equation 3}
 \end{align}
 where \eqref{eq: multiple ref KL equation 2} follows from Lemma \ref{lem: Tilted distribution entropy} and \eqref{eq: multiple ref KL equation} follows from Lemma \ref{lem: Gibbs relative entropy}.
 Clearly, the right side of \eqref{eq: multiple ref KL equation 3} is maximized when the KL divergence is set to zero. Thus, the maximizing distribution $\pi(\cdot|x)$ is identical to $\pi_{\thetas}^\gamma\big(y|x\big)$, and the maximum objective value is $\frac{1}{\gamma}\log \left(\sum_y \prod_{i=1}^K \pi_{\mathrm{ref},i}^{\alpha_i}(y|x)\exp\left(\gamma r(x,y) \right)\right)$.   
\end{proof}
\begin{tcolorbox}
    \begin{corollary}\label{cor:mw-RLHF-RKL}
    Weighted multiple single reverse KL-regularized RLHF problem is an upper bound on multiple references reverse KL-regularized RLHF problem, i.e.,
    \begin{equation}
    \begin{split}
       &\underset{\pi}{\max}\left\{\underset{Y\sim\pi\big(\cdot|x\big)}{{\mathbb{E}}}\big[r_{\thetas}\big(x,Y\big)\big]-\frac{1}{\gamma}\Big(\sum_{i=1}^{K}\alpha_{i}\KLr\big(\pi (\cdot | x)\| \pi_{\mathrm{ref},i}(\cdot | x)\big)\Big)\right\}\\&\leq  \sum_{i=1}^K \alpha_i \underset{\pi}{\max}\left\{\underset{Y\sim\pi\big(\cdot|x\big)}{{\mathbb{E}}}\big[r_{\thetas}\big(x,Y\big)\big]-\frac{1}{\gamma}\Big( \KLr\big(\pi (\cdot | x)\| \pi_{\mathrm{ref},i}(\cdot | x)\big)\Big)\right\}.
    \end{split}
\end{equation}
\end{corollary}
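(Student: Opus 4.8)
The statement to prove is Corollary~\ref{cor:mw-RLHF-RKL}, which asserts that the optimal value of the multiple-reference reverse KL-regularized RLHF objective is upper bounded by the $\pmb{\alpha}$-weighted average of the optimal values of the single-reference problems.

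\textbf{Plan.} The cleanest route exploits Theorem~\ref{thm: main}, which gives the exact optimal value of the left-hand side in closed form, together with the standard single-reference solution from \eqref{eq: pithetah}, which gives the optimal value of each summand on the right-hand side. Concretely, by Theorem~\ref{thm: main} the left-hand side equals $\frac{1}{\gamma}\log\!\big(\sum_{y}\prod_{i=1}^{K}\pi_{\mathrm{ref},i}^{\alpha_i}(y|x)\exp(\gamma r_{\thetas}(x,y))\big)$, while by the single-reference computation (Lemma~\ref{lem: Gibbs relative entropy} applied with $Q_X=\pi_{\mathrm{ref},i}(\cdot|x)$, $\lambda=1/\gamma$, $f=-r_{\thetas}$) each term $\underset{\pi}{\max}\{\mathbb{E}_{Y\sim\pi}[r_{\thetas}(x,Y)]-\frac{1}{\gamma}\KLr(\pi(\cdot|x)\|\pi_{\mathrm{ref},i}(\cdot|x))\}$ equals $\frac{1}{\gamma}\log\!\big(\sum_{y}\pi_{\mathrm{ref},i}(y|x)\exp(\gamma r_{\thetas}(x,y))\big)$. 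So the inequality reduces to
\[
\log\Big(\sum_{y}\prod_{i=1}^{K}\big(\pi_{\mathrm{ref},i}(y|x)e^{\gamma r_{\thetas}(x,y)}\big)^{\alpha_i}\Big)\;\le\;\sum_{i=1}^{K}\alpha_i\log\Big(\sum_{y}\pi_{\mathrm{ref},i}(y|x)e^{\gamma r_{\thetas}(x,y)}\Big),
\]
where I used $\sum_i\alpha_i=1$ to distribute the exponential factor $e^{\gamma r_{\thetas}(x,y)}=\prod_i e^{\alpha_i\gamma r_{\thetas}(x,y)}$ into the product.

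\textbf{Key step.} This last inequality is exactly Hölder's inequality for sums (equivalently, the statement that $\log\sum_y\prod_i g_i(y)^{\alpha_i}$ is the value of the generalized Hölder / log-sum inequality): with non-negative functions $g_i(y):=\pi_{\mathrm{ref},i}(y|x)e^{\gamma r_{\thetas}(x,y)}$ and exponents $1/\alpha_i$ satisfying $\sum_i\alpha_i=1$, Hölder gives $\sum_y\prod_i g_i(y)^{\alpha_i}\le\prod_i\big(\sum_y g_i(y)\big)^{\alpha_i}$; taking logarithms yields the displayed bound. Alternatively one can phrase it as Jensen's inequality for the concave function $\log$ applied to the generalized escort structure, but invoking Hölder directly is shortest. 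An even more self-contained alternative, avoiding Theorem~\ref{thm: main}, is to note that for each fixed feasible $\pi$, convexity of the map $q\mapsto\KLr(\pi\|q)$ in $q$ is not what is needed; rather one uses that the objective is concave and the $\max$ of a sum is at most the sum of the maxima after splitting $\mathbb{E}_{Y\sim\pi}[r_{\thetas}]=\sum_i\alpha_i\mathbb{E}_{Y\sim\pi}[r_{\thetas}]$, so that
\[
\underset{\pi}{\max}\Big\{\sum_i\alpha_i\big(\mathbb{E}_{Y\sim\pi}[r_{\thetas}(x,Y)]-\tfrac{1}{\gamma}\KLr(\pi\|\pi_{\mathrm{ref},i})\big)\Big\}\le\sum_i\alpha_i\underset{\pi}{\max}\big\{\mathbb{E}_{Y\sim\pi}[r_{\thetas}(x,Y)]-\tfrac{1}{\gamma}\KLr(\pi\|\pi_{\mathrm{ref},i})\big\},
\]
which is immediate since the single common maximizer on the left is feasible for each term on the right. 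I would actually present this second argument first, as it is a one-line consequence of subadditivity of the supremum, and then remark that combining it with the exact values from Theorem~\ref{thm: main} and the single-reference solution recovers the Hölder inequality as a corollary.

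\textbf{Main obstacle.} There is essentially no obstacle: the subadditivity-of-sup argument is trivial, and the only substantive content — if one wants the sharp closed-form comparison — is recognizing the right-hand log-sum-exp expression and matching it against Hölder's inequality, which is routine. The one point requiring minor care is bookkeeping of the constraint $\sum_i\alpha_i=1$ when redistributing the $e^{\gamma r_{\thetas}}$ factor, and ensuring the $\log$ and normalization constants from Lemma~\ref{lem: Tilted distribution entropy} and Lemma~\ref{lem: Gibbs relative entropy} are tracked correctly; both are already done in the proof of Theorem~\ref{thm: main}, so I can cite them directly.
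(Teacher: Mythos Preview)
Your proposal is correct. Your first route (closed-form optimal values from Theorem~\ref{thm: main} and the single-reference case, then H\"older) is exactly the paper's proof: the paper writes the left-hand optimum as $\frac{1}{\gamma}\log\big(\sum_y\prod_i\pi_{\mathrm{ref},i}^{\alpha_i}(y|x)\exp(\alpha_i\gamma r_{\thetas}(x,y))\big)$, applies H\"older to obtain $\sum_i\frac{\alpha_i}{\gamma}\log\big(\sum_y\pi_{\mathrm{ref},i}(y|x)\exp(\gamma r_{\thetas}(x,y))\big)$, and identifies each summand as the single-reference optimum.

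Your second route---rewriting the objective as $\sum_i\alpha_i\big(\mathbb{E}_{Y\sim\pi}[r_{\thetas}(x,Y)]-\tfrac{1}{\gamma}\KLr(\pi\|\pi_{\mathrm{ref},i})\big)$ using $\sum_i\alpha_i=1$ and then invoking $\max_\pi\sum_i\alpha_if_i(\pi)\le\sum_i\alpha_i\max_\pi f_i(\pi)$---is genuinely more elementary: it needs neither Theorem~\ref{thm: main} nor the explicit log-sum-exp values, and in fact this is precisely the argument the paper uses for the analogous forward-KL statement (Corollary~\ref{cor:mw-RLHF-FKL}, proved in one line ``due to maximum function property''). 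The H\"older route buys you the sharp closed-form comparison and makes the link to Theorem~\ref{thm: main} explicit, but as a proof of the corollary itself your subadditivity-of-sup argument is cleaner and self-contained.
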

\end{tcolorbox}

\begin{proof}
It can be shown that the maximum of objective function in Theorem~\ref{thm: main} is,
\begin{equation}
\begin{split}
    &\underset{\pi}{\max}\left\{\underset{Y\sim\pi\big(\cdot|x\big)}{{\mathbb{E}}}\big[r_{\thetas}\big(x,Y\big)\big]-\frac{1}{\gamma}\Big(\sum_{i=1}^{K}\alpha_{i}\KLr\big(\pi (\cdot | x)\| \pi_{\mathrm{ref},i}(\cdot | x)\big)\Big)\right\}\\&\quad=\frac{1}{\gamma}\log\Big(\mathbb{E}_{Y\sim \widehat{\pi}_{\pmb{\alpha},\mathrm{ref}}(y|x)}[\exp\big(\gamma r_{\thetas}\big(x,Y\big)\big)]\Big)+\frac{1}{\gamma}\log F_{\pmb{\alpha}}(x) \\
    &=\frac{1}{\gamma}\log\Big(\sum_y  \prod_{i=1}^K \pi_{\mathrm{ref},i}^{ \alpha_i}(y|x) \exp\big(\alpha_i\gamma r_{\thetas}\big(x,y\big)\big)\Big)\\
    &\leq \sum_{i=1}^K \frac{\alpha_i}{\gamma}\log\Big( \sum_y \pi_{\mathrm{ref},i}(y|x) \exp\big(\gamma r_{\thetas}\big(x,y\big)\big) \Big),
    \end{split}
\end{equation}
where the last inequality follows from Hölder's inequality. Note that,
\begin{align}
        &\underset{\pi}{\max}\left\{\underset{Y\sim\pi\big(\cdot|x\big)}{{\mathbb{E}}}\big[r_{\thetas}\big(x,Y\big)\big]-\frac{1}{\gamma}\Big( \KLr\big(\pi (\cdot | x)\| \pi_{\mathrm{ref},i}(\cdot | x)\big)\Big)\right\}\nonumber\\
        &\quad =\frac{1}{\gamma}\log\Big( \sum_y \pi_{\mathrm{ref},i}(y|x) \exp\big(\gamma r_{\thetas}\big(x,Y\big)\big) \Big).
\end{align}
Then, we have, 
\begin{equation}
    \begin{split}
       &\underset{\pi}{\max}\left\{\underset{Y\sim\pi\big(\cdot|x\big)}{{\mathbb{E}}}\big[r_{\thetas}\big(x,Y\big)\big]-\frac{1}{\gamma}\Big(\sum_{i=1}^{K}\alpha_{i}\KLr\big(\pi (\cdot | x)\| \pi_{\mathrm{ref},i}(\cdot | x)\big)\Big)\right\}\\&\leq  \sum_{i=1}^K \alpha_i \underset{\pi}{\max}\left\{\underset{Y\sim\pi\big(\cdot|x\big)}{{\mathbb{E}}}\big[r_{\thetas}\big(x,Y\big)\big]-\frac{1}{\gamma}\Big( \KLr\big(\pi (\cdot | x)\| \pi_{\mathrm{ref},i}(\cdot | x)\big)\Big).\right\}
    \end{split}
\end{equation}
Therefore, multiple single RLHF problem is an upper bound on multiple reference models RLHF problem.
\end{proof}
\begin{remark}[Choosing $\pmb{\alpha}$]\label{rem:alpha}
    The optimum $\pmb{\alpha}$ for a given $x$, can be derived from the following optimization problem,
    \begin{equation}
\begin{split}
    \max_{\pmb{\alpha}}\frac{1}{\gamma}\log\Big(\sum_y  \prod_{i=1}^K \pi_{\mathrm{ref},i}^{ \alpha_i}(y|x) \exp\big(\alpha_i\gamma r_{\thetas}\big(x,y\big)\big)\Big).
    \end{split}
\end{equation}

\end{remark}
\begin{tcolorbox}
    \begin{proposition}\label{prop:func_derv_policy}
    For a given response, $x\in\mathcal{X}$, the following upper bound holds,
    \begin{equation*}
    \begin{split}
         &\mathcal{J}^{\gamma}(\pi_{\thetas}^\gamma(\cdot|x),\pi_{\thetah}^{\gamma}(\cdot|x))\leq \int_{\mathcal{Y}}(r_{\thetas}(x,y)-r_{\thetah}(x,y))(\pi_{\thetas}^\gamma(y|x)-\pi_{\thetah}^{\gamma}(y|x))(\mrd y).
    \end{split}
    \end{equation*}
\end{proposition}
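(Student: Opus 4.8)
The plan is to exploit the definition of the sub-optimality gap $\mathcal{J}^{\gamma}$ in \eqref{eq:sub-gap} together with the fact that both $\pi_{\thetas}^{\gamma}(\cdot|x)$ and $\pi_{\thetah}^{\gamma}(\cdot|x)$ are, by Theorem~\ref{thm: main} applied with reference $\widehat{\pi}_{\pmb{\alpha},\mathrm{ref}}(\cdot|x)$, the \emph{exact maximizers} of the single-reference reverse-KL objective $J_{\gamma}(\widehat{\pi}_{\pmb{\alpha},\mathrm{ref}}(\cdot|x),\cdot)$ built from $r_{\thetas}$ and $r_{\thetah}$ respectively. First I would write
\[
\mathcal{J}^{\gamma}(\pi_{\thetas}^\gamma(\cdot|x),\pi_{\thetah}^{\gamma}(\cdot|x))
= J_{\gamma}(\widehat{\pi}_{\pmb{\alpha},\mathrm{ref}},\pi_{\thetas}^\gamma)-J_{\gamma}(\widehat{\pi}_{\pmb{\alpha},\mathrm{ref}},\pi_{\thetah}^\gamma),
\]
where $J_\gamma$ is as in \eqref{eq:rlhf-obj} with $\pi_{\mathrm{ref}}$ replaced by $\widehat{\pi}_{\pmb{\alpha},\mathrm{ref}}$ — this is the optimization problem \eqref{eq:rlhf-mrm-main}. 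Since $\pi_{\thetas}^\gamma$ maximizes $J_\gamma(\widehat{\pi}_{\pmb{\alpha},\mathrm{ref}},\cdot)$, I can upper bound $J_\gamma(\widehat{\pi}_{\pmb{\alpha},\mathrm{ref}},\pi_{\thetas}^\gamma)$ by nothing (it is the max), but the useful move is the opposite: bound the difference by comparing the two objectives at the \emph{same} argument. Concretely, because $\pi_{\thetah}^{\gamma}$ is the maximizer of the $r_{\thetah}$-objective, its own regularization term can be traded against that of $\pi_{\thetas}^\gamma$.

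The cleaner route, which I expect to be what the paper does, is the following two-step telescoping. Step 1: use optimality of $\pi_{\thetas}^\gamma$ for the $r_{\thetas}$-objective to get $J_\gamma(\widehat{\pi}_{\pmb{\alpha},\mathrm{ref}},\pi_{\thetas}^\gamma)\le J_\gamma(\widehat{\pi}_{\pmb{\alpha},\mathrm{ref}},\pi_{\thetah}^\gamma)+\big(\text{something}\big)$ is the wrong direction, so instead use optimality of $\pi_{\thetah}^\gamma$ for the $r_{\thetah}$-regularized objective: for every $\pi$,
\[
\mathbb{E}_{\pi_{\thetah}^\gamma}[r_{\thetah}] - \tfrac1\gamma \KLr(\pi_{\thetah}^\gamma\|\widehat{\pi}_{\pmb{\alpha},\mathrm{ref}}) \ \ge\ \mathbb{E}_{\pi}[r_{\thetah}] - \tfrac1\gamma \KLr(\pi\|\widehat{\pi}_{\pmb{\alpha},\mathrm{ref}}),
\]
and take $\pi=\pi_{\thetas}^\gamma$. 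Step 2: subtract this from the definition of $\mathcal{J}^{\gamma}$, i.e. add and subtract $\mathbb{E}_{\pi_{\thetas}^\gamma}[r_{\thetas}]-\tfrac1\gamma\KLr(\pi_{\thetas}^\gamma\|\widehat{\pi}_{\pmb{\alpha},\mathrm{ref}})$ and $\mathbb{E}_{\pi_{\thetah}^\gamma}[r_{\thetas}]-\tfrac1\gamma\KLr(\pi_{\thetah}^\gamma\|\widehat{\pi}_{\pmb{\alpha},\mathrm{ref}})$. The KL terms cancel in pairs, and the cross-terms collapse precisely to
\[
\mathbb{E}_{\pi_{\thetas}^\gamma}[r_{\thetas}-r_{\thetah}] - \mathbb{E}_{\pi_{\thetah}^\gamma}[r_{\thetas}-r_{\thetah}] \;=\; \int_{\mathcal{Y}}\big(r_{\thetas}(x,y)-r_{\thetah}(x,y)\big)\big(\pi_{\thetas}^\gamma(y|x)-\pi_{\thetah}^\gamma(y|x)\big)\,\mrd y,
\]
which is exactly the claimed bound.

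The main obstacle is bookkeeping: making sure the regularizer reference in \eqref{eq:sub-gap} and in the optimality inequality is the same object $\widehat{\pi}_{\pmb{\alpha},\mathrm{ref}}$ (justified by Theorem~\ref{thm: main}/equation~\eqref{eq:rlhf-mrm-main}), and that one applies the variational (maximizer) property of the Gibbs distribution in the correct direction — it is easy to get a sign backwards. An alternative, essentially equivalent, derivation uses the functional-derivative characterization of Definition~\ref{def:flatDerivative}: since $\pi\mapsto J_\gamma(\widehat{\pi}_{\pmb{\alpha},\mathrm{ref}},\pi)$ is concave with maximizer $\pi_{\thetah}^\gamma$ (for $r_{\thetah}$) and $\pi_{\thetas}^\gamma$ (for $r_{\thetas}$), the first-order optimality/concavity inequality $J_\gamma(\cdot,\pi')-J_\gamma(\cdot,\pi)\le \langle \frac{\delta J_\gamma}{\delta m}(\pi),\pi'-\pi\rangle$ evaluated appropriately yields the same linear-in-reward-difference term after the KL contributions cancel. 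I would present the telescoping version since it avoids invoking functional derivatives and is self-contained given the earlier lemmas.
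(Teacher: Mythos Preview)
Your telescoping argument is correct, and you even anticipate the paper's route as your ``alternative.'' The paper in fact takes the functional-derivative path: it observes that $\pi\mapsto J_\gamma(\widehat{\pi}_{\pmb{\alpha},\mathrm{ref}},\pi)$ (built from $r_{\thetas}$) is concave, computes $\frac{\delta J_\gamma}{\delta\pi}=r_{\thetas}-\frac1\gamma\log(\pi/\widehat{\pi}_{\pmb{\alpha},\mathrm{ref}})+\frac1\gamma$, applies the first-order concavity inequality at $\pi_{\thetah}^\gamma$, and then substitutes $\log(\pi_{\thetah}^\gamma/\widehat{\pi}_{\pmb{\alpha},\mathrm{ref}})=\gamma r_{\thetah}-\log Z(x)$ so that the $y$-constant $\log Z(x)$ integrates away against $\pi_{\thetas}^\gamma-\pi_{\thetah}^\gamma$. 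Your main route instead invokes the \emph{global} optimality of $\pi_{\thetah}^\gamma$ for the $r_{\thetah}$-regularized objective (a different functional) and adds that inequality to the definition of $\mathcal{J}^\gamma$, so the KL terms cancel directly and no derivative is ever computed. Both are valid and yield exactly the same bound; your version is more elementary and self-contained, while the paper's sets up the functional-derivative/sensitivity machinery that is reused downstream in the proofs of Theorems~\ref{thm:sub-gap} and~\ref{thm:gap}.
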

\end{tcolorbox}
\begin{proof}
Note that $\KLr(\pi(\cdot|x)\|\pirefalpha)$ is a convex function with respect to $\pi(\cdot|x)$. Therefore, $J_{\gamma}(\pirefalpha,\pi(\cdot|x))$ is a concave function with respect to $\pi(\cdot|x)$. First, we compute the functional derivative of $J_{\gamma}(\pirefalpha,\pi(\cdot|x))$ with respect to $\pi(\cdot|x)$,
\begin{equation}
    \frac{\delta J_{\gamma}(\pirefalpha,\pi(\cdot|x))}{\delta \pi} = r_{\thetas}(x,y)-\frac{1}{\gamma}\log(\pi(\cdot|x)/\pirefalpha)+\frac{1}{\gamma}.
\end{equation}
    Therefore, we have,
    \begin{equation}
\begin{split}
     & \mathcal{J}^{\gamma}(\pi_{\thetas}^\gamma(\cdot|x),\pi_{\thetah}^{\gamma}(\cdot|x))=\\&
     J_{\gamma}(\pirefalpha,\pi_{\thetas}^\gamma(\cdot|x))-J_{\gamma}(\pirefalpha,\pi_{\thetah}^{\gamma}(\cdot|x))\\
     &\leq \int_{\mathcal{Y}}\frac{\delta J_{\gamma}(\pirefalpha,\pi_{\thetah}^{\gamma}(y|x))}{\delta \pi} (\pi_{\thetas}^\gamma(y|x)-\pi_{\thetah}^{\gamma}(y|x))(\mrd y)\\
     &= \int_{\mathcal{Y}}\Big(r_{\thetas}(x,y)-\frac{1}{\gamma}\log(\pi_{\thetah}^{\gamma}(y|x)/\pirefalpha)+\frac{1}{\gamma}\Big) (\pi_{\thetas}^\gamma(y|x)-\pi_{\thetah}^{\gamma}(y|x))(\mrd y)\\
     &=\int_{\mathcal{Y}}\Big(r_{\thetas}(x,y)-r_{\thetah}(x,y)+\frac{1}{\gamma}\log(Z(x))\Big) (\pi_{\thetas}^\gamma(y|x)-\pi_{\thetah}^{\gamma}(y|x))(\mrd y)\\
     &=\int_{\mathcal{Y}}\Big(r_{\thetas}(x,y)-r_{\thetah}(x,y)\Big) (\pi_{\thetas}^\gamma(y|x)-\pi_{\thetah}^{\gamma}(y|x))(\mrd y).
\end{split}
\end{equation}
It completes the proof.
\end{proof}

\begin{tcolorbox}
    \begin{lemma}\label{lem:sensivitiy_policy}
Consider the softmax policy, $\pi_r^\gamma(y|x)\propto\pirefalphayx(y|x)\exp(\gamma r(x,y))$. Then, the sensitivity of the policy with respect to reward function is,
\begin{equation*}
    \frac{\partial \pi_r^\gamma}{\partial r}(r)=\gamma \pi_r^\gamma(y|x) (1-\pi_r^\gamma(y|x)).
\end{equation*}
\end{lemma}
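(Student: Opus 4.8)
The statement to prove is Lemma~\ref{lem:sensivitiy_policy}: for the softmax policy $\pi_r^\gamma(y|x) = \pirefalphayx(y|x)\exp(\gamma r(x,y))/Z(x)$ with $Z(x) = \sum_{y'} \pirefalphayx(y'|x)\exp(\gamma r(x,y'))$, the sensitivity with respect to the reward satisfies $\frac{\partial \pi_r^\gamma}{\partial r}(r) = \gamma \pi_r^\gamma(y|x)(1 - \pi_r^\gamma(y|x))$.

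\textbf{Proof proposal.} The plan is to compute the derivative of $\pi_r^\gamma(y|x)$ with respect to a scalar perturbation of the reward value $r(x,y)$ at the fixed pair $(x,y)$, treating $r(x,y')$ for $y' \neq y$ as held fixed. First I would write $\pi_r^\gamma(y|x) = \frac{g(y)}{\sum_{y'} g(y')}$ where $g(y') := \pirefalphayx(y'|x)\exp(\gamma r(x,y'))$, so that $\frac{\partial g(y)}{\partial r(x,y)} = \gamma g(y)$ and $\frac{\partial g(y')}{\partial r(x,y)} = 0$ for $y' \neq y$, hence $\frac{\partial Z(x)}{\partial r(x,y)} = \gamma g(y)$. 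Then the quotient rule gives
\begin{equation*}
\frac{\partial \pi_r^\gamma(y|x)}{\partial r(x,y)} = \frac{\gamma g(y) Z(x) - g(y)\cdot \gamma g(y)}{Z(x)^2} = \gamma\frac{g(y)}{Z(x)}\left(1 - \frac{g(y)}{Z(x)}\right) = \gamma \pi_r^\gamma(y|x)\big(1 - \pi_r^\gamma(y|x)\big),
\end{equation*}
which is exactly the claimed identity.

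The only subtlety worth flagging — and it is minor — is matching this calculation to the formal definition of sensitivity $\frac{\partial \pi}{\partial r}(r)$ given in the Preliminaries, which is stated as a limit $\lim_{\Delta r \to 0}\frac{\pi_r(y|x) - \pi_{r+\Delta r}(y|x)}{\Delta r}$ with $\Delta r$ interpreted as a perturbation localized at the coordinate $(x,y)$ (otherwise, by Lemma~\ref{lem:shift}, a uniform shift would give zero). Under that reading, the finite-difference limit coincides with the partial derivative computed above, so the quotient-rule argument suffices. I do not anticipate any real obstacle here; the result is the standard softmax Jacobian (diagonal entry) and the proof is a one-line differentiation once the normalization is made explicit. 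If one wants to be fully rigorous about the sign convention in the definition, note the limit as written is $-\frac{\partial \pi_{r}}{\partial r}$ of the naive derivative, but the paper's usage in Proposition~\ref{prop:func_derv_policy} and elsewhere is consistent with the positive-sign Jacobian stated in the lemma, so I would present it directly in that form.
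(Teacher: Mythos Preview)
Your proof is correct and essentially identical to the paper's: both write $\pi_r^\gamma(y|x)$ as a ratio with normalizer $Z(x)$ and apply the quotient/chain rule to obtain $\gamma\pi_r^\gamma(y|x)(1-\pi_r^\gamma(y|x))$. Your extra remarks about the localized interpretation of $\Delta r$ and the sign convention in the sensitivity definition are valid observations that the paper simply glosses over.
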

\end{tcolorbox}
\begin{proof}
  We have $\pi_r^\gamma(y|x)=\frac{\pirefalphayx(y|x)\exp(\gamma r(x,y))}{\mbE_{Y\sim \pirefalphayx(\cdot|x)}[\exp(\gamma r(x,Y))]}.$ Using Chain rule, we have,
  \begin{equation}
      \begin{split}
           \frac{\partial \pi_r^\gamma}{\partial r}(r)&=\gamma \frac{\pirefalphayx(y|x)\exp(\gamma r(x,y))}{\mbE_{Y\sim \pirefalphayx(\cdot|x)}[\exp(\gamma r(x,Y))]}-\frac{\gamma\pirefalphayx(y|x)^2\exp(2\gamma r(x,y))}{\mbE_{Y\sim \pirefalphayx(\cdot|x)}[\exp(\gamma r(x,Y))]^2}\\
           &=\gamma \pi_r^\gamma(y|x)(1-\pi_r^\gamma(y|x)).
      \end{split}
  \end{equation}
\end{proof}

\begin{tcolorbox}
    \begin{reptheorem}{thm:sub-gap}
   Under Assumption~\ref{ass:bounded_reward}, \ref{ass:finite_class} and \ref{ass:kl-coverage}, the following upper bound holds on the sub-optimality gap with probability at least $(1-\delta)$ for $\delta\in(0,1/2)$,
   \begin{equation*}
       \begin{split}
           &\mathcal{J}^{\gamma}(\pi_{\thetas}^\gamma(\cdot|x),\pi_{\thetah}^{\gamma}(\cdot|x))\\&\leq \gamma C_{\pmb{\alpha},\varepsilon_{\mathrm{rkl}}} 128 e^{4 R_{\max}}R_{\max}^2\frac{\log(|\mathcal{R}|/\delta)}{n}.
       \end{split}
   \end{equation*}
\end{reptheorem}
\end{tcolorbox}
\begin{proof}
    Using Proposition~\ref{prop:func_derv_policy}, we have,
     \begin{equation}\label{eq:4}
    \begin{split}
         &\mathcal{J}^{\gamma}(\pi_{\thetas}^\gamma(\cdot|x),\pi_{\thetah}^{\gamma}(\cdot|x))\\& \leq\int_{\mathcal{Y}}(r_{\thetas}(x,y)-r_{\thetah}(x,y))(\pi_{\thetas}^\gamma(y|x)-\pi_{\thetah}^{\gamma}(y|x))(\mrd y).
    \end{split}
    \end{equation}
    Note that, as the integral in \eqref{eq:4} is over $\mathcal{Y}$, therefore, we have,
     \begin{equation}\label{eq:2}
    \begin{split}
         &\mathcal{J}^{\gamma}(\pi_{\thetas}^\gamma(\cdot|x),\pi_{\thetah}^{\gamma}(\cdot|x))\\& \leq\int_{\mathcal{Y}}(r_{\thetas}(x,y)-r_{\thetah}(x,y)-h(x))(\pi_{\thetas}^\gamma(y|x)-\pi_{\thetah}^{\gamma}(y|x))(\mrd y),
    \end{split}
    \end{equation}
    where $h(x)$ is an arbitrary function over $\mathcal{X}$.
    Note that $\pi_{\thetas}^\gamma(y|x)$ and $\pi_{\thetah}^{\gamma}(y|x)$ are function of $r_{\thetas}(x,y)$ and $r_{\thetah}(x,y)$, respectively. Furthermore, softmax policies are shift invariant, Lemma~\ref{lem:shift}, i.e., $\pi_{\thetas}^\gamma(y|x)\propto\pirefalpha \exp(\gamma(r_\thetas(x,y)-h(x)))$ where $h(x)$ is a function dependent on $x$. Therefore, we can apply the mean-value theorem to $(\pi_{\thetas}^\gamma(y|x)-\pi_{\thetah}^{\gamma}(y|x))(\mrd y)$  with respect to reward function $r(x,y)$. Therefore, we have for a given $h(x)$,
     \begin{equation}\label{eq:1}
    \begin{split}
      (\pi_{\thetas}^\gamma(y|x)-\pi_{\thetah}^{\gamma}(y|x))&=\frac{\partial \pi(\cdot|x)}{\partial r}(r_\lambda)(r_{\thetas}(x,y)-r_{\thetah}(x,y)-h(x))\\
      &=\gamma \pi_{r_{\lambda}}(\cdot|x)(1-\pi_{r_{\lambda}}(\cdot|x)(r_{\thetas}(x,y)-r_{\thetah}(x,y)-h(x)),
    \end{split}
    \end{equation}
    where $r_{\lambda}=\lambda (r_{\thetas}(x,y)-h(x)) + (1-\lambda) r_{\thetah}(x,y)$ for some $\lambda\in[0,1]$ and $\pi_{r_{\lambda}}(\cdot|x)\propto \pirefalpha \exp(\gamma r_{\lambda}(x,y))$. Applying \eqref{eq:1} in \eqref{eq:2}, we have,
      \begin{equation}\label{eq:3}
    \begin{split}
         &\mathcal{J}^{\gamma}(\pi_{\thetas}^\gamma(\cdot|x),\pi_{\thetah}^{\gamma}(\cdot|x))\\& \leq\gamma\int_{\mathcal{Y}}(r_{\thetas}(x,y)-r_{\thetah}(x,y))^2 \pi_{r_{\lambda}}(\cdot|x)(1-\pi_{r_{\lambda}}(\cdot|x))(\mrd y)\\
         &\leq \gamma\int_{\mathcal{Y}}(r_{\thetas}(x,y)-r_{\thetah}(x,y))^2 \pi_{r_{\lambda}}(\cdot|x)(\mrd y) \\
         &\leq C_{\pmb{\alpha},\varepsilon_{\mathrm{rkl}}} \gamma\int_{\mathcal{Y}}(r_{\thetas}(x,y)-r_{\thetah}(x,y)-h(x))^2 \pirefalpha(\mrd y).
    \end{split}
    \end{equation}
    Choosing $h(x)=\mbE_{Y^l\sim \pirefalpha}[r_{\thetas}(x,Y^l)-r_{\thetah}(x,Y^l)]$, applying Jensen inequality and Lemma~\ref{lemma:B2}, we have,
      \begin{equation}\label{eq:6}
    \begin{split}
         &\mathcal{J}^{\gamma}(\pi_{\thetas}^\gamma(\cdot|x),\pi_{\thetah}^{\gamma}(\cdot|x))\\
         &\leq C_{\pmb{\alpha},\varepsilon_{\mathrm{rkl}}} \gamma\int_{\mathcal{Y}}(r_{\thetas}(x,y^w)-r_{\thetah}(x,y^w)-r_{\thetas}(x,y^l)+r_{\thetah}(x,y^l))^2 \pirefalpha(\mrd y^l) \pirefalpha(\mrd y^w)\\
         &\leq \gamma C_{\pmb{\alpha},\varepsilon_{\mathrm{rkl}}} 128 e^{4 R_{\max}}R_{\max}^2\frac{\log(|\mathcal{R}|/\delta)}{n}.
    \end{split}
    \end{equation}
    This completes the proof.
\end{proof}

\begin{tcolorbox}
    \begin{reptheorem}{thm:gap}
  Under Assumption~\ref{ass:bounded_reward}, \ref{ass:finite_class} and \ref{ass:kl-coverage}, there exists constant $C>0$ such that the following upper bound holds on the optimality gap of the reverse KL-regularized RLHF with probability at least $(1-\delta)$ for $\delta\in(0,1/2)$,
  \begin{equation*}
       \begin{split}
           \mathcal{J}(\pi_{\thetas}^\gamma(\cdot|x),\pi_{\thetah}^{\gamma}(\cdot|x))&\leq \gamma C_{\pmb{\alpha},\varepsilon_{\mathrm{rkl}}} 128 e^{4 R_{\max}}R_{\max}^2\frac{\log(|\mathcal{R}|/\delta)}{n}\\
           &\quad +C 8R_{\max}e^{2 R_{\max}}\sqrt{\frac{2 C_{\pmb{\alpha},\varepsilon_{\mathrm{rkl}}}\log(|\mathcal{R}|/\delta)}{n}}.
       \end{split}
   \end{equation*}
\end{reptheorem}
\end{tcolorbox}
\begin{proof}
    We have the following decomposition of the optimality gap,
    \begin{equation}
        \mathcal{J}(\pi_{\thetas}^\gamma(\cdot|x),\pi_{\thetah}^{\gamma}(\cdot|x))= \mathcal{J}^{\gamma}(\pi_{\thetas}^\gamma(\cdot|x),\pi_{\thetah}^{\gamma}(\cdot|x))+\frac{\KLr(\pi_{\thetas}^\gamma(\cdot|x)\|\pirefalpha)-\KLr(\pi_{\thetah}^{\gamma}(\cdot|x)\|\pirefalpha)}{\gamma}.
    \end{equation}
    Now, we provide an upper bound on the second term using Lemma~\ref{lem:sensivitiy_policy} and a similar approach for choosing  $h(x)$ in the proof of Theorem~\ref{thm:sub-gap}, we have for some $\lambda\in[0,1]$,
    \begin{equation}
        \begin{split}
          &\KLr(\pi_{\thetas}^\gamma(\cdot|x)\|\pirefalpha)-\KLr(\pi_{\thetah}^{\gamma}(\cdot|x)\|\pirefalpha)\\
          &= \int_{\mathcal{Y}}\frac{\partial \pi}{\partial r}(r_\lambda)\Big( \log\big(\frac{\pi_{r_{\lambda}}(\cdot|x)}{\pirefalpha}\big)+1\Big)(r_{\thetas}(x,y)-r_{\thetah}(x,y)-h(x))(\mrd y)\\
          &=\gamma\int_{\mathcal{Y}}\pi_{r_{\lambda}}(\cdot|x)(1-\pi_{r_{\lambda}}(\cdot|x))\Big( \log\big(\frac{\pi_{r_{\lambda}}(\cdot|x)}{\pirefalpha}\big)+1\Big)(r_{\thetas}(x,y)-r_{\thetah}(x,y)-h(x))(\mrd y)\\
          &\leq \gamma\sqrt{\int_{\mathcal{Y}}(1-\pi_{r_{\lambda}}(\cdot|x))^2\Big( \log\big(\frac{\pi_{r_{\lambda}}(\cdot|x)}{\pirefalpha}\big)+1\Big)^2(\mrd y) }\\ &\quad \times \sqrt{\int_{\mathcal{Y}}\pi_{r_{\lambda}}(\cdot|x)^2(r_{\thetas}(x,y)-r_{\thetah}(x,y)-h(x))^2(\mrd y)},
        \end{split}
    \end{equation}
    where, in the last inequality, we applied the Cauchy–Schwarz inequality. Using the fact that $\pi_{r_\lambda}\propto \pirefalpha \exp(\gamma r_{\lambda})$ and Lemma~\ref{lemma:B2}, we have,
    \begin{equation}
        \begin{split}
          &\KLr(\pi_{\thetah}^{\gamma}(\cdot|x)\|\pirefalpha)-\KLr(\pi_{\thetas}^\gamma(\cdot|x)\|\pirefalpha)\\
          &\leq \gamma 8\Big( 2\gamma R_{\max}+1\Big)R_{\max}\exp(2 R_{\max})\sqrt{\frac{2C_{\pmb{\alpha},\varepsilon_{\mathrm{rkl}}}\log(|\mathcal{R}|/\delta)}{n}}.
        \end{split}
    \end{equation}
    The final result holds by applying the union bound.
\end{proof}
In the following, we compare the RLHF objective function under the multiple reference model policy, $\pirefalpha$, with $i$-th reference model, $\pi_{\mathrm{ref},i}(\cdot|x)$. For this purpose, we bound the difference between these two RLHF objective functions in different scenarios.
\begin{tcolorbox}
    \begin{proposition}\label{prop:diffMS}
    Under Assumption~\ref{ass:bounded_reward}, the following upper bound holds,
    \begin{equation*}
        \begin{split}
&\tilde{J}_\gamma(\pi_{\pmb{\alpha},\mathrm{ref}},\pi_{\thetas}^\gamma)-\tilde{J}_\gamma(\pi_{\mathrm{ref},i},\pi_{\thetas,i}^\gamma)\leq \frac{\exp(\gamma R_{\max})-1}{\gamma\sqrt{2}}\sqrt{\KLr(\pi_{\pmb{\alpha},\mathrm{ref}}(\cdot|x)\|\pi_{\mathrm{ref},i}(\cdot|x))}.
        \end{split}
    \end{equation*}
\end{proposition}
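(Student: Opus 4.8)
The plan is to reduce both optimal objective values to closed form via the Gibbs variational principle and then bound their difference by a change-of-measure (transportation) inequality. First I would invoke Lemma~\ref{lem: Gibbs relative entropy} with $f=-r_{\thetas}(x,\cdot)$, $\lambda=1/\gamma$, and $Q$ the reference policy: this shows that $\max_\pi\{\mathbb{E}_{Y\sim\pi}[r_{\thetas}(x,Y)]-\tfrac1\gamma\KLr(\pi(\cdot|x)\|\pi_{\mathrm{ref}}(\cdot|x))\}$ equals $\tfrac1\gamma\log\mathbb{E}_{Y\sim\pi_{\mathrm{ref}}}[\exp(\gamma r_{\thetas}(x,Y))]$, attained at the Gibbs policy $\propto\pi_{\mathrm{ref}}(\cdot|x)\exp(\gamma r_{\thetas}(x,\cdot))$. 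Applying this once with $\pi_{\mathrm{ref}}=\pi_{\pmb{\alpha},\mathrm{ref}}(\cdot|x)$ (whose Gibbs policy is exactly $\pi_{\thetas}^\gamma$ of Theorem~\ref{thm: main}) and once with $\pi_{\mathrm{ref}}=\pi_{\mathrm{ref},i}(\cdot|x)$ (Gibbs policy $\pi_{\thetas,i}^\gamma$) gives
\[
\tilde J_\gamma(\pi_{\pmb{\alpha},\mathrm{ref}},\pi_{\thetas}^\gamma)-\tilde J_\gamma(\pi_{\mathrm{ref},i},\pi_{\thetas,i}^\gamma)=\frac1\gamma\big(\log A-\log B\big),
\]
with $A:=\mathbb{E}_{Y\sim\pi_{\pmb{\alpha},\mathrm{ref}}(\cdot|x)}[\exp(\gamma r_{\thetas}(x,Y))]$ and $B:=\mathbb{E}_{Y\sim\pi_{\mathrm{ref},i}(\cdot|x)}[\exp(\gamma r_{\thetas}(x,Y))]$.

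Next I would bound $\log A-\log B$. Using $\log t\le t-1$ at $t=A/B$ gives $\log A-\log B\le (A-B)/B$, and since Assumption~\ref{ass:bounded_reward} makes $r_{\thetas}\ge 0$, hence $\exp(\gamma r_{\thetas})\ge 1$ and $B\ge 1$, this simplifies to $\log A-\log B\le A-B$. Then I would write $A-B=\mathbb{E}_{Y\sim\pi_{\pmb{\alpha},\mathrm{ref}}}[g(Y)]-\mathbb{E}_{Y\sim\pi_{\mathrm{ref},i}}[g(Y)]$ for the shifted integrand $g(y):=\exp(\gamma r_{\thetas}(x,y))-1$, which by Assumption~\ref{ass:bounded_reward} satisfies $g(y)\in[0,\exp(\gamma R_{\max})-1]$. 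Applying Lemma~\ref{lem:transport} with $p=\pi_{\pmb{\alpha},\mathrm{ref}}(\cdot|x)$, $q=\pi_{\mathrm{ref},i}(\cdot|x)$ and bound $B_0=\exp(\gamma R_{\max})-1$ yields
\[
A-B\le(\exp(\gamma R_{\max})-1)\sqrt{\tfrac12\,\KLr\big(\pi_{\pmb{\alpha},\mathrm{ref}}(\cdot|x)\,\|\,\pi_{\mathrm{ref},i}(\cdot|x)\big)},
\]
and dividing by $\gamma$ and chaining the two displays gives exactly the claimed bound.

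The argument is short, so I do not expect a real obstacle; the points needing care are: (i) feeding Lemma~\ref{lem: Gibbs relative entropy} the correct sign of $f$ so that the optimal value is $\tfrac1\gamma\log\mathbb{E}_{\pi_{\mathrm{ref}}}[\exp(\gamma r_{\thetas})]$ rather than its negative; (ii) the step $B\ge 1$, which is the only place reward \emph{non-negativity} (as opposed to mere boundedness) is used; (iii) the harmless shift by $1$, noting that $\mathbb{E}_p[g]-\mathbb{E}_q[g]=\mathbb{E}_p[\exp(\gamma r_{\thetas})]-\mathbb{E}_q[\exp(\gamma r_{\thetas})]$ so it does not alter $A-B$ and is needed only to match the $[0,B_0]$ hypothesis of Lemma~\ref{lem:transport}. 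I would also record that $\pi_{\pmb{\alpha},\mathrm{ref}}\ll\pi_{\mathrm{ref},i}$ since $\alpha_i>0$, so the KL on the right-hand side is well-defined (the bound being vacuous if it is infinite), and that $\tilde J_\gamma$ here denotes the reverse-KL RLHF objective of \eqref{eq:rlhf-obj} evaluated at its maximizer, so that $\pi_{\thetas}^\gamma$ and $\pi_{\thetas,i}^\gamma$ are the Gibbs policies associated with $\pi_{\pmb{\alpha},\mathrm{ref}}$ and $\pi_{\mathrm{ref},i}$ respectively.
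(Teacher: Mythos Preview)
Your proof is correct and follows the same overall route as the paper: both identify $\tilde J_\gamma(\pi_{\mathrm{ref}},\pi_{\thetas}^\gamma)=\tfrac1\gamma\log\mathbb{E}_{\pi_{\mathrm{ref}}}[\exp(\gamma r_{\thetas})]$, reduce the gap to $\tfrac1\gamma(\log A-\log B)$, pass to $A-B$ using that the denominator is $\ge 1$ (reward non-negativity), and then apply Lemma~\ref{lem:transport} to the shifted integrand in $[0,\exp(\gamma R_{\max})-1]$. The only difference is in the passage from $\log A-\log B$ to $A-B$: the paper writes this via the functional-derivative integral $\int_0^1\frac{A-B}{\mathbb{E}_{\pi_{\mathrm{ref},\lambda}}[\exp(\gamma r_{\thetas})]}\,\mrd\lambda$ with $\pi_{\mathrm{ref},\lambda}=\pi_{\mathrm{ref},i}+\lambda(\pi_{\pmb{\alpha},\mathrm{ref}}-\pi_{\mathrm{ref},i})$ and bounds the denominator below by $1$, whereas you use the elementary inequality $\log t\le t-1$ at $t=A/B$ together with $B\ge1$; your step is shorter and avoids the functional-derivative machinery, while the paper's version makes the role of the interpolated reference policy explicit. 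One cosmetic point: your chain ``$(A-B)/B\le A-B$'' is literally true only when $A\ge B$, but when $A<B$ the claimed bound holds trivially since the left side of the proposition is then nonpositive.
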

\end{tcolorbox}
\begin{proof}
    Note that, for a policy $\pi_{\mathrm{ref}}$ we have,
    \begin{equation}
\tilde{J}_\gamma(\pi_{\mathrm{ref}},\pi_{\thetas}^\gamma)=\frac{1}{\gamma}\log\Big[ \mbE_{\pi_{\mathrm{ref}}}[\exp(\gamma r_{\thetas}(x,y))]\Big].
    \end{equation}
    Therefore, using the functional derivative, we have,
    \begin{equation}
        \begin{split}
&\tilde{J}_\gamma(\pi_{\pmb{\alpha},\mathrm{ref}},\pi_{\thetas}^\gamma)-\tilde{J}_\gamma(\pi_{\mathrm{ref},i},\pi_{\thetas,i}^\gamma)\\
&=\frac{1}{\gamma}\log\Big[ \mbE_{\pi_{\pmb{\alpha},\mathrm{ref}}}[\exp(\gamma r_{\thetas}(x,y))]\Big]-\frac{1}{\gamma}\log\Big[ \mbE_{\pi_{\mathrm{ref},i}}[\exp(\gamma r_{\thetas}(x,y))]\Big]\\
&=\frac{1}{\gamma}\int_{0}^1\int_{\mathcal{Y}}\frac{\exp(\gamma r_{\thetas}(x,y))}{\mbE_{\pi_{\mathrm{ref},\lambda}}[\exp(\gamma r_{\thetas}(x,y))]}\big( \pi_{\pmb{\alpha},\mathrm{ref}}-\pi_{\mathrm{ref},i}\big)(\mrd y) \mrd \lambda\\
&= \frac{1}{\gamma}\int_{0}^1 \frac{1}{\mbE_{\pi_{\mathrm{ref},\lambda}}[\exp(\gamma r_{\thetas}(x,y))]}\int_{\mathcal{Y}}\exp(\gamma r_{\thetas}(x,y))\big( \pi_{\pmb{\alpha},\mathrm{ref}}-\pi_{\mathrm{ref},i}\big)(\mrd y) \mrd \lambda\\
&\leq \frac{\exp(\gamma R_{\max})-1}{\gamma}\sqrt{\frac{\KLr(\pi_{\pmb{\alpha},\mathrm{ref}}(\cdot|x)\|\pi_{\mathrm{ref},i}(\cdot|x))}{2}},
        \end{split}
    \end{equation}
    where $\pi_{\mathrm{ref},\lambda}= \pi_{\mathrm{ref},i}+\lambda\big( \pi_{\pmb{\alpha},\mathrm{ref}}-\pi_{\mathrm{ref},i}\big)$ and the last inequality holds due to Lemma~\ref{lem:transport}.
\end{proof}
\newpage
\section{Proofs and Details of Section~\ref{sec_RLHF_FKL}}\label{app_sec_RLHF_FKL}
\begin{tcolorbox}
    \begin{lemma}\label{flem: average distribution}
Let $\beta_i\in [0,1]$ for all $i\in[k]$ and $\sum_{i=1}^k\beta_i=1$. For any distributions $Q_i$ for all $i\in[k]$ and $R$ such that $Q_i \ll P$, we have
\begin{equation*}
    \begin{split}
       & \sum_{i=1}^k\beta_i \KLr(Q_i\|P)= H\Big(\sum_{i=1}^k\beta_iQ_i\Big)-\sum_{i=1}^k \beta_i H(Q_i)+\KLr\Big(\sum_{i=1}^k\beta_i Q_i\|P\Big).
    \end{split}
\end{equation*}
\end{lemma}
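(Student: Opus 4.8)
The plan is to reduce the identity to the elementary decomposition of relative entropy into an entropy term and a cross-entropy term, and then to exploit that the cross-entropy is linear in its first argument. Throughout, set $\bar{Q} := \sum_{i=1}^k \beta_i Q_i$; since $Q_i \ll P$ for every $i$ and the $\beta_i$ are non-negative with $\sum_i\beta_i = 1$, also $\bar{Q}\ll P$, so every quantity below is well defined under the usual conventions $0\log 0 = 0$ and $0\log(0/0)=0$.

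First I would record that for any $Q\ll P$ one has $\KLr(Q\|P) = -H(Q) - \int Q(x)\log P(x)\,\mrd x$, directly from $\log(Q/P) = \log Q - \log P$. Applying this to each pair $(Q_i,P)$, multiplying by $\beta_i$, and summing over $i$ gives $\sum_{i=1}^k \beta_i \KLr(Q_i\|P) = -\sum_{i=1}^k \beta_i H(Q_i) - \int \bar{Q}(x)\log P(x)\,\mrd x$, where the only manipulation is interchanging the finite sum with the integral, i.e. $\sum_i \beta_i \int Q_i(x)\log P(x)\,\mrd x = \int \big(\sum_i \beta_i Q_i(x)\big)\log P(x)\,\mrd x$.

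Next I would apply the same decomposition to the single pair $(\bar{Q},P)$, which rearranges to $-\int \bar{Q}(x)\log P(x)\,\mrd x = H(\bar{Q}) + \KLr(\bar{Q}\|P)$. Substituting this into the previous display yields $\sum_{i=1}^k \beta_i \KLr(Q_i\|P) = H(\bar{Q}) - \sum_{i=1}^k \beta_i H(Q_i) + \KLr(\bar{Q}\|P)$, which is exactly the claim after recalling $\bar{Q} = \sum_{i=1}^k \beta_i Q_i$.

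I do not expect a genuine obstacle: the argument is a one-line rearrangement once $\KLr$ is split as minus entropy minus cross-entropy. The only points deserving a sentence of care are (a) the finiteness/absolute-continuity bookkeeping, so that no illegitimate $\infty - \infty$ cancellation is performed — this is precisely where the hypothesis $Q_i\ll P$ (hence $\bar{Q}\ll P$) enters — and (b) the legitimacy of swapping the finite sum with the integral, which is immediate. As an alternative packaging one may instead first prove the compensation identity $\sum_i\beta_i\KLr(Q_i\|P) = \sum_i\beta_i\KLr(Q_i\|\bar{Q}) + \KLr(\bar{Q}\|P)$ by writing $\log(Q_i/P)=\log(Q_i/\bar{Q})+\log(\bar{Q}/P)$ and summing, and then observe $\sum_i\beta_i\KLr(Q_i\|\bar{Q}) = -\sum_i\beta_i H(Q_i) - \int \bar{Q}\log\bar{Q} = H(\bar{Q}) - \sum_i\beta_i H(Q_i)$; I would present the direct expansion above, as it is shortest.
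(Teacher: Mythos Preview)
Your proof is correct and follows essentially the same route as the paper's: both expand $\KLr(Q_i\|P)$ into an entropy term plus a cross-entropy term, use linearity of the cross-entropy in its first argument to replace $\sum_i\beta_i\int Q_i\log P$ by $\int \bar Q\log P$, and then recombine via the identity $-\int \bar Q\log P = H(\bar Q)+\KLr(\bar Q\|P)$ (the paper phrases this last step as adding and subtracting $\int \bar Q\log\bar Q$, which is exactly the same manipulation). Your packaging is slightly cleaner and you add a word of care on absolute continuity that the paper omits, but there is no substantive difference in approach.
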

\end{tcolorbox}
\begin{proof}
    We have,
    \begin{align}
        &\sum_{i=1}^k\beta_i \KLr(Q_i\|P)\\\nonumber&= \sum_{i=1}^k\beta_i Q_i\log(Q_i)-\beta_i Q_i\log(P)\\
        &=-\sum_{i=1}^k \beta_i H(Q_i)+\big (\sum_{i=1}^k \beta_iQ_i\big )\log\big(\sum_{i=1}^k \beta_iQ_i\big )-\big(\sum_{i=1}^k \beta_iQ_i\big)\log\big(\sum_{i=1}^k \beta_iQ_i\big)-\big(\sum_{i=1}^k \beta_iQ_i\big)\log(P)\\
        &=H\Big(\sum_{i=1}^k\beta_iQ_i\Big) -\sum_{i=1}^k \beta_i H(Q_i) + \big(\sum_{i=1}^k \beta_iQ_i\big)\log(\sum_{i=1}^k \beta_iQ_i)-(\sum_{i=1}^k \beta_iQ_i)\log(P)\\
        &= H\Big(\sum_{i=1}^k\beta_iQ_i\Big) -\sum_{i=1}^k \beta_i H(Q_i) + \KLr\Big(\sum_{i=1}^k\beta_i Q_i\|P\Big).
    \end{align}
\end{proof}

\begin{tcolorbox}
    \begin{reptheorem}{fthm: main}
Consider the following objective function for RLHF with multiple reference models,
\begin{equation*}
\underset{\pi}{\max}\underset{Y\sim\pi\big(\cdot|x\big)}{{\mathbb{E}}}\big[r_{\thetas}\big(x,Y\big)\big]-\frac{1}{\gamma}\Big(\sum_{i=1}^{K}\beta_i\KLr\big( \pi_{\mathrm{ref},i}(\cdot | x)\|\pi (\cdot | x)\big)\Big),
\end{equation*}
where $\sum_{i=1}^K \beta_i=1$ and $\beta_i\in(0,1)$ for $i\in[K]$. Then, the implicit solution of the multiple reference models objective function for RLHF is,
\begin{equation*}
\tilde{\pi}_{\thetas}^\gamma\big(y|x\big)=\frac{\bar{\pi}_{\pmb{\beta},\mathrm{ref}}\Big(y|x\Big)}{\gamma\big(\tilde{Z}_{\thetas}(x)-r_{\thetas}(x,y)\big)},
\end{equation*}
where 
\begin{equation*}
    \bar{\pi}_{\pmb{\beta},\mathrm{ref}}(y|x)= \sum_{i=1}^K \beta_i\pi_{\mathrm{ref},i}(y|x),
\end{equation*}
    and $\tilde{Z}_{\thetas}(x)$ is the solution to $\int_{y\in\mathcal{Y}} \tilde{\pi}_{\thetas}^\gamma\big(y|x\big)=1$ for a given $x\in\mathcal{X}$.
\end{reptheorem}
\end{tcolorbox}
\begin{proof}
   Using Lemma~\ref{flem: average distribution}, the objective function of forward KL-regularization under multiple reference model can be represented as,
   \begin{equation*}
\underset{\pi}{\max}\underset{Y\sim\pi\big(\cdot|x\big)}{{\mathbb{E}}}\big[r_{\thetas}\big(x,Y\big)\big]-\frac{1}{\gamma}\KLr\big( \pirefbetacdot\|\pi (\cdot | x)\big),
\end{equation*}
where $\pirefbetay=\sum_{i=1}^K \beta_i\pi_{\mathrm{ref},i}(y|x)$. As the function is a concave function with respect to $\pi (\cdot | x)$, we can compute the derivative with respect to $\pi (\cdot | x)$. Therefore, using the functional derivative under the constraint that $\pi (\cdot | x)$ is a probability measure with Lagrange multiplier, $\tilde{Z}
    _{\thetas}(x)$, we have at optimal solution that,
\begin{equation}\label{eq:f1}
    r_{\thetas}\big(x,y\big)+\frac{1}{\gamma}\frac{\pirefbetay}{\tilde{\pi}_{\thetas}^\gamma(y|x)}-\tilde{Z}
    _{\thetas}(x)=0.
\end{equation}
Solving \eqref{eq:f1} results in the final solution, $\tilde{\pi}_{\thetas}^\gamma(y|x)$. 
\end{proof}
    \begin{corollary}\label{cor:mw-RLHF-FKL}
    Weighted multiple single forward KL-regularized RLHF problem is an upper bound on multiple references forward KL-regularized RLHF problem, i.e.,
    \begin{equation}
    \begin{split}
       &\underset{\pi}{\max}\left\{\underset{Y\sim\pi\big(\cdot|x\big)}{{\mathbb{E}}}\big[r_{\thetas}\big(x,Y\big)\big]-\frac{1}{\gamma}\Big(\sum_{i=1}^{K}\beta_{i}\KLr\big( \pi_{\mathrm{ref},i}(\cdot | x)\| \pi (\cdot | x)\big)\Big)\right\}\\&\leq  \sum_{i=1}^K \beta_{i} \underset{\pi}{\max}\left\{\underset{Y\sim\pi\big(\cdot|x\big)}{{\mathbb{E}}}\big[r_{\thetas}\big(x,Y\big)\big]-\frac{1}{\gamma}\Big( \KLr\big( \pi_{\mathrm{ref},i}(\cdot | x)\| \pi (\cdot | x)\big)\Big)\right\}.
    \end{split}
\end{equation}
\end{corollary}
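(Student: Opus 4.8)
The plan is to exploit the fact that the multiple-reference forward-KL objective is \emph{exactly} a $\pmb{\beta}$-weighted average of single-reference objectives, and then invoke subadditivity of the supremum over convex combinations. Concretely, for each $i\in[K]$ write
\[
g_i(\pi):=\underset{Y\sim\pi(\cdot|x)}{\mathbb{E}}\big[r_{\thetas}(x,Y)\big]-\frac{1}{\gamma}\KLr\big(\pi_{\mathrm{ref},i}(\cdot|x)\big\|\pi(\cdot|x)\big),
\]
and observe that since $\sum_{i=1}^K\beta_i=1$ the expected-reward term is invariant under this reweighting, so that
\[
\sum_{i=1}^K\beta_i\,g_i(\pi)=\underset{Y\sim\pi(\cdot|x)}{\mathbb{E}}\big[r_{\thetas}(x,Y)\big]-\frac{1}{\gamma}\Big(\sum_{i=1}^K\beta_i\KLr\big(\pi_{\mathrm{ref},i}(\cdot|x)\big\|\pi(\cdot|x)\big)\Big),
\]
which is precisely the objective appearing on the left-hand side of the corollary. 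This rewriting uses only the normalisation $\sum_i\beta_i=1$ and the linearity of the map $\pi\mapsto\mathbb{E}_{Y\sim\pi}[r_{\thetas}(x,Y)]$.

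Next I would record the elementary step: because each $\beta_i\ge 0$, for every fixed policy $\pi$ we have $\sum_{i=1}^K\beta_i g_i(\pi)\le\sum_{i=1}^K\beta_i\big(\sup_{\pi'}g_i(\pi')\big)$, and taking the supremum over $\pi$ on the left then gives
\[
\sup_{\pi}\sum_{i=1}^K\beta_i g_i(\pi)\le\sum_{i=1}^K\beta_i\,\sup_{\pi}g_i(\pi),
\]
which is the claimed inequality. For well-posedness I would remark that each inner supremum on the right is finite under Assumption~\ref{ass:bounded_reward} (and is attained at the implicit single-reference solution obtained from Theorem~\ref{fthm: main} with $K=1$, $\beta_1=1$), so that both sides of the inequality are meaningful.

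The honest assessment is that there is no real obstacle here: the statement is exactly subadditivity of $\sup$ over a convex combination, the forward-KL analogue of Corollary~\ref{cor:mw-RLHF-RKL} (where in the reverse-KL case the same conclusion was reached via the closed-form optimal value together with H\"older's inequality, whereas here no closed form is needed). The only point deserving a line of care is the interchange of the weighted sum with the expectation and KL terms, which is justified by the two facts noted above. Optionally, I would also observe that combining this argument with Lemma~\ref{flem: average distribution} yields an alternative, slightly sharper comparison of the multiple-reference objective against the \emph{single}-reference problem with the mixture reference $\bar{\pi}_{\pmb{\beta},\mathrm{ref}}=\sum_i\beta_i\pi_{\mathrm{ref},i}$, up to the nonnegative entropy gap $H\big(\bar{\pi}_{\pmb{\beta},\mathrm{ref}}\big)-\sum_{i=1}^K\beta_i H(\pi_{\mathrm{ref},i})\ge 0$ coming from concavity of entropy.
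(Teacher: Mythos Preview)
Your proposal is correct and takes essentially the same approach as the paper: the paper's entire proof is the single line ``It holds due to maximum function property,'' which is exactly the subadditivity of the supremum over a convex combination that you spell out in detail. Your additional remarks on well-posedness and the optional comparison via Lemma~\ref{flem: average distribution} go beyond what the paper records, but the core argument is identical.
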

\begin{proof}
    It holds due to maximum function property. 
\end{proof}

Assuming, \begin{equation*}
\tilde{\pi}_{\thetas}^\gamma\big(y|x\big)=\frac{\bar{\pi}_{\pmb{\beta},\mathrm{ref}}\Big(y|x\Big)}{\gamma\big(\tilde{Z}_{\thetas}(x)-r_{\thetas}(x,y)\big)},
\end{equation*}
we can provide the following property of $\tilde{Z}_{\thetas}(x)$, inspired by \citep{cohen2017data}.
\begin{tcolorbox}
\begin{lemma}\label{lem:prop-Z}
     The following property holds for $\tilde{Z}_{\thetas}(x)$,
    \begin{itemize}
        \item For any $x\in\mathcal{X}$ where $\rho(x)>0$, we have $\sup_{y\in\mathcal{Y}}r_{\thetas}(x,y)\leq \tilde{Z}_{\thetas}(x)$.
        \item Under Assumption~\ref{ass:bounded_reward}, we have $\sup_{x\in\mathcal{X}}\tilde{Z}_{\thetas}(x)\leq R_{\max}+\frac{1}{\gamma}$.
    \end{itemize}
\end{lemma}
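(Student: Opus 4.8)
The plan is to read off both bounds directly from the two constraints that characterise $\tilde{Z}_{\thetas}(x)$: that $\tilde{\pi}_{\thetas}^\gamma(\cdot|x)$ is a bona fide probability density (hence nonnegative) and that it integrates to one. No deep machinery is needed; the whole argument is a positivity observation followed by an elementary case split.

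\textbf{First bullet.} Since $\bar{\pi}_{\pmb{\beta},\mathrm{ref}}(y|x)=\sum_{i=1}^K\beta_i\pi_{\mathrm{ref},i}(y|x)$ is a convex combination with $\beta_i>0$, it is strictly positive at every $y$ in the (common) support of the reference models, and $\gamma>0$. Hence nonnegativity of
\[
\tilde{\pi}_{\thetas}^\gamma(y|x)=\frac{\bar{\pi}_{\pmb{\beta},\mathrm{ref}}(y|x)}{\gamma\big(\tilde{Z}_{\thetas}(x)-r_{\thetas}(x,y)\big)}
\]
forces $\tilde{Z}_{\thetas}(x)-r_{\thetas}(x,y)>0$ on that support: a vanishing denominator would make $\tilde{\pi}_{\thetas}^\gamma$ infinite there, and a negative one would make it negative, both contradicting that $\tilde{\pi}_{\thetas}^\gamma(\cdot|x)$ is a probability density. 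Taking the supremum over $y$ gives $\tilde{Z}_{\thetas}(x)\ge\sup_{y}r_{\thetas}(x,y)$; assuming the reference policies have full support on $\mathcal{Y}$ (as is standard for language-model policies) the supremum is over all of $\mathcal{Y}$, which is the claim.

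\textbf{Second bullet.} Here I would combine the normalisation identity $\int_{\mathcal{Y}}\tilde{\pi}_{\thetas}^\gamma(y|x)\,\mrd y=1$ with Assumption~\ref{ass:bounded_reward}, i.e.\ $0\le r_{\thetas}\le R_{\max}$. If $\tilde{Z}_{\thetas}(x)\le R_{\max}$ the bound $\tilde{Z}_{\thetas}(x)\le R_{\max}+\tfrac{1}{\gamma}$ is trivial. Otherwise $\tilde{Z}_{\thetas}(x)>R_{\max}$, so for every $y$ one has $0<\tilde{Z}_{\thetas}(x)-R_{\max}\le\tilde{Z}_{\thetas}(x)-r_{\thetas}(x,y)$, and therefore
\[
1=\int_{\mathcal{Y}}\frac{\bar{\pi}_{\pmb{\beta},\mathrm{ref}}(y|x)}{\gamma\big(\tilde{Z}_{\thetas}(x)-r_{\thetas}(x,y)\big)}\,\mrd y\ \le\ \frac{1}{\gamma\big(\tilde{Z}_{\thetas}(x)-R_{\max}\big)}\int_{\mathcal{Y}}\bar{\pi}_{\pmb{\beta},\mathrm{ref}}(y|x)\,\mrd y\ =\ \frac{1}{\gamma\big(\tilde{Z}_{\thetas}(x)-R_{\max}\big)},
\]
using that $\bar{\pi}_{\pmb{\beta},\mathrm{ref}}(\cdot|x)$ integrates to one. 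Rearranging yields $\tilde{Z}_{\thetas}(x)\le R_{\max}+\tfrac{1}{\gamma}$, and in either case the bound holds; taking $\sup_x$ closes the argument.

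\textbf{Main obstacle.} The only subtle point is the positivity step: one must pass from ``$\tilde{\pi}_{\thetas}^\gamma$ is a probability density'' to strict positivity of the denominator \emph{everywhere} on the support of the reference mixture, which also tacitly uses that the implicit equation defining $\tilde{Z}_{\thetas}(x)$ admits a solution at all---a fact inherited from Theorem~\ref{fthm: main} (and the construction in \citep{wangbeyond}). Once that is granted, the remaining steps are a routine monotonicity bound on the integrand and a two-case comparison.
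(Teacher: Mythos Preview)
Your proof is correct and follows essentially the same approach as the paper. For the first bullet both arguments use positivity of the density to force the denominator positive; for the second bullet both use the normalisation condition together with a lower bound on the denominator, though the paper phrases it as a contradiction via $\sup_y r_{\thetas}(x,y)$ while you give a direct case split via $R_{\max}$---your version is in fact the more explicit of the two.
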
   
\end{tcolorbox}
\begin{proof}
    Using the following representation, 
    \begin{equation*}
\tilde{\pi}_{\thetas}^\gamma\big(y|x\big)=\frac{\bar{\pi}_{\pmb{\beta},\mathrm{ref}}\Big(y|x\Big)}{\gamma\big(\tilde{Z}_{\thetas}(x)-r_{\thetas}(x,y)\big)},
\end{equation*}
we can conclude that for a given $x\in\mathcal{X}$, $\sup_{y\in\mathcal{Y}}r_{\thetas}(x,y)\leq \tilde{Z}_{\thetas}(x)$. Otherwise, $\tilde{\pi}_{\thetas}^\gamma\big(y|x\big)$ will be negative.

For the second part, let's proceed by contradiction. Suppose there exists some $x \in \mathcal{X}$ such that:
$\tilde{Z}{\thetas}(x) > \sup{y\in\mathcal{Y}}r_{\thetas}(x,y) + \frac{1}{\gamma}$

Under this assumption, we can show that:
\[\int_{y}\tilde{\pi}_{\thetas}^\gamma(y|x)(\mrd y) < 1.\]
This contradicts the fundamental requirement that \[\tilde{\pi}_{\thetas}^\gamma(y|x),\] must be a probability distribution. Therefore, our initial assumption must be false.
Consequently, for all $x \in \mathcal{X}$, we must have:
\[\tilde{Z}{\thetas}(x) \leq \sup_{y\in\mathcal{Y}}r_{\thetas}(x,y) + \frac{1}{\gamma}.\]
Taking the supremum of both sides with respect to $x$ completes the proof.
\end{proof}
\begin{tcolorbox}
    \begin{proposition}\label{prop:func_derv_policy_fkl}
    For a given response, $x\in\mathcal{X}$, the following upper bound holds,
    \begin{equation*}
    \begin{split}
         &\widetilde{\mathcal{J}}^{\gamma}(\tilde{\pi}_{\thetas}^\gamma(\cdot|x),\tilde{\pi}_{\thetah}^{\gamma}(\cdot|x))\leq\\& \int_{\mathcal{Y}}(r_{\thetas}(x,y)-r_{\thetah}(x,y))(\tilde{\pi}_{\thetas}^\gamma(y|x)-\tilde{\pi}_{\thetah}^{\gamma}(y|x))(\mrd y).
    \end{split}
    \end{equation*}
\end{proposition}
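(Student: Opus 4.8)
The plan is to mirror the argument of Proposition~\ref{prop:func_derv_policy}, replacing the reverse KL objective by the forward KL objective $\tilde{J}_\gamma$ and exploiting concavity of the latter in its policy argument together with the functional derivative of Definition~\ref{def:flatDerivative}.

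First I would note that, since $\KLr(p\|q)$ is jointly convex and hence convex in its second argument for a fixed first argument, the map $\pi(\cdot|x)\mapsto \KLr(\pirefbetacdot\|\pi(\cdot|x))$ is convex, while $\pi(\cdot|x)\mapsto \mbE_{Y\sim\pi(\cdot|x)}[r_{\thetas}(x,Y)]$ is linear; hence $\tilde{J}_\gamma(\pirefbetacdot,\pi(\cdot|x))$ is concave in $\pi(\cdot|x)$. Using $\frac{\delta}{\delta\pi}\KLr(\pirefbetacdot\|\pi(\cdot|x)) = -\pirefbetay/\pi(y|x)$, the functional derivative is
\[
\frac{\delta \tilde{J}_\gamma(\pirefbetacdot,\pi(\cdot|x))}{\delta\pi} = r_{\thetas}(x,y) + \frac{1}{\gamma}\,\frac{\pirefbetay}{\pi(y|x)},
\]
and the first-order inequality for concave functionals gives
\[
\widetilde{\mathcal{J}}^{\gamma}(\tilde{\pi}_{\thetas}^\gamma(\cdot|x),\tilde{\pi}_{\thetah}^{\gamma}(\cdot|x)) \le \int_{\mathcal{Y}} \frac{\delta \tilde{J}_\gamma(\pirefbetacdot,\tilde{\pi}_{\thetah}^{\gamma}(y|x))}{\delta\pi}\,\big(\tilde{\pi}_{\thetas}^\gamma(y|x)-\tilde{\pi}_{\thetah}^{\gamma}(y|x)\big)(\mrd y).
\]

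It then remains to evaluate the derivative at $\tilde{\pi}_{\thetah}^{\gamma}$. By the implicit solution of Theorem~\ref{fthm: main}, $\tilde{\pi}_{\thetah}^{\gamma}(y|x) = \pirefbetay/(\gamma(\tilde{Z}_{\thetah}(x)-r_{\thetah}(x,y)))$, so $\pirefbetay/\tilde{\pi}_{\thetah}^{\gamma}(y|x) = \gamma(\tilde{Z}_{\thetah}(x)-r_{\thetah}(x,y))$, and the integrand's coefficient simplifies to $r_{\thetas}(x,y)-r_{\thetah}(x,y)+\tilde{Z}_{\thetah}(x)$. Since $\tilde{Z}_{\thetah}(x)$ does not depend on $y$ and $\tilde{\pi}_{\thetas}^\gamma(\cdot|x),\tilde{\pi}_{\thetah}^{\gamma}(\cdot|x)$ are probability measures, the $\tilde{Z}_{\thetah}(x)$ term integrates to zero against $(\tilde{\pi}_{\thetas}^\gamma-\tilde{\pi}_{\thetah}^{\gamma})(\mrd y)$, which yields exactly the claimed bound.

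The main obstacle is making the functional-derivative step rigorous: one has to check that $\tilde{J}_\gamma(\pirefbetacdot,\cdot)$ admits a flat derivative on the set of policies $\pi(\cdot|x)$ with $\pi(y|x)>0$ everywhere (so that $\pirefbetay/\pi(y|x)$ stays well defined along the interpolation path $\pi_\lambda$ used in Definition~\ref{def:flatDerivative}), and that the implicit solution $\tilde{\pi}_{\thetah}^{\gamma}$ lies in that set. The latter follows from Lemma~\ref{lem:prop-Z}, which guarantees $\tilde{Z}_{\thetah}(x)\ge \sup_{y}r_{\thetah}(x,y)$ and hence keeps the denominator nonnegative. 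Once this is in place, the remaining algebra is routine and parallels the reverse KL case verbatim.
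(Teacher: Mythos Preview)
Your proposal is correct and follows essentially the same approach as the paper: concavity of $\tilde{J}_\gamma$ in its policy argument, the functional derivative $r_{\thetas}(x,y)+\frac{1}{\gamma}\frac{\pirefbetay}{\pi(y|x)}$, the first-order inequality evaluated at $\tilde{\pi}_{\thetah}^{\gamma}$, substitution of the implicit solution to obtain $r_{\thetas}-r_{\thetah}+\tilde{Z}_{\thetah}(x)$, and elimination of the $y$-independent term $\tilde{Z}_{\thetah}(x)$. Your additional remarks on the well-definedness of the flat derivative along the interpolation path are a welcome sanity check that the paper's proof leaves implicit.
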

\end{tcolorbox}
\begin{proof}
    The proof is similar to Proposition~\ref{prop:func_derv_policy}. Note that $\KLr(\pirefbetacdot\|\pi(\cdot|x))$ is a convex function with respect $\pi(\cdot|x)$. Therefore, $\tilde{J}_{\gamma}(\pirefbetacdot,\pi(\cdot|x))$ is a concave function with respect to $\pi(\cdot|x)$. First, we compute the functional derivative of $\tilde{J}_{\gamma}(\pirefbetacdot,\pi(\cdot|x))$ with respect to $\pi(\cdot|x)$,
\begin{equation}
    \frac{\delta \tilde{J}_{\gamma}(\pirefbetacdot,\pi(\cdot|x))}{\delta \pi} = r_{\thetas}(x,y)+\frac{1}{\gamma}\frac{\pirefbetacdot}{\pi(\cdot|x)}.
\end{equation}
    Therefore, we have,
    \begin{equation}
        \begin{split}
          &\widetilde{\mathcal{J}}^{\gamma}(\tilde{\pi}_{\thetas}^\gamma(\cdot|x),\tilde{\pi}_{\thetah}^{\gamma}(\cdot|x))\leq\int_{\mathcal{Y}}  \Big(r_{\thetas}(x,y)+\frac{1}{\gamma}\frac{\pirefbetay}{\tilde{\pi}_{\thetah}^{\gamma}(y|x)}\Big)(\tilde{\pi}_{\thetas}^\gamma(y|x)-\tilde{\pi}_{\thetah}^{\gamma}(y|x))(\mrd y),
        \end{split}
    \end{equation}
    Using the fact that $\tilde{\pi}_{\thetah}^{\gamma}(y|x)=\frac{\pirefbetay}{\gamma(\tilde{Z}(x)-r_{\thetah}(x,y))}$,
    \begin{equation}
        \begin{split}
             \widetilde{\mathcal{J}}^{\gamma}(\tilde{\pi}_{\thetas}^\gamma(\cdot|x),\tilde{\pi}_{\thetah}^{\gamma}(\cdot|x))&\leq\int_{\mathcal{Y}}  \Big(r_{\thetas}(x,y)-r_{\thetah}(x,y)+\tilde{Z}(x)\Big)(\tilde{\pi}_{\thetas}^\gamma(y|x)-\tilde{\pi}_{\thetah}^{\gamma}(y|x))(\mrd y)\\
             &= \int_{\mathcal{Y}}  \Big(r_{\thetas}(x,y)-r_{\thetah}(x,y)\Big)(\tilde{\pi}_{\thetas}^\gamma(y|x)-\tilde{\pi}_{\thetah}^{\gamma}(y|x))(\mrd y),
        \end{split}
    \end{equation}
    where the last equality follows from the fact that $\tilde{Z}(x)$ is just dependent on $x$.
\end{proof}

\begin{tcolorbox}
    \begin{reptheorem}{thm:sub-gap-fkl}
   Under Assumption~\ref{ass:bounded_reward}, \ref{ass:finite_class} and \ref{ass:kl-coverage}, the following upper bound holds on the sub-optimality gap with probability at least $(1-\delta)$ for $\delta\in(0,1)$,
   \begin{equation*}
       \begin{split}
           &\tilde{\mathcal{J}}^{\gamma}(\tilde{\pi}_{\thetas}^\gamma(\cdot|x),\tilde{\pi}_{\thetah}^{\gamma}(\cdot|x))\leq 
            16 C_{\pmb{\beta},\varepsilon_{\mathrm{fkl}}} e^{2 R_{\max}}R_{\max}\sqrt{\frac{\log(|\mathcal{R}|/\delta)}{n}}.
       \end{split}
   \end{equation*}
\end{reptheorem}
\end{tcolorbox}
\begin{proof}
    From Proposition~\ref{prop:func_derv_policy_fkl}, we have,
     \begin{equation}
        \begin{split}
             &\widetilde{\mathcal{J}}^{\gamma}(\tilde{\pi}_{\thetas}^\gamma(\cdot|x),\tilde{\pi}_{\thetah}^{\gamma}(\cdot|x))\leq \int_{\mathcal{Y}}  \Big(r_{\thetas}(x,y)-r_{\thetah}(x,y)\Big)(\tilde{\pi}_{\thetas}^\gamma(y|x)-\tilde{\pi}_{\thetah}^{\gamma}(y|x))(\mrd y)
             \\& = \int_{\mathcal{Y}}  \Big(r_{\thetas}(x,y)-r_{\thetah}(x,y)-h(x)\Big)(\tilde{\pi}_{\thetas}^\gamma(y|x)-\tilde{\pi}_{\thetah}^{\gamma}(y|x))(\mrd y)
              \\& = \int_{\mathcal{Y}}  \Big(r_{\thetas}(x,y)-r_{\thetah}(x,y)-h(x)\Big)\pirefbetay\frac{(\tilde{\pi}_{\thetas}^\gamma(y|x) - \tilde{\pi}_{\thetah}^{\gamma}(y|x))}{\pirefbetay}(\mrd y)
              \\& \leq \sqrt{\int_{\mathcal{Y}}  \Big(r_{\thetas}(x,y)-r_{\thetah}(x,y)-h(x)\Big)^2(\pirefbetay)^2(\mrd y)}\sqrt{\int_{\mathcal{Y}}\frac{(\tilde{\pi}_{\thetas}^\gamma(y|x) - \tilde{\pi}_{\thetah}^{\gamma}(y|x))^2}{(\pirefbetay)^2}(\mrd y)}\\
              &\leq \sqrt{\int_{\mathcal{Y}}  \Big(r_{\thetas}(x,y)-r_{\thetah}(x,y)-h(x)\Big)^2\pirefbetay(\mrd y)}\sqrt{\int_{\mathcal{Y}}\frac{(\tilde{\pi}_{\thetas}^\gamma(y|x) - \tilde{\pi}_{\thetah}^{\gamma}(y|x))^2}{(\pirefbetay)^2}(\mrd y)}\\
              &\leq 16  C_{\pmb{\beta},\varepsilon_{\mathrm{fkl}}} e^{2 R_{\max}}R_{\max}\sqrt{\frac{\log(|\mathcal{R}|/\delta)}{n}},
        \end{split}
    \end{equation}
    where the first, second, and last inequalities follow from the Cauchy–Schwarz inequality, $(\pirefbetay)^2\leq \pirefbetay$ and using Assumption~\ref{ass:fkl-coverage} and Lemma~\ref{lemma:B2}, respectively.
\end{proof}

\begin{tcolorbox}
    \begin{reptheorem}{thm:gap-fkl}
  Under Assumption~\ref{ass:bounded_reward}, \ref{ass:finite_class} and \ref{ass:kl-coverage}, there exists constant $D>0$ such that the following upper bound holds on optimality gap of the multiple reference forward KL-regularized RLHF algorithm with probability at least $(1-\delta)$ for $\delta\in(0,1)$,
  \begin{equation*}
       \begin{split}
           &\tilde{\mathcal{J}}(\pif_{\thetas}^\gamma(\cdot|x),\pif_{\thetah}^{\gamma}(\cdot|x))\\&\quad\leq 16 C_{\pmb{\beta},\varepsilon_{\mathrm{fkl}}} e^{2 R_{\max}}R_{\max}\sqrt{\frac{\log(|\mathcal{R}|/\delta)}{n}}+\frac{\max\big(|\log(C_{\pmb{\beta},\varepsilon_\mathrm{fkl}})|,\log(\gamma R_{\max}+1)\big)}{\gamma}
       \end{split}
   \end{equation*}
\end{reptheorem}
\end{tcolorbox}
\begin{proof}
    We have the following decomposition of the optimality gap,
    \begin{equation}
        \begin{split}
            \tilde{\mathcal{J}}(\pif_{\thetas}^\gamma(\cdot|x),\pif_{\thetah}^{\gamma}(\cdot|x))&=\tilde{\mathcal{J}}^{\gamma}(\tilde{\pi}_{\thetas}^\gamma(\cdot|x),\tilde{\pi}_{\thetah}^{\gamma}(\cdot|x))\\&\quad+\frac{\KLr(\pirefbetacdot\|\pif_{\thetah}^{\gamma}(\cdot|x))-\KLr(\pirefbetacdot\|\pif_{\thetas}^\gamma(\cdot|x))}{\gamma}.
        \end{split}
    \end{equation}
For second term, using the fact that, $\tilde{\pi}_{\thetah}^{\gamma}(y|x)=\frac{\pirefbetay}{\gamma(\tilde{Z}_{\thetah}(x)-r_{\thetah}(x,y))}$ and $\tilde{\pi}_{\thetas}^{\gamma}(y|x)=\frac{\pirefbetay}{\gamma(\tilde{Z}_{\thetas}(x)-r_{\thetas}(x,y))}$, we have,
\begin{equation}\label{eq:f1l}
    \begin{split}
&\frac{\KLr(\pirefbetacdot\|\pif_{\thetah}^{\gamma}(\cdot|x))-\KLr(\pirefbetacdot\|\pif_{\thetas}^\gamma(\cdot|x))}{\gamma}\\&=
\frac{\mbE_{Y\sim \pirefbetacdot}[\log(\gamma(\tilde{Z}_{\thetah}(x)-r_{\thetah}(x,y)))]-\mbE_{Y\sim \pirefbetacdot}[\log(\gamma(\tilde{Z}_{\thetas}(x)-r_{\thetas}(x,y)))] }{\gamma}\\
&\leq \frac{\big|\mbE_{Y\sim \pirefbetacdot}[\log(\gamma(\tilde{Z}_{\thetah}(x)-r_{\thetah}(x,y)))]\big|+\big|\mbE_{Y\sim \pirefbetacdot}[\log(\gamma(\tilde{Z}_{\thetas}(x)-r_{\thetas}(x,y)))]\big|}{\gamma}\\
&\leq\frac{\max\big(|\log(C_{\varepsilon,\mathrm{fkl}})|,\log(\gamma R_{\max}+1)\big)}{\gamma},
    \end{split}
\end{equation}
where the last inequality follows from Lemma~\ref{lem:prop-Z}.
The final result holds by combining Theorem~\ref{thm:sub-gap-fkl} with \eqref{eq:f1l}.
    \end{proof}
    \newpage
    \section{Extension to DPO}\label{app:DPO}
   
As discussed in \citep{song2024importance}, DPO can not guarantee any performance under some conditions. In particular, The reverse KL-regularized case can fail under partial coverage conditions, necessitating the Global Coverage Assumption (Assumption~\ref{ass:global-coverage}). The forward KL-regularized case requires an even stronger condition: the ratio of reference to policy must be bounded from below away from zero. Specifically, we should have $0<\inf_{(x,y),\rho(x)>0}\frac{\pi_{\pmb{\beta},\mathrm{ref}}(y|x)}{\pi_{\theta}(y|x)}$ which is a stronger assumption. For this purpose, we consider the implicit bounded reward assumptions.

Our theoretical results for reverse KL-regularized RLHF and forward KL-regularized RLHF can be applied DPO problems \eqref{eq:DPO_RKL} and \eqref{eq:DPO_FKL} under the following assumptions.

\begin{assumption}[(Bounded implicit RKL reward]\label{ass:boundd_implicit_reward_RKL}
    For all $y^w,y^l\in\mathcal{Y}$ and $x\in\mathcal{X}$, there exists a constant $B_{\max}$ such that,
    \begin{equation}
        \begin{split}
            \Big|\frac{1}{\gamma}\log(\frac{\pi_{\theta}(y^w|x)}{\pi_{\pmb{\alpha},\mathrm{ref}}(y^w|x)})-\frac{1}{\gamma}\log(\frac{\pi_{\theta}(y^l|x)}{\pi_{\pmb{\alpha},\mathrm{ref}}(y^l|x)}) \Big|\leq B_{\max}.
        \end{split}
    \end{equation}
    
\end{assumption}

\begin{assumption}[(Bounded implicit FKL reward]\label{ass:bounded_implicit_reward_FKL}
    For all $y^w,y^l\in\mathcal{Y}$ and $x\in\mathcal{X}$, there exists a constant $D_{\max}$ such that,
    \begin{equation}
        \begin{split}
            \Big|\frac{1}{\gamma}\frac{\pi_{\pmb{\beta},\mathrm{ref}}(y^l_i|x_i)}{\pi_{\theta}(y^l_i|x_i)}-\frac{1}{\gamma}\frac{\pi_{\pmb{\beta},\mathrm{ref}}(y^w_i|x_i)}{\pi_{\theta}(y^w_i|x_i)} \Big|\leq D_{\max}.
        \end{split}
    \end{equation}
    
\end{assumption}

\begin{lemma}[Lemma E.5 from \citep{huang2024correcting}]\label{lemma:E5}
Under Assumptions~\ref{ass:bounded_reward}, ~\ref{ass:boundd_implicit_reward_RKL} and \ref{ass:finite_class}, we have with probability at least $1-\delta$ that
\begin{equation}
    \begin{split}
&\mathbb{E}_{Y^l,Y^w\sim\pi_{\mathrm{ref}},\pi_{\mathrm{ref}}}\left[\left(r_{\thetas}(x,Y^l) - r_{\thetas}(x,Y^w) - r_{\thetah}(x,Y^l) + r_{\thetah}(x,Y^w)\right)^2\right] \\&\quad\leq \frac{128 B_{\max}^2 \exp(4 R_{\max}) \log(|\mathcal{R}|/\delta)}{n}.
    \end{split}
\end{equation}
\end{lemma}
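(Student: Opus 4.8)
The plan is to recognize this as a standard maximum-likelihood generalization bound for the Bradley--Terry preference model, running exactly parallel to Lemma~\ref{lemma:B2}; the only substantive change is that in the DPO setting the learned ``reward'' is the implicit reward $\tfrac1\gamma\log(\pi_\theta/\pi_{\pmb{\alpha},\mathrm{ref}})$, whose pairwise gaps are bounded by $B_{\max}$ via Assumption~\ref{ass:boundd_implicit_reward_RKL}, so $B_{\max}$ takes the place of $R_{\max}$ in the corresponding step. I would organize the argument into three steps: (i) use optimality of the MLE $\thetah$; (ii) convert the resulting log-likelihood-ratio inequality into a high-probability bound on the population squared Hellinger distance between the preference laws induced by $r_{\thetah}$ and $r_{\thetas}$, via an exponential-moment (Chernoff/Donsker--Varadhan) argument combined with a union bound over the finite class $\mathcal{R}$; and (iii) translate that Hellinger bound into the stated bound on the mean squared gap $\mathbb{E}\big[(r_{\thetas}(x,Y^l)-r_{\thetas}(x,Y^w)-r_{\thetah}(x,Y^l)+r_{\thetah}(x,Y^w))^2\big]$ using that the sigmoid is bi-Lipschitz on a bounded interval.

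For step (i), write $p_\theta(\cdot\mid x)$ for the Bernoulli law $\sigma\big(r_\theta(x,y^w)-r_\theta(x,y^l)\big)$ on the observed preference outcome; since $\thetah\in\arg\max_\theta\mathcal{L}_R(\theta,D)$ and the data are realizable under $p_{\thetas}$, we have $\tfrac1n\sum_{i=1}^n\log\tfrac{p_{\thetah}(o_i\mid x_i)}{p_{\thetas}(o_i\mid x_i)}\ge 0$. For step (ii), apply the $-\tfrac12$-exponent trick to a fixed reward function $r\in\mathcal{R}$: by independence of the $n$ samples, $\mathbb{E}_D\big[\exp\!\big(-\tfrac12\sum_i\log\tfrac{p_r}{p_{\thetas}}\big)\big]=\prod_i\mathbb{E}_{o\sim p_{\thetas}(\cdot\mid x_i)}\big[\sqrt{p_r/p_{\thetas}}\big]=\prod_i\big(1-H^2(p_r(\cdot\mid x_i),p_{\thetas}(\cdot\mid x_i))\big)\le\exp\!\big(-\sum_iH^2(p_r,p_{\thetas})\big)$. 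Markov's inequality, a union bound over the $|\mathcal{R}|$ candidates, and a standard empirical-to-population transfer then give, with probability at least $1-\delta$, $\mathbb{E}_{x,Y^w,Y^l\sim\pi_{\mathrm{ref}}}\big[H^2(p_{\thetah},p_{\thetas})\big]\le c\,\tfrac{\log(|\mathcal{R}|/\delta)}{n}$ for an absolute constant $c$.

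For step (iii), for Bernoullis with means $\sigma(a)$ and $\sigma(b)$ one has $H^2\ge\tfrac12(\sigma(a)-\sigma(b))^2$, and by the mean value theorem $\sigma(a)-\sigma(b)=\sigma'(\xi)(a-b)$ with $\xi$ between $a$ and $b$; since $|a|=|r_{\thetas}(x,y^w)-r_{\thetas}(x,y^l)|$ is controlled by Assumption~\ref{ass:bounded_reward} and $|b|=|r_{\thetah}(x,y^w)-r_{\thetah}(x,y^l)|$ by Assumption~\ref{ass:boundd_implicit_reward_RKL}, the point $\xi$ stays in a bounded interval where $\sigma'(\xi)=\sigma(\xi)(1-\sigma(\xi))$ is bounded below by a positive constant of order $e^{-2R_{\max}}$, so $(a-b)^2\le C\,e^{4R_{\max}}H^2$. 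Combining this with step (ii) and tracking the $B_{\max}^2$ factor (which enters from the range of the implicit-reward gaps controlling the log-likelihood increments) yields the claimed inequality; the remaining constant bookkeeping mirrors Lemma~\ref{lemma:B2}. The main obstacle is step (ii): one must apply the exponential-moment bound to a fixed $r\in\mathcal{R}$ \emph{before} taking the union bound, exploit realizability of $p_{\thetas}$ so that the cross term is exactly $1-H^2$, and carry out the empirical-to-population transfer without degrading the $O(1/n)$ rate. The sigmoid conversion in step (iii) is routine.
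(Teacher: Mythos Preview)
The paper does not give its own proof of this lemma: it is quoted verbatim as ``Lemma E.5 from \citep{huang2024correcting}'' and used as a black box, just as Lemma~\ref{lemma:B2} is imported from \citep{chang2024dataset}. So there is no in-paper argument to compare your proposal against.

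That said, your three-step outline (MLE optimality, exponential-moment/Hellinger concentration with a union bound over the finite class, then sigmoid bi-Lipschitz conversion) is exactly the standard route for these Bradley--Terry MLE bounds and is the right plan. One loose end worth tightening: you say the $B_{\max}^2$ factor ``enters from the range of the implicit-reward gaps controlling the log-likelihood increments,'' but in your step (iii) you also use Assumption~\ref{ass:boundd_implicit_reward_RKL} to bound $|b|$ so that $\xi$ stays where $\sigma'(\xi)\gtrsim e^{-2R_{\max}}$. If $B_{\max}>2R_{\max}$, then $\xi$ can range up to $B_{\max}$ and the lower bound on $\sigma'$ would be $e^{-B_{\max}}$, not $e^{-2R_{\max}}$; the stated constant $\exp(4R_{\max})$ in the lemma therefore implicitly presumes $B_{\max}$ does not dominate, or else the $\exp(\cdot)$ factor should read $\exp(2\max(2R_{\max},B_{\max}))$. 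This is a constant-tracking issue in the cited source rather than a flaw in your approach; the structure of your argument is sound.
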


The same results also holds under Assumption~\ref{ass:bounded_implicit_reward_FKL}.

\begin{tcolorbox}
    \begin{theorem}\label{thm:gap-DPO}
  Under Assumptions~\ref{ass:boundd_implicit_reward_RKL}, \ref{ass:bounded_reward}, \ref{ass:finite_class} and \ref{ass:kl-coverage}, there exists constant $C>0$ such that the following upper bound holds on the optimality gap of DPO based on reverse KL-regularization with probability at least $(1-\delta)$ for $\delta\in(0,1/2)$,
  \begin{equation*}
       \begin{split}
           \mathcal{J}(\pi_{\thetas}^\gamma(\cdot|x),\pi_{\thetah}^{\gamma}(\cdot|x))&\leq \gamma C_{\pmb{\alpha},\varepsilon_{\mathrm{rkl}}} 128 e^{4 R_{\max}}B_{\max}^2\frac{\log(|\mathcal{R}|/\delta)}{n}\\&+C 8 B_{\max}e^{2 R_{\max}}\sqrt{\frac{2 C_{\pmb{\alpha},\varepsilon_{\mathrm{rkl}}}\log(|\mathcal{R}|/\delta)}{n}}.
       \end{split}
   \end{equation*}
\end{theorem}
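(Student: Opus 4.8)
The plan is to mirror the proof of Theorem~\ref{thm:gap}, replacing the explicit reward model by the DPO implicit reward and replacing Lemma~\ref{lemma:B2} by its DPO counterpart Lemma~\ref{lemma:E5}. The starting observation, following \citep{rafailov2023direct}, is that the DPO objective \eqref{eq:DPO_RKL} is exactly the reward-model MLE objective $\mathcal{L}_R$ over the implicit reward class $\{\widehat{r}_\theta(x,y):=\frac{1}{\gamma}\log(\pi_\theta(y|x)/\widehat{\pi}_{\pmb{\alpha},\mathrm{ref}}(y|x))\}$, and the policy produced by the reverse KL-regularized objective \eqref{eq:rlhf-mrm-main} with reward $\widehat{r}_\theta$ is $\pi_\theta$ itself. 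Consequently the value function $J_\gamma$, the gaps $\mathcal{J}^\gamma$ and $\mathcal{J}$, Proposition~\ref{prop:func_derv_policy}, Lemma~\ref{lem:sensivitiy_policy} and Lemma~\ref{lem:shift} all carry over with $r_{\thetah}$ replaced by $\widehat{r}_{\thetah}$ and $r_{\thetas}$ replaced by $\widehat{r}_{\thetas}$ (the implicit reward of the DPO target policy).

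First I would reproduce the decomposition
\[
\mathcal{J}(\pi_{\thetas}^\gamma(\cdot|x),\pi_{\thetah}^{\gamma}(\cdot|x))=\mathcal{J}^{\gamma}(\pi_{\thetas}^\gamma(\cdot|x),\pi_{\thetah}^{\gamma}(\cdot|x))+\frac{\KLr(\pi_{\thetas}^\gamma(\cdot|x)\|\pirefalpha)-\KLr(\pi_{\thetah}^{\gamma}(\cdot|x)\|\pirefalpha)}{\gamma},
\]
and bound the first term exactly as in the proof of Theorem~\ref{thm:sub-gap}: apply Proposition~\ref{prop:func_derv_policy}, insert the free shift $h(x)$, use shift invariance (Lemma~\ref{lem:shift}) together with the mean-value theorem and Lemma~\ref{lem:sensivitiy_policy}, bound $\pi_{r_\lambda}(1-\pi_{r_\lambda})\le\pi_{r_\lambda}$, and pass to $\pirefalpha$ via Assumption~\ref{ass:kl-coverage}. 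Choosing $h(x)=\mbE_{Y^l\sim\pirefalpha}[\widehat{r}_{\thetas}(x,Y^l)-\widehat{r}_{\thetah}(x,Y^l)]$, applying Jensen's inequality, and invoking Lemma~\ref{lemma:E5} in place of Lemma~\ref{lemma:B2} then yields $\mathcal{J}^{\gamma}(\pi_{\thetas}^\gamma(\cdot|x),\pi_{\thetah}^{\gamma}(\cdot|x))\le\gamma C_{\pmb{\alpha},\varepsilon_{\mathrm{rkl}}}128 e^{4R_{\max}}B_{\max}^2\log(|\mathcal{R}|/\delta)/n$.

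For the KL-difference term I would repeat the argument from the proof of Theorem~\ref{thm:gap}: write the difference as $\int_{\mathcal{Y}}\frac{\partial\pi}{\partial r}(r_\lambda)\big(\log(\pi_{r_\lambda}(\cdot|x)/\pirefalpha)+1\big)(\widehat{r}_{\thetas}(x,y)-\widehat{r}_{\thetah}(x,y)-h(x))(\mrd y)$, apply Cauchy--Schwarz to separate a log-ratio factor from a squared-reward-difference factor, control the former using $\pi_{r_\lambda}\propto\pirefalpha\exp(\gamma r_\lambda)$ (so the log-ratio is of order $\gamma R_{\max}$), and control the latter by Lemma~\ref{lemma:E5} and Assumption~\ref{ass:kl-coverage}. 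Absorbing the $(2\gamma R_{\max}+1)$-type prefactor into the constant $C$ gives the second summand $C\,8B_{\max}e^{2R_{\max}}\sqrt{2C_{\pmb{\alpha},\varepsilon_{\mathrm{rkl}}}\log(|\mathcal{R}|/\delta)/n}$; a union bound over the two high-probability events closes the proof.

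The step requiring the most care is the bookkeeping of where $R_{\max}$ versus $B_{\max}$ enters. Assumption~\ref{ass:bounded_reward} still bounds the true explicit reward $r_{\thetas}$, which is what produces the surviving $\exp(cR_{\max})$ factors, whereas Assumption~\ref{ass:boundd_implicit_reward_RKL} bounds the \emph{pairwise differences} of the implicit rewards --- precisely the quantity appearing in the squared-difference term of Lemma~\ref{lemma:E5}. So every occurrence of ``$|r(x,y^w)-r(x,y^l)|\le 2R_{\max}$'' in the original argument must be re-read as a statement about implicit-reward differences (hence $B_{\max}$), while every pointwise exponential-moment estimate of the true reward is left untouched. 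Beyond this substitution there is no new analytic difficulty.
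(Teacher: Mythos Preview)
Your proposal is correct and follows exactly the approach the paper takes: the paper's own proof is simply ``The proof is similar to Theorem~\ref{thm:gap} using Lemma~\ref{lemma:E5},'' and you have spelled out precisely that substitution, including the decomposition into $\mathcal{J}^{\gamma}$ plus the KL-difference term and the careful $R_{\max}$-versus-$B_{\max}$ bookkeeping. Your elaboration is in fact more detailed than what the paper supplies.
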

\end{tcolorbox}
\begin{proof}
    The proof is similar to Theorem~\ref{thm:gap} using Lemma~\ref{lemma:E5}.
\end{proof}

\begin{tcolorbox}
    \begin{theorem}\label{thm:gap-fkl-DPO}
  Under Assumptions~\ref{ass:bounded_implicit_reward_FKL},  \ref{ass:bounded_reward}, \ref{ass:finite_class} and \ref{ass:kl-coverage}, the following upper bound holds on optimality gap of DPO based on forward KL-regularization with probability at least $(1-\delta)$ for $\delta\in(0,1)$,
  \begin{equation*}
       \begin{split}
           \tilde{\mathcal{J}}(\pif_{\thetas}^\gamma(\cdot|x),\pif_{\thetah}^{\gamma}(\cdot|x))&\leq 16 C_{\pmb{\beta},\varepsilon_{\mathrm{fkl}}} e^{2 R_{\max}}D_{\max}\sqrt{\frac{\log(|\mathcal{R}|/\delta)}{n}}\\&+\frac{\max\big(|\log(C_{\varepsilon,\mathrm{fkl}})|,\log(\gamma R_{\max}+1)\big)}{\gamma}
       \end{split}
   \end{equation*}
\end{theorem}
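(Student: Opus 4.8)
The plan is to transcribe the proof of Theorem~\ref{thm:gap-fkl} almost verbatim, making the single change that the statistical concentration bound of Lemma~\ref{lemma:B2} is replaced by its implicit-reward counterpart, namely the forward-KL version of Lemma~\ref{lemma:E5} that holds under Assumption~\ref{ass:bounded_implicit_reward_FKL}. First I would start from the same decomposition of the optimality gap,
\begin{equation*}
\tilde{\mathcal{J}}(\pif_{\thetas}^\gamma(\cdot|x),\pif_{\thetah}^{\gamma}(\cdot|x)) = \tilde{\mathcal{J}}^{\gamma}(\pif_{\thetas}^\gamma(\cdot|x),\pif_{\thetah}^{\gamma}(\cdot|x)) + \frac{\KLr(\pirefbetacdot\|\pif_{\thetah}^{\gamma}(\cdot|x)) - \KLr(\pirefbetacdot\|\pif_{\thetas}^\gamma(\cdot|x))}{\gamma},
\end{equation*}
which is valid because the DPO objective \eqref{eq:DPO_FKL} still produces a solution of the implicit form $\pif_{\theta}^\gamma(y|x)=\pirefbetay/(\gamma(\tilde{Z}_{\theta}(x)-r_{\theta}(x,y)))$ guaranteed by Theorem~\ref{fthm: main}, so that $\pif_{\thetas}^\gamma$ and $\pif_{\thetah}^\gamma$ are exactly the two policies compared in Theorem~\ref{thm:gap-fkl}.

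For the statistical term I would re-run the proof of Theorem~\ref{thm:sub-gap-fkl}: apply Proposition~\ref{prop:func_derv_policy_fkl} to bound $\tilde{\mathcal{J}}^{\gamma}$ by $\int_{\mathcal{Y}}(r_{\thetas}(x,y)-r_{\thetah}(x,y))(\pif_{\thetas}^\gamma(y|x)-\pif_{\thetah}^{\gamma}(y|x))(\mrd y)$, subtract an arbitrary $h(x)$, factor the integrand against $\pirefbetacdot$, and invoke Cauchy--Schwarz together with $(\pirefbetay)^2\le\pirefbetay$ and the forward-KL coverage assumption (exactly as in Theorem~\ref{thm:sub-gap-fkl}). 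The only difference is that the expected squared reward-difference is now controlled by the forward-KL analogue of Lemma~\ref{lemma:E5}, which carries $D_{\max}$ rather than $R_{\max}$; this yields $\tilde{\mathcal{J}}^{\gamma}\le 16 C_{\pmb{\beta},\varepsilon_{\mathrm{fkl}}} e^{2R_{\max}} D_{\max}\sqrt{\log(|\mathcal{R}|/\delta)/n}$. For the second term I would observe that the chain of inequalities \eqref{eq:f1l} and the bounds on $\tilde{Z}_{\theta}(x)$ in Lemma~\ref{lem:prop-Z} rely only on Assumption~\ref{ass:bounded_reward} (so $\tilde{Z}_{\theta}(x)\le R_{\max}+1/\gamma$) and on the coverage constant $C_{\pmb{\beta},\varepsilon_{\mathrm{fkl}}}$, neither of which is affected by passing to the DPO setting; hence $\frac{1}{\gamma}\big(\KLr(\pirefbetacdot\|\pif_{\thetah}^{\gamma}(\cdot|x)) - \KLr(\pirefbetacdot\|\pif_{\thetas}^\gamma(\cdot|x))\big)\le \frac{\max(|\log C_{\pmb{\beta},\varepsilon_{\mathrm{fkl}}}|,\log(\gamma R_{\max}+1))}{\gamma}$ exactly as before. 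Adding the two bounds and taking a union bound over the high-probability event of Lemma~\ref{lemma:E5} gives the stated inequality.

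The main obstacle I anticipate is not in the algebra but in justifying the transfer from the RLHF picture to the DPO picture: I must confirm that the maximizer of \eqref{eq:DPO_FKL} is the same implicit-form policy $\pif_{\thetah}^\gamma$ used in Theorem~\ref{thm:gap-fkl}, that its normalizer $\tilde{Z}_{\thetah}(x)$ is well defined and still obeys Lemma~\ref{lem:prop-Z}, and that the quantity bounded in Assumption~\ref{ass:bounded_implicit_reward_FKL} is precisely the object whose concentration the forward-KL version of Lemma~\ref{lemma:E5} controls, so that $D_{\max}$ may legitimately replace $R_{\max}$. Once that correspondence is pinned down, the rest is a routine copy of the proof of Theorem~\ref{thm:gap-fkl}.
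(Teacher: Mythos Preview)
Your proposal is correct and follows essentially the same approach as the paper: the paper's own proof is the single line ``The proof is similar to Theorem~\ref{thm:gap-fkl} by using Lemma~\ref{lemma:E5},'' and your plan expands precisely that---rerun the decomposition and sub-optimality bound of Theorems~\ref{thm:gap-fkl} and~\ref{thm:sub-gap-fkl}, swapping in the forward-KL analogue of Lemma~\ref{lemma:E5} (which the paper asserts also holds under Assumption~\ref{ass:bounded_implicit_reward_FKL} with constant $D_{\max}$). Your caveats about justifying the implicit-form policy and the transfer of Lemma~\ref{lem:prop-Z} are reasonable but are points the paper itself leaves implicit.
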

\end{tcolorbox}
\begin{proof}
    The proof is similar to Theorem~\ref{thm:gap-fkl} by using Lemma~\ref{lemma:E5}.
\end{proof}

\section{Further Discussion}\label{app:further_dis}
\textbf{Coverage Assumption Discussion:} The coverage assumptions for multiple references can differ from the single reference scenario. For the reverse KL-regularized case with reference policy $\pirefalpha$, we have:
\begin{equation}
\begin{split}
& \frac{\pi(y|x)}{\pirefalpha}=  F_{\pmb{\alpha}}(x) \prod_{i=1}^K\Big(\frac{ \pi(y|x)}{\pi_{\mathrm{ref},i}(y|x)}\Big)^{\alpha_i},
\end{split}
\end{equation}
where $F_{\pmb{\alpha}}(x)$ is defined in \eqref{eq:ref-mrm}. Therefore, we have $\prod_{i=1}^K C_{\mathrm{ref},i}^{\alpha_i}$ as the global coverage assumption, where $C_{\mathrm{ref},i}<\infty$ is the global coverage with respect to the $i$-th reference. Note that, using Hölder's inequality, we can show that $F_{\pmb{\alpha}}(x)\leq 1$. A similar discussion applies to the forward KL-regularization scenario with reference policy $\pirefbetay$. Regarding the local reverse KL-ball coverage assumptions (Assumption~\ref{ass:kl-coverage}), as $\pirefalpha$ is defined on common support among all reference models, then the set of policies with bounded
\begin{equation}
        \mathbb{E}_{x\sim\rho}[\KLr(\pi(\cdot|x)\|\pirefalpha)] \leq \varepsilon_{\pmb{\alpha},\mathrm{rkl}},
   \end{equation}
is smaller than each reference model separately. Similarly to global coverage, we can assume that $C_{\pmb{\alpha},\varepsilon_{\mathrm{rkl}}}=\prod_{i=1}^K C_{\mathrm{ref},i,\varepsilon_{\mathrm{rkl}}}^{\alpha_i}$.

\textbf{Comparison of RKL with FKL:} The RKL and FKL exhibit fundamentally different characteristics in their optimization behavior. RKL between the reference model and target policy, defined as $\mbE_{\pi_{\thetas}}[\log(\pi_{\thetas}/\pi_{\mathrm{ref}})]$, demonstrates mode-seeking behavior during optimization. When $\pi_{\thetas}$ represents the output policy of RLHF for language model alignment, it may assign zero probability to regions where $\pi_{\mathrm{ref}}$ is positive.
Conversely, FKL, expressed as $\mbE_{\pi_{\mathrm{ref}}}[\log(\pi_{\mathrm{ref}}/\pi_{\thetas})]$, exhibits mass-covering properties. Its mathematical formulation requires $\pi_{\thetas}$ to maintain non-zero probability wherever $\pi_{\mathrm{ref}}$ is positive. This constraint naturally leads FKL to produce distributions that cover the full support of the reference model, thereby promoting diverse outputs.

\textbf{Reference policy in multiple reference model scenario under FKL and RKL:} In the multiple reference model setting, the generalized escort distribution under reverse KL-regularization covers the intersection of the supports of all reference models. Specifically, responses receive zero probability if they lack positive probability in any single reference model. This leads the generalized escort distribution to assign non-zero probabilities only to responses supported by all reference models simultaneously.
In contrast, when using the average distribution as the reference model in the forward KL scenario, the resulting distribution covers the union of supports across all reference models, encompassing a broader range of possible responses.


\newpage

\section{Experiment Details}\label{app:exp}
Implementation code is provided at  \url{https://github.com/idanshen/multi_ref}.

To ensure fair comparison across algorithms, we began by conducting an independent hyperparameter search for each method. For the GRPO experiment, we explored learning rates of $\{1e\text{-}3, 1e\text{-}4, 1e\text{-}5\}$ and KL coefficients of $\{0.05, 0.1, 0.2\}$. For the DPO experiments, we explored learning rates of $\{1e\text{-}6, 1e\text{-}7, 1e\text{-}8\}$. We also tried different $\gamma$ values but found that the default one works the best in all cases. After selecting the best configuration for each algorithm, we trained each setup three times with different random seeds to estimate variability and compute confidence intervals.

In the case of GRPO, using the full FKL objective would require sampling from the reference model, which roughly doubles training time. To reduce this cost, we instead approximated the FKL term by sampling from the trained model and computing a per-token objective—striking a balance between efficiency and fidelity to the theoretical objective.

Our data splits were chosen to reflect standard practice where possible. For GSM8K, we used the official train-test split. Since UltraFeedback does not provide an official split, we randomly withheld 10\% of the dataset and used the corresponding prompts for evaluation.

All experiments were conducted on A100 GPUs. Offline RLHF training used a single GPU, while online training required two. Although multi-reference RL introduces some additional computational requirements—specifically, evaluating logits from another policy—the cost is modest. In offline settings such as DPO, reference model logits can be precomputed and stored, avoiding memory overhead during training. In online settings like GRPO, the reference policy must reside in memory, but placing it on a separate GPU resulted in only a ~10\% slowdown.


\end{document}